\documentclass[11pt]{article}
\usepackage{amsthm}
\usepackage{tgtermes}
\usepackage{newtxtext}
\usepackage{newtxmath}
\usepackage{bm}

\usepackage[T1]{fontenc}
\usepackage[letterpaper,margin=1in]{geometry}
\usepackage{mathtools}
\usepackage{graphicx,booktabs,array,tabularx}
\usepackage{longtable,etoolbox}
\usepackage{caption}
\DeclareCaptionFont{benchmarkcaption}{\fontsize{10}{12}\selectfont}
\usepackage{float}
\usepackage{placeins}
\usepackage{tikz}
\usetikzlibrary{arrows.meta,positioning,calc}
\usepackage[round,authoryear]{natbib}
\usepackage{microtype}
\usepackage[colorlinks=true,linkcolor=blue!45!black,citecolor=blue!45!black,urlcolor=blue!45!black]{hyperref}

\AtBeginEnvironment{table}{\TableSpaced}
\AtBeginEnvironment{figure}{\TableSpaced}
\AtBeginEnvironment{longtable}{\TableSpaced}
\floatstyle{ruled}
\newfloat{algorithm}{tbp}{loa}
\floatname{algorithm}{Algorithm}
\newcommand{\B}{\mathcal B}
\newcommand{\Lset}{\mathcal L}
\newcommand{\F}{\mathcal F}
\newcommand{\K}{\mathcal K}
\newcommand{\Q}{\mathcal Q}
\newcommand{\Tset}{\mathcal T}
\newcommand{\Om}{\Omega}
\newcommand{\splitact}{\operatorname{split}}
\newcommand{\predictact}{\operatorname{predict}}
\newcommand{\conv}{\operatorname{conv}}
\newcommand{\OPT}{\mathrm{OPT}}
\newcommand{\LB}{\mathrm{LB}}
\newcommand{\UB}{\mathrm{UB}}

\newcommand{\papertitle}{An Exact Junction-Tree Extended Formulation for Optimal Classification Trees}

\newcommand{\TableSpaced}{}
\newcommand{\OneAndAHalfSpacedXI}{}
\newcommand{\EquationsNumberedThrough}{}
\newcommand{\TheoremsNumberedThrough}{}
\newcommand{\ECRepeatTheorems}{}
\newcommand{\MANUSCRIPTNO}[1]{}
\newcommand{\RUNTITLE}[1]{}
\newcommand{\RUNAUTHOR}[1]{}
\newcommand{\TITLE}[1]{\title{#1}}
\newcommand{\ARTICLEAUTHORS}[1]{\author{%
Jiancheng Tu\\
\small Department of Computing, The Hong Kong Polytechnic University\\
\small \texttt{jiancheng.tu@connect.polyu.hk}
\and
Wenqi Fan\\
\small Department of Computing, The Hong Kong Polytechnic University\\
\small Department of Management and Marketing, The Hong Kong Polytechnic University\\
\small \texttt{wenqi.fan@polyu.edu.hk}
}\date{}}
\newcommand{\ABSTRACT}[1]{\gdef\paperabstract{#1}}
\newcommand{\KEYWORDS}[1]{\gdef\paperkeywords{#1}}
\newcommand{\ECSwitch}{}
\newcommand{\ECHead}[1]{\clearpage\section*{#1}\setcounter{section}{0}\renewcommand{\thesection}{EC.\arabic{section}}}
\newtheorem{theorem}{Theorem}

\newtheorem{proposition}{Proposition}
\newtheorem{corollary}{Corollary}
\theoremstyle{definition}
\newtheorem{definition}{Definition}
\theoremstyle{remark}
\newtheorem{remark}{Remark}
\OneAndAHalfSpacedXI
\EquationsNumberedThrough
\TheoremsNumberedThrough
\ECRepeatTheorems
\MANUSCRIPTNO{}
\begin{document}
\RUNTITLE{Junction-Tree Formulation for Optimal Classification Trees}
\RUNAUTHOR{Anonymous authors}
\TITLE{\papertitle}
\ARTICLEAUTHORS{}

% ===== BEGIN inlined file: sections/abstract =====
\ABSTRACT{%
{
We develop an exact linear programming (LP) formulation for bounded-depth
classification trees with binary features, using a junction-tree representation.
The formulation is integral and supports recursive subtree optimization.
Exact reductions make the model smaller while preserving the optimal value
and recovery of an optimal tree. The reduced model supports two solution
methods: column generation and message passing. Column generation solves
integral restricted LPs and uses bounds over the full feasible domain to
certify optimality. Message passing recursively combines optimal subtree
costs. Both methods solve common subtree problems that, once the preceding
tree decisions are fixed, can be evaluated independently and in parallel.
Computational
experiments show that the exact reductions substantially reduce the size of
the junction-tree formulation. The resulting linear programming formulation
certifies instances for which the tested mixed-integer formulation does not
establish optimality within the same computational budget, while the
column-generation and message-passing methods certify more instances and
achieve an order-of-magnitude reduction in geometric-mean runtime relative
to an existing state-of-the-art exact method for optimal classification
trees.
\par}}
\KEYWORDS{optimal classification trees; extended formulations; junction trees}
% ===== END inlined file: sections/abstract =====

\maketitle

\begin{abstract}
\paperabstract
\end{abstract}

\noindent\textbf{Keywords:} \paperkeywords

% ===== BEGIN inlined file: body =====
% ===== BEGIN inlined file: sections/introduction =====

\section{Introduction}\label{sec:intro}

Classification trees are widely used when predictive performance must be combined with interpretable decision rules. Classical methods such as classification and regression trees (CART) construct a tree greedily, choosing each split according to its immediate improvement \citep{breiman1984cart}. Because an upstream split determines the observations available to all downstream decisions, such local choices need not produce the best tree under a prescribed global objective. Optimizing the tree jointly avoids this limitation, but constructing an optimal binary decision tree is NP-hard \citep{hyafil1976}.

Exact optimization methods for classification trees must address two related issues. First, they must represent globally compatible splitting, prediction, and stopping decisions. Second, they must exploit enough structure in that representation to search or coordinate the resulting decision space efficiently and to certify optimality. Existing work approaches these issues through mathematical and constraint-based formulations, as well as through dynamic programming and exact search. For a broader review of optimization models for classification and regression trees, see \citet{carrizosa2021review}.

\subsection{Optimal Classification Trees}
\label{sec:intro:exactmethods}

Mixed-integer optimization provides a general framework for jointly choosing
splits, routing observations, and assigning predictions in classification
trees. \citet{bertsimas2017oct} formulate these decisions through an explicit
tree template with observation-to-leaf assignments, while
\citet{verwer2019blp} develop a binary linear formulation that reduces the
dependence on the number of candidate split values.
\citet{gunluk2021categorical} exploit the structure of categorical features in
an integer programming formulation. 
Optimal trees have also been
modeled through Boolean satisfiability (SAT) and maximum satisfiability (MaxSAT) encodings
\citep{narodytska2018sat,hu2020maxsat} and through constraint programming
\citep{verhaeghe2020cp}. Although these formulations offer considerable
modeling flexibility, their computational burden grows with the number of observations and tree depth.

Subsequent work has strengthened these formulations and developed decomposition
methods. \citet{aghaei2024strong} introduce a flow-based formulation with a
stronger linear programming (LP) relaxation and exploit its structure through Benders decomposition.
\citet{ales2024new,alston2026milp} develop alternative flow-, cut-based, and
compact formulations. \citet{michini2024polyhedral} study the polyhedral
structure of realizable routings for multivariate trees and derive inequalities
that strengthen the corresponding mixed-integer formulations.
For depth-two trees, \citet{organ2026rolling} derive a formulation whose LP relaxation describes the convex hull of feasible trees.

Path-based formulations provide a different representation and naturally lead
to column generation \citep{firat2020cg, patel2024cg, subramanian2023multiway}. \citet{firat2020cg} represent root-to-leaf decision paths
as columns of an integer master formulation and generate promising paths
through pricing. \citet{patel2024cg} strengthen this approach through improved
pricing, preprocessing, and additional valid inequalities.
\citet{subramanian2023multiway} develop a path-based formulation for
constrained multiway-split decision trees and use column generation to handle
the large path space.
A limitation of these
approaches is that column generation solves only the LP relaxation of the
underlying integer path formulation, so convergence does not by itself certify
optimality for the original tree problem.

Specialized dynamic-programming and exact-search methods have substantially
improved the scalability of optimal decision-tree optimization relative to
general-purpose solver-based formulations \citep{aglin2020dl85, demirovic2022murtree, vanderlinden2023}. DL8.5 \citep{aglin2020dl85},
MurTree \citep{demirovic2022murtree}, and STreeD
\citep{vanderlinden2023} exploit subtree decomposability, caching, and
problem-specific bounds to reuse equivalent subproblems. Optimal sparse decision trees (OSDT) and generalized and scalable optimal sparse decision trees (GOSDT)
\citep{hu2019osdt,lin2020gosdt} instead organize the search around sparse-tree
representations and specialized lower bounds, while Branches
\citep{chaouki2025} uses an AND/OR graph representation to structure the
search.  Their
scalability, however, remains limited by the combinatorial growth of the state
and search spaces: worst-case complexity grows exponentially with tree size and
the number of candidate binary features, so performance can deteriorate rapidly
for deeper trees or high-dimensional feature sets. Related exact methods such
as Quant-BnB \citep{mazumder2022quantbnb} and ConTree
\citep{brita2025contree} extend branch-and-bound and dynamic-programming ideas
to continuous features.

\subsection{Discussion}
\label{sec:intro:discussion}

The preceding literature offers complementary ways to exploit tree structure.
Path-based column generation searches a large path space through the LP
relaxation of an integer formulation. Recursive exact-search methods use
subtree decomposability, cached states, and bounds to avoid repeated search.
The integral shallow formulations of \citet{organ2026rolling} provide a
polyhedral starting point. We seek an exact LP representation for general
prescribed depth that also allows smaller subtrees to be optimized and
combined recursively.

Our approach draws on junction-tree theory to connect subtree optimization
with an exact LP representation \citep{wainwright2008graphical,kolman2015},
and on the relationship between dynamic programming and linear optimization
\citep{martin1990}. The same decomposition supports recursive evaluation
by message passing \citep{kschischang2001factor,wainwright2008graphical}.

For optimal classification trees, the central modeling issue is that earlier splits determine which
observations reach a subtree and which subsequent decisions are feasible.
We represent these dependencies in an LP that allows subtrees to be
optimized separately and their solutions combined into a feasible
classification tree. This provides the basis for reducing the model and
organizing the computation around subtree optimization.

\subsection{Proposed Approach and Contributions}
\label{sec:intro:contributions}

Our approach combines an exact LP representation of classification trees
with algorithms that optimize and combine smaller subtrees. We make three
contributions.

First, we give an exact LP formulation for classification trees of any
prescribed maximum depth with binary features. The model accommodates early
stopping and restrictions such as minimum leaf support. We establish a
convex-hull representation from which an optimal tree can be recovered by
solving the LP. Its variables and constraints describe tree decisions,
without introducing a separate set for each training observation. The data
enter through the costs and feasibility of the split and prediction choices.

Second, we derive exact reductions that make the formulation smaller.
They merge choices with the same effect on the rest of the tree, optimize
subtrees separately once the preceding decisions are fixed, and eliminate
variables whose contributions can be incorporated into the remaining model.
These operations preserve the optimal value and recovery of an optimal tree.
They reduce both the number of explicit choices and the coordination needed
to combine subtree decisions.

Third, we develop two exact solution methods for the reduced model.
Junction-tree column generation (JT-CG) solves a sequence of smaller LPs,
adding promising combinations of tree decisions and using bounds over all
feasible trees to certify optimality. The restricted LPs remain integral.
Junction-tree message passing (JT-MP) optimizes and combines subtrees
recursively to recover an optimal tree. Both methods solve the same subtree
problems, which can be evaluated independently and in parallel once the
preceding tree decisions are fixed. Experiments show gains in
 runtime relative to the tested mixed-integer programming
and exact-search methods.

The remainder of the paper develops the junction-tree formulation and its
polyhedral properties in Section~\ref{sec:formulation}, the exact reductions in
Section~\ref{sec:exact_reductions}, and the solution methods in
Section~\ref{sec:methods}. Section~\ref{sec:experiments} reports the
computational study, and Section~\ref{sec:conclusions} concludes.

\section{Junction-Tree Formulation}\label{sec:formulation}

We first specify the bounded-depth optimal classification tree (OCT) model and then construct a junction-tree
configuration formulation whose polyhedral properties are developed below.

\subsection{Problem Setup}\label{sec:jtform}

We consider training observations with binary features. Let
$I=\{1,\ldots,n\}$ index the observations, let $\F=\{1,\ldots,F\}$ index the
available features, and let $\K$ be the set of class labels, with $K=|\K|$. Observation
$i\in I$ has feature vector $z_i\in\{0,1\}^{F}$ and label $y_i\in\K$.
For a prescribed maximum depth $D\geq1$, we use the complete binary-tree
template
\begin{equation}\label{eq:template}
 \B=\bigcup_{h=0}^{D-1}\{0,1\}^h,
 \qquad
 \Lset=\{0,1\}^D.
\end{equation}
A binary string identifies a position in the template: $\epsilon$ denotes the
root, and appending $0$ or $1$ gives the left or right child. The sets $\B$
and $\Lset$ contain the potential internal and terminal positions,
respectively. Figure~\ref{fig:jt}(a) illustrates this notation for $D=3$.
For example, the children of position $0$ are $00$ and $01$, and the terminal
children of $00$ are $000$ and $001$.

A classification tree assigns an action $T_v$ to every position
$v\in\B\cup\Lset$. At an internal position, the action is either a split
$\splitact(f)$ on some feature $f\in\F$, a prediction $\predictact(k)$ of some
class $k\in\K$, or the inactive action $\bot$; at a terminal position, the
action is a prediction or $\bot$. The root is active, and a child is active
if and only if its parent splits. Hence a prediction terminates its branch
and makes all descendants inactive. The action $\bot$ marks positions of the
complete template that do not belong to the selected tree.

The edge labels in Figure~\ref{fig:jt}(a) describe observation routing. If
position $v$ splits on feature $f$, observation $i$ follows the edge labeled
$0$ when $z_{if}=0$ and the edge labeled $1$ when $z_{if}=1$. Thus, the
actions along a root-to-leaf path determine both the selected tree and the
observations reaching each prediction leaf. Early stopping is represented by
placing a prediction at an internal position and assigning $\bot$ to its
descendants.

% ===== BEGIN inlined file: figures/junction_tree =====
\begin{figure}[!htbp]
\centering
\begin{tikzpicture}[x=0.96cm,y=0.88cm,
  split/.style={circle,draw,minimum size=7mm,inner sep=1pt,font=\small},
  leaf/.style={rectangle,draw,minimum width=9mm,minimum height=5mm,
    inner sep=1pt,font=\scriptsize},
  route/.style={->,>=stealth,thin},
  shared/.style={draw=blue!65!black,fill=blue!8,line width=0.9pt}]

  \node[anchor=west,font=\small\bfseries] at (-0.3,0.8)
    {(a) Classification-tree template, $D=3$};

  \node[split,shared] (r) at (7,0) {$\epsilon$};
  \node[split,shared] (v0) at (3,-1.1) {$0$};
  \node[split] (v1) at (11,-1.1) {$1$};

  \node[split] (v00) at (1,-2.2) {$00$};
  \node[split] (v01) at (5,-2.2) {$01$};
  \node[split] (v10) at (9,-2.2) {$10$};
  \node[split] (v11) at (13,-2.2) {$11$};

  \node[leaf] (l000) at (0.25,-3.3) {$000$};
  \node[leaf] (l001) at (1.75,-3.3) {$001$};
  \node[leaf] (l010) at (4.25,-3.3) {$010$};
  \node[leaf] (l011) at (5.75,-3.3) {$011$};
  \node[leaf] (l100) at (8.25,-3.3) {$100$};
  \node[leaf] (l101) at (9.75,-3.3) {$101$};
  \node[leaf] (l110) at (12.25,-3.3) {$110$};
  \node[leaf] (l111) at (13.75,-3.3) {$111$};

  \draw[route,blue!65!black,line width=1pt]
    (r)--node[above,font=\scriptsize] {$0$} (v0);
  \draw[route]
    (r)--node[above,font=\scriptsize] {$1$} (v1);

  \draw[route,blue!65!black,line width=1pt]
    (v0)--node[above,font=\scriptsize] {$0$} (v00);
  \draw[route,blue!65!black,dashed,line width=1pt]
    (v0)--node[above,font=\scriptsize] {$1$} (v01);

  \draw[route]
    (v1)--node[above,font=\scriptsize] {$0$} (v10);
  \draw[route]
    (v1)--node[above,font=\scriptsize] {$1$} (v11);

  \foreach \p/\l/\rr in
    {v00/l000/l001,v01/l010/l011,v10/l100/l101,v11/l110/l111}{
    \draw[route] (\p)--node[left,font=\scriptsize] {$0$} (\l);
    \draw[route] (\p)--node[right,font=\scriptsize] {$1$} (\rr);
  }

  \node[font=\scriptsize,align=center] at (7,-4.0)
    {The highlighted paths are associated with
     clusters $C_{00}$ and $C_{01}$.};
\end{tikzpicture}

\medskip

\begin{tikzpicture}[x=1cm,y=1cm,
  cluster/.style={rectangle,rounded corners=2pt,draw,align=center,
    minimum width=3.25cm,minimum height=1.25cm,
    inner sep=3pt,font=\scriptsize}]

  \node[anchor=west,font=\small\bfseries] at (-1.43,1.41)
    {(b) Junction-tree representation};

  \node[cluster,fill=blue!5] (c00) at (0,0)
    {$C_{00}$\\$\{\epsilon,0,00,000,001\}$};
  \node[cluster,fill=blue!5] (c01) at (4,0)
    {$C_{01}$\\$\{\epsilon,0,01,010,011\}$};
  \node[cluster] (c10) at (8,0)
    {$C_{10}$\\$\{\epsilon,1,10,100,101\}$};
  \node[cluster] (c11) at (12,0)
    {$C_{11}$\\$\{\epsilon,1,11,110,111\}$};

  \draw[thick] (c00)--(c01);
  \draw[thick] (c01)--(c10);
  \draw[thick] (c10)--(c11);

  \node[font=\scriptsize,fill=white,inner sep=1pt] at (2,0.955)
    {$S=\{\epsilon,0\}$};
  \node[font=\scriptsize,fill=white,inner sep=1pt] at (6,0.955)
    {$S=\{\epsilon\}$};
  \node[font=\scriptsize,fill=white,inner sep=1pt] at (10,0.955)
    {$S=\{\epsilon,1\}$};
\end{tikzpicture}

\caption{Depth-three classification-tree template and its junction-tree
representation. Panel~(a) shows the binary-string position labels and the
$0$--$1$ routing convention. Panel~(b) groups each sibling terminal pair
with its complete ancestor path. The intersections displayed above the
edges are the separators of the junction tree.}
\label{fig:jt}
\end{figure}
% ===== END inlined file: figures/junction_tree =====

Let $B(T)$ denote the split nodes of $T$ and $L(T)$ its prediction leaves.
At a split node $v$, let $f_v$ denote the selected feature. Each observation
follows the induced root-to-leaf path, and $T(z_i)$ denotes its predicted
class. We consider the objective
\begin{equation}\label{eq:objective}
 J(T)=\frac{1}{n}\sum_{i\in I}\mathbf 1\{T(z_i)\neq y_i\}
 +\sum_{v\in B(T)}\lambda_{v,f_v},
 \qquad \lambda_{v,f}\geq0.
\end{equation}
The first term is the training misclassification rate and the second
penalizes split decisions. A uniform split penalty gives $\lambda|B(T)|$, as
in standard OCT regularization
\citep{bertsimas2017oct,aghaei2024strong}. When $\lambda=0$, minimizing
\eqref{eq:objective} is equivalent to maximizing training accuracy.

Let $\Tset$ denote the set of admissible trees satisfying the structural
rules above and any prescribed path or leaf requirements. For each position
$v$, let $A_v$ denote its strict ancestors. We assume that feasibility is
determined by the parent--child rules and by predicates depending only on
$(T_u:u\in A_v\cup\{v\})$ and the fixed training data. This setting includes
path restrictions such as prohibiting repeated features and leaf requirements
such as minimum support. Constraints coupling decisions in different branches
require an augmented state representation. 
The optimal classification tree problem is
\begin{equation}\label{eq:oct}
 \OPT_D=\min\{J(T):T\in\Tset\}.
\end{equation}

\subsection{Junction-Tree Construction}\label{sec:jtconstruction}

We now group overlapping subsets of the template into the junction-tree
clusters illustrated in Figure~\ref{fig:jt}(b).

\begin{definition}[Junction tree; \citealp{wainwright2008graphical}]
\label{def:junctiontree}
Let $\{C_q:q\in\mathcal Q\}$ be a family of sets, called clusters. A tree
$\mathcal J=(\mathcal Q,E)$ is a junction tree if, for every template
position $v$ contained in at least one cluster, the set
\[
 \{q\in\mathcal Q:v\in C_q\}
\]
induces a connected subtree of $\mathcal J$. For an edge
$e=\{q,r\}\in E$, the intersection
\[
 S_e=C_q\cap C_r
\]
is its separator.
\end{definition}

The connectedness condition is the running-intersection property: if two
clusters contain the same template position, every cluster on the path
between them in $\mathcal J$ also contains that position.

Let
\[
 \Q=\{0,1\}^{D-1}
\]
be the set of parents of the terminal positions. For each $q\in\Q$, define
\begin{equation}\label{eq:clusterdef}
 C_q=
 \{q0,q1\}
 \cup
 \{u:u\text{ is a strict ancestor of }q\}
 \cup
 \{q\}.
\end{equation}
Thus, $C_q$ contains the sibling terminal positions $q0$ and $q1$ together
with their complete ancestor paths. The cluster family is fixed by the
complete template and does not depend on the selected tree. If a branch
terminates before depth $D$, positions below the prediction remain in the
cluster but receive the inactive action $\bot$.

A convenient junction tree for this cluster family is obtained by ordering
$\Q$ lexicographically and connecting consecutive clusters. We denote the
resulting chain by $\mathcal J$. Its edges describe overlaps between
clusters; they are not parent--child edges of the classification tree.

For $D=3$,
\[
 \Q=\{00,01,10,11\},
 \qquad
 C_{00}-C_{01}-C_{10}-C_{11}.
\]
For example,
\[
 C_{00}=\{\epsilon,0,00,000,001\},
 \qquad
 C_{01}=\{\epsilon,0,01,010,011\},
\]
so
\[
 C_{00}\cap C_{01}=\{\epsilon,0\}.
\]
Similarly,
\[
 C_{01}\cap C_{10}=\{\epsilon\},
 \qquad
 C_{10}\cap C_{11}=\{\epsilon,1\},
\]
as shown in Figure~\ref{fig:jt}(b).

\begin{proposition}[Validity of the cluster chain]
\label{prop:jtconstruction}
The chain $\mathcal J$ constructed above satisfies the running-intersection
property and hence is a junction tree for the cluster family
$\{C_q:q\in\Q\}$.
\end{proposition}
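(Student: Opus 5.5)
The plan is to show directly that for every template position $v$, the set $\Q_v=\{q\in\Q: v\in C_q\}$ is a contiguous interval in the lexicographic order of $\Q$. In a chain, the connected subsets are exactly the intervals of consecutive clusters. Since $\mathcal J$ is a path on $|\Q|$ vertices, it is a tree, so this is all the running-intersection property requires.

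The first step is to characterize $\Q_v$ by the position's depth $|v|$. If $|v|=D$, then $v=qb$ for a unique $q\in\Q$ and $b\in\{0,1\}$, so $\Q_v=\{q\}$, a single cluster; this is trivially connected. If $|v|\le D-1$, then $v\in C_q$ exactly when $v$ is an ancestor of $q$ or equal to $q$, that is, when $v$ is a prefix of $q$. Terminal positions $q0,q1$ have length $D$ and cannot coincide with such $v$. Hence
\[
 \Q_v=\{q\in\{0,1\}^{D-1}: q \text{ has prefix } v\}=\{v w: w\in\{0,1\}^{D-1-|v|}\}.
\]

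The second step is to show that this prefix class is an interval in lexicographic order on strings of the fixed length $D-1$. Suppose $q<q'<q''$ with $q,q''$ both having prefix $v$. If $q'$ did not have prefix $v$, it would first differ from $v$ at some index $j\le |v|$. If $q'_j<v_j$, then $q'<q$. If $q'_j>v_j$, then $q'>q''$. Either case is a contradiction. Equivalently, reading strings as binary integers, $\Q_v$ is the integer range $[\,v\cdot 2^{D-1-|v|},\,(v+1)\cdot 2^{D-1-|v|}-1\,]$. So $\Q_v$ is a set of consecutive vertices of the chain and induces a connected subpath.

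The final step is to note that every position of $\B\cup\Lset$ lies in some cluster, so the property covers all template positions. One edge case needs care: for $D=1$, $\Q=\{\epsilon\}$ and there is a single cluster, so the claim is trivial. The only real obstacle is making the lexicographic-interval argument precise, and the binary-integer reformulation handles it cleanly.
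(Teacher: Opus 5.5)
Your proposal is correct and follows essentially the same route as the paper: the clusters containing an internal position $v$ are exactly those indexed by strings with prefix $v$, these form a consecutive lexicographic block and hence a subpath of the chain, and each terminal position lies in a single cluster. Your explicit interval argument and the binary-integer range $[v\cdot 2^{D-1-|v|},(v+1)\cdot 2^{D-1-|v|}-1]$ spell out in detail the step that the paper states in one sentence.
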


For any internal position, the clusters containing it form a consecutive
lexicographic interval; terminal positions each occur in a single cluster.
This establishes running intersection, as detailed in Electronic
Companion~\ref{ec:chainproofsection}. We can now specify the decisions
carried by each cluster.

\begin{definition}[Local configuration]\label{def:configuration}
A local configuration on cluster $C_q$ assigns an action to every position
in $C_q$. Internal positions may split, predict, or be inactive, whereas the
two terminal positions $q0$ and $q1$ may predict or be inactive. A
configuration is admissible if its actions satisfy the structural rules and
the ancestor-local feasibility requirements at all positions in $C_q$. The
finite set of admissible configurations for cluster $q$ is denoted by
$\Om_q$.
\end{definition}

Because $C_q$ contains the complete ancestor paths of its terminal positions,
all ancestor-local requirements within the cluster can be evaluated from the
configuration itself. For example, a configuration in $\Om_{00}$ may split
at $\epsilon$, $0$, and $00$ and predict at $000$ and $001$. A prediction
at $00$ makes $000$ and $001$ inactive, while a prediction at $0$ makes
$00$, $000$, and $001$ inactive.

For an edge $e=\{q,r\}\in E$, configurations on $C_q$ and $C_r$ are
consistent if they assign the same action to every position in $S_e$. Thus,
in the depth-three example, configurations on $C_{00}$ and $C_{01}$ must
agree jointly on the actions at $\{\epsilon,0\}$. These separator agreements
become the coupling constraints of the formulation below.

\subsection{Configuration Formulation}\label{sec:jtmodel}

The formulation selects one local configuration for each cluster and
coordinates neighboring clusters through their separator assignments.
Training loss and split penalties enter through local configuration costs.

A template position can appear in several clusters. Let
\[
 m_v=\left|\{q\in\Q:v\in C_q\}\right|
\]
be the number of clusters containing position $v$. For the construction
above,
\[
 m_v=2^{D-1-|v|}
\]
for an internal position $v$, and $m_v=1$ for a terminal position, where
$|v|$ denotes the depth of $v$. We divide the cost incurred at position $v$
equally among these $m_v$ clusters.
Because each cluster contains the complete ancestor path of every position it
contains, a configuration $\omega\in\Om_q$ determines which observations
reach each position in $C_q$. Define
\begin{equation}\label{eq:positioncost}
 \ell_v(\omega)=
 \begin{cases}
  \lambda_{v,f}, & \omega_v=\splitact(f),\\[1mm]
  \displaystyle
  \frac{1}{n}\sum_{\substack{i\in I:\\
          i\text{ reaches }v\text{ under }\omega}}
          \mathbf 1\{y_i\neq k\},
      & \omega_v=\predictact(k),\\[3mm]
  0, & \omega_v=\bot.
 \end{cases}
\end{equation}
The local configuration cost is
\begin{equation}\label{eq:localcost}
 c_q(\omega)=\sum_{v\in C_q}\frac{\ell_v(\omega)}{m_v},
 \qquad q\in\Q,\quad\omega\in\Om_q.
\end{equation}
For the depth-three example, the root cost is divided among four clusters,
whereas the cost at position $0$ is divided between $C_{00}$ and $C_{01}$.
A terminal prediction cost is assigned to its unique cluster.

\begin{proposition}[Cost decomposition]\label{prop:costdecomp}
For every $T\in\Tset$,
\begin{equation}\label{eq:costidentity}
 J(T)=\sum_{q\in\Q}c_q(T|_{C_q}).
\end{equation}
\end{proposition}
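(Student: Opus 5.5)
The plan is to rewrite $J(T)$ as a sum of position-wise costs over the complete template, and then reorganize that sum cluster by cluster using the multiplicities $m_v$. We fix $T\in\Tset$ and use the same position cost $\ell_v$ from \eqref{eq:positioncost}, now evaluated on $T$ itself. This is well defined: whether an observation reaches $v$ depends only on the actions at the ancestors $A_v$, and every cluster containing $v$ contains $A_v$. Consequently, for every $q$ with $v\in C_q$ we have $\ell_v(T|_{C_q})=\ell_v(T)$. This locality fact is the first step to record.

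The second step is the identity
\[
 J(T)=\sum_{v\in\B\cup\Lset}\ell_v(T).
\]
Split positions contribute exactly $\sum_{v\in B(T)}\lambda_{v,f_v}$, and inactive positions contribute $0$. For the prediction positions we argue as follows. Since the root is active, and a child is active if and only if its parent splits, each observation $i$ follows a unique root-to-leaf path in $T$ ending at a prediction leaf $v\in L(T)$. At that leaf $T(z_i)=k$, where $T_v=\predictact(k)$, and observation $i$ reaches no other prediction position. Hence
\[
 \sum_{v\in L(T)}\ell_v(T)=\frac1n\sum_{i\in I}\mathbf 1\{T(z_i)\neq y_i\}.
\]
Here "reaches $v$ under $\omega$" is read as routing through splits that are active on the path. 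Positions below a prediction are $\bot$, so they contribute nothing.

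The third step is a double-counting exchange of summation:
\[
 \sum_{q\in\Q}c_q(T|_{C_q})=\sum_{q\in\Q}\sum_{v\in C_q}\frac{\ell_v(T|_{C_q})}{m_v}
 =\sum_{v}\frac{\ell_v(T)}{m_v}\,\bigl|\{q:v\in C_q\}\bigr|=\sum_v \ell_v(T).
\]
This uses the locality fact from the first step and the definition of $m_v$. We also need every template position to lie in at least one cluster, so that $m_v\ge1$ and no position is dropped. This holds because every $v\in\B\cup\Lset$ is an ancestor of, equal to, or a child of some $q\in\Q=\{0,1\}^{D-1}$.

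The main obstacle is the second step, specifically making precise that each observation is charged exactly once, including under early stopping. I would handle it by induction along the observation's routing path: the path stays within active positions, and it terminates at the first prediction it meets, which exists because terminal positions cannot split. Finally, $T|_{C_q}\in\Om_q$ follows directly from Definition~\ref{def:configuration}, since the restriction of $T$ satisfies the structural and ancestor-local rules.
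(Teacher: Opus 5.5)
Your proposal is correct and follows essentially the same route as the paper. The paper's proof also uses that $\ell_v(T|_{C_q})$ is identical in every cluster containing $v$, because each such cluster contains $v$'s complete ancestor path, and then regroups the cluster sum position by position so that each cost is counted $m_v$ times with weight $1/m_v$; your extra checks (each observation reaches exactly one prediction leaf, every position lies in some cluster, and $T|_{C_q}\in\Om_q$) make explicit points the paper leaves implicit.
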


The identity follows because the cost at each position $v$ is divided among
exactly the $m_v$ clusters containing that position. A proof is given in
Electronic Companion~\ref{ec:hull}.

For every $q\in\Q$ and $\omega\in\Om_q$, let $x_{q\omega}=1$ if
configuration $\omega$ is selected in cluster $q$, and let it be zero
otherwise. For an edge $e=\{q,r\}\in E$, let $\Sigma_e$ be the set of
separator assignments occurring in a configuration of either endpoint, and
write $\omega|_{S_e}$ for the restriction of $\omega$ to $S_e$. The
junction-tree integer  programming formulation is
\begin{subequations}\label{eq:jtip}
\begin{align}
 \min\quad
 &\sum_{q\in\Q}\sum_{\omega\in\Om_q}c_q(\omega)x_{q\omega},
 \label{eq:jtipobj}\\
 \text{s.t.}\quad
 &\sum_{\omega\in\Om_q}x_{q\omega}=1
 && q\in\Q,
 \label{eq:jtnorm}\\
 &\sum_{\substack{\omega\in\Om_q:\\ \omega|_{S_e}=\sigma}}x_{q\omega}
 -\sum_{\substack{\omega\in\Om_r:\\ \omega|_{S_e}=\sigma}}x_{r\omega}=0
 && e=\{q,r\}\in E,\ \sigma\in\Sigma_e,
 \label{eq:jtsep}\\
 &x_{q\omega}\in\{0,1\}
 && q\in\Q,\ \omega\in\Om_q.
 \label{eq:jtbinary}
\end{align}
\end{subequations}
Constraint~\eqref{eq:jtnorm} selects one configuration in each cluster.
Constraint~\eqref{eq:jtsep} matches complete separator assignments between
neighboring clusters.
For a feasible tree $T\in\Tset$, define its configuration-selection vector
by
\begin{equation}\label{eq:incidence}
 x^T_{q\omega}=\mathbf 1\{T|_{C_q}=\omega\}.
\end{equation}
We refer to \eqref{eq:jtip} as the junction-tree integer program (JT-IP). Its LP relaxation, the junction-tree LP (JT-LP), replaces
\eqref{eq:jtbinary} with $x_{q\omega}\geq0$; let $P$ denote the resulting
feasible region.

\subsection{Polyhedral Properties}\label{sec:polyhedral}

JT-LP is a configuration formulation on a valid junction tree. Classical
junction-tree consistency \citep[Proposition~2.1]{wainwright2008graphical}
and bounded-treewidth configuration results
\citep[Lemma~2.1 and Theorem~1.1]{kolman2015} provide the generic
integrality mechanism. The OCT construction above specifies the local
actions, clusters, separator assignments, and costs to which these results
are applied. The remaining question is whether consistency of these local
actions enforces all OCT requirements. Complete ancestor paths ensure that
each routing and feasibility condition is checked with the same information
in every cluster containing it. The next theorem combines this correspondence
with junction-tree consistency; Proposition~\ref{prop:costdecomp} then transfers
the polyhedral result to the OCT objective.

\begin{theorem}[Convex-hull representation]\label{thm:integrality}
Under the ancestor-local feasibility assumptions of
Section~\ref{sec:jtform}, separator-consistent local configurations are in
one-to-one correspondence with the feasible template trees in $\Tset$.
Consequently,
\begin{equation}\label{eq:hull}
 P=\conv\{x^T:T\in\Tset\}.
\end{equation}
Hence every extreme point of JT-LP is integral. If
$\Tset\neq\varnothing$, JT-IP and JT-LP have optimal value $\OPT_D$, and an
optimal classification tree can be recovered from an optimal JT-LP
solution.
\end{theorem}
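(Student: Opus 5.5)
The proof splits into a combinatorial correspondence and a polyhedral step.

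\emph{Correspondence.} I take a family $(\omega_q)_{q\in\Q}$ with $\omega_q\in\Om_q$ and pairwise agreement on every separator $S_e$ of the chain $\mathcal J$. By Proposition~\ref{prop:jtconstruction}, the clusters containing a position $v$ form a connected interval of the chain. Agreement along consecutive edges of that interval therefore propagates, so all clusters containing $v$ assign $v$ the same action. This defines a single global assignment $T$ on $\B\cup\Lset$, because every position lies in some cluster.

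Next I check $T\in\Tset$:
\begin{itemize}
\item Every parent--child rule involves a position and its parent. Both lie in a common cluster, namely any $C_q$ whose terminal pair descends from the child.
\item Every ancestor-local predicate at $v$ depends only on $A_v\cup\{v\}$. Again this set lies inside any cluster containing $v$.
\end{itemize}
Since each $\omega_q$ is admissible, all these rules hold for $T$. Conversely, $T\mapsto(T|_{C_q})_q$ yields admissible, separator-consistent families. The two maps are mutually inverse, so the correspondence is a bijection. For a feasible tree, the integer points of $P$ are then exactly the vectors $x^T$.

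\emph{Polyhedral step.} I will show $P\subseteq\conv\{x^T\}$; the reverse inclusion is immediate from the correspondence. I would not merely cite \citet{kolman2015}, but give a direct decomposition exploiting the chain structure. Given $x\in P$, define a probability distribution on global assignments by the usual junction-tree product formula
\[
 \mu(T)=\frac{\prod_{q}x_{q,T|_{C_q}}}{\prod_{e}\mu_e(T|_{S_e})},
\]
where $\mu_e(\sigma)$ is the common separator marginal guaranteed by \eqref{eq:jtsep}. Terms with zero denominator are set to zero.

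Equivalently, $\mu$ is built sequentially along the chain $q_1,q_2,\dots$. I sample $\omega_{q_1}$ with probability $x_{q_1\omega}$. Then, for each next cluster, I sample $\omega_{q_{j+1}}$ conditionally on the separator value, with probability $x_{q_{j+1}\omega}/\mu_e(\sigma)$. By \eqref{eq:jtnorm} and \eqref{eq:jtsep}, these are valid conditional distributions.

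An induction along the chain shows two things. First, the marginal of cluster $q_j$ equals $x_{q_j\cdot}$. Second, the sampled family is separator-consistent, so by the correspondence it is a tree $T\in\Tset$. The one point needing care is that consistency on edges implies global consistency. This holds because the separator $S_e$ between consecutive clusters contains all positions shared with any \emph{earlier} cluster: running intersection gives $C_{q_{j+1}}\cap(\bigcup_{i\le j}C_{q_i})=S_{\{q_j,q_{j+1}\}}$.

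Hence $x=\sum_T\mu(T)x^T$, which gives \eqref{eq:hull}. The main obstacle is exactly this sequential construction and the induction showing the marginals are reproduced. Everything else is bookkeeping.

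\emph{Consequences.} Extreme points of $\conv\{x^T\}$ are among the $x^T$, hence integral. The objective of JT-LP at $x^T$ equals $J(T)$ by Proposition~\ref{prop:costdecomp}. A linear function over a polytope attains its minimum at a vertex, so if $\Tset\neq\varnothing$ both JT-LP and JT-IP have value $\OPT_D$.

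An optimal tree is recovered as follows. If the LP solver returns a vertex, read off $T$ from the selected configurations. Otherwise, any $T$ in the support of $\mu$ is optimal, since $\mu$ averages to the optimal value and each $J(T)\ge\OPT_D$. Such a $T$ can be obtained by running the sequential construction along the chain with positive-probability choices.
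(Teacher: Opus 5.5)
Your proof is correct. The correspondence step and the consequences (objective via Proposition~\ref{prop:costdecomp}, optimality of every tree in the support, recovery by a positive-weight traversal) match the paper's. The inclusion $P\subseteq\conv\{x^T:T\in\Tset\}$, however, is obtained differently.

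The paper \emph{peels}:
\begin{enumerate}
\item It roots the junction tree and traverses it, picking positive-weight configurations, to get one tree $T$ whose selected coordinates are all positive.
\item It sets $\delta$ to the smallest of these weights and observes $(x-\delta x^T)/(1-\delta)\in P$.
\item It iterates. Each step zeros at least one coordinate and creates none, so the process terminates with a convex combination using at most one tree per positive coordinate of $x$.
\end{enumerate}

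You instead write down the classical junction-tree factorization and realize it by conditional sampling along the chain. What this buys you is explicit weights, a marginal identity that is a one-line induction, and no need for a termination argument.

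What the peeling buys is sparsity. Your $\mu$ charges every separator-consistent combination of positive-weight configurations, so its support can grow exponentially in the number of clusters. The paper's decomposition stays small, and its proof is literally an iteration of the recovery traversal it later reuses for restricted masters.

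One small redundancy: the identity $C_{q_{j+1}}\cap\bigcup_{i\le j}C_{q_i}=S_{\{q_j,q_{j+1}\}}$ is not needed. Your correspondence lemma already upgrades edgewise agreement to global consistency via running intersection, and the sampled family is edge-consistent by construction.
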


Electronic Companion~\ref{ec:hull} proves the configuration--tree
correspondence and \eqref{eq:hull}. The constructive proof also gives a
procedure for recovering an optimal tree from an optimal LP solution.
Fixing excluded nonnegative configuration variables to zero defines a face
of $P$. This observation matters when the full configuration domain is too
large to enumerate: retaining a compatible subset of configurations preserves
integrality even though it can exclude the globally optimal tree. The next
corollary establishes the feasible-tree interpretation of the restricted
masters used by column generation.

\begin{remark}[Shallow configuration models]\label{rem:relatedoct}
For $D=2$, the formulation reduces to the configuration structure of
\citet{organ2026rolling} under matched feature, loss, and feasibility
assumptions. Under the corresponding depth-three assumptions, the four-group
formulation in Appendix~A of the 2023 preprint of \citet{organ2026rolling} has the same
normalization and joint-ancestor consistency structure. Electronic
Companion~\ref{ec:hull} gives the detailed variable mapping. The construction
above extends this configuration principle to arbitrary prescribed depth
with early stopping and ancestor-local feasibility.
\end{remark}

\begin{corollary}[Restricted-master integrality]\label{cor:rmp}
For arbitrary subsets $\widehat\Om_q\subseteq\Om_q$, restrict JT-LP to the
variables associated with $\widehat\Om_q$ and retain every separator equation
\eqref{eq:jtsep} whose assignment occurs in a retained configuration at
either endpoint. After extending each restricted vector by zeros on all
excluded coordinates, the resulting feasible region is either empty or
\[
 \conv\left\{
 x^T:T\in\Tset,\;
 T|_{C_q}\in\widehat\Om_q\text{ for every }q\in\Q
 \right\}.
\]
Consequently, every nonempty restricted master has only integral extreme
points and an integral optimum for every linear objective.
\end{corollary}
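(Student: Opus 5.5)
The plan is to deduce the corollary from Theorem~\ref{thm:integrality} by identifying the restricted feasible region with a face of $P$.

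Let $\widehat P$ be the restricted polyhedron, with every vector extended by zeros. First, I will show that $\widehat P = P\cap\{x: x_{q\omega}=0 \text{ for } \omega\notin\widehat\Om_q\}$. The inclusion ``$\supseteq$'' is immediate: a point of this intersection satisfies every normalization and separator equation, and in particular the retained ones. The inclusion ``$\subseteq$'' is the only point needing care, because some separator equations \eqref{eq:jtsep} were dropped. A dropped equation has an assignment $\sigma$ that occurs in no retained configuration at either endpoint. For the zero-extended vector, both sums in that equation consist only of excluded coordinates, so both sides equal $0$ and the equation holds automatically. Normalization and nonnegativity are retained directly. Hence the zero extension of $\widehat P$ lies in $P$ and satisfies $x_{q\omega}=0$ on the excluded coordinates.

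Second, the set $G=P\cap\{x_{q\omega}=0,\ \omega\notin\widehat\Om_q\}$ is a face of $P$. Indeed, $\sum_{q}\sum_{\omega\notin\widehat\Om_q}x_{q\omega}\ge 0$ is valid for $P$, and $G$ is exactly the set where this inequality is tight. By Theorem~\ref{thm:integrality}, $P=\conv\{x^T:T\in\Tset\}$, a polytope. A face of the convex hull of a finite set is the convex hull of the points of that set lying on the face. So $G=\conv\{x^T : x^T\in G\}$.

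Third, I will identify which incidence vectors lie in $G$. Each $x^T$ has exactly one unit entry per cluster, at $\omega=T|_{C_q}$. Therefore $x^T\in G$ if and only if $T|_{C_q}\in\widehat\Om_q$ for every $q$. This gives the stated convex-hull description whenever $G\neq\varnothing$, and $G$ is empty exactly when no such $T$ exists. The extreme points of a nonempty $\widehat P$ are extreme points of $G$, hence vertices $x^T$, which are $0/1$ vectors. Since $\widehat P$ is a nonempty polytope, every linear objective attains its minimum at an extreme point, so an integral optimum exists.

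The main obstacle is the first step, namely checking that dropping separator equations does not enlarge the region. The argument depends on the specific retention rule: an equation is dropped only when its assignment appears in no retained configuration at either endpoint, which is exactly what makes both sides of that equation vanish identically on the restricted coordinates.
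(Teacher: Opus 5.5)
Your proof is correct and follows essentially the same route as the paper. Both arguments identify the zero-extended restricted master with the face of $P$ where the excluded coordinates vanish, noting that the dropped separator rows reduce to $0=0$, and then use $P=\conv\{x^T:T\in\Tset\}$ to show this face is the convex hull of incidence vectors of trees that use only retained configurations. The only cosmetic difference is that you cite the general fact that a face of the convex hull of a finite set is the hull of the points it contains, whereas the paper reads this off its explicit decomposition \eqref{eq:ecdecomposition}.
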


A feasible restricted master therefore yields a feasible classification tree
and an upper bound on $\OPT_D$. Certification for the full problem requires
the pricing or lower-bound tests developed in Section~\ref{sec:cg}.

\begin{remark}[Extensions of the model]
\label{rem:linearobjectives}
Any objective that is additive in fixed prediction and split costs changes
only $c_q(\omega)$ and leaves the feasible region and
Theorem~\ref{thm:integrality} unchanged. For example, balanced
misclassification loss is obtained by replacing the weight $1/n$ in
\eqref{eq:positioncost} with $1/(|\K|n_{y_i})$, where $n_k>0$ is the number
of observations in class $k$.
The construction also extends to multiway classification trees on a fixed
bounded-depth template. Each split rule specifies a partition of the
observations among its branches. When costs remain additive and feasibility
is determined by a decision and its ancestors, the same consistency and
tree-recovery arguments apply after adapting the local configurations to
the multiway template. The formulation sizes depend on the branching
structure.
\end{remark}

Each cluster contains $D+2$ template positions, so the induced tree
decomposition has width at most $D+1$. With explicit prediction labels, no
feasibility filtering, and early stopping allowed, the unfiltered number of
configurations per cluster is
$
 F^D K^2+K\sum_{j=0}^{D-1}F^j$.
Since there are $2^{D-1}$ clusters, the corresponding total number of
configuration variables is
$
 2^{D-1}
 \left(
 F^D K^2+K\sum_{j=0}^{D-1}F^j
 \right)$.

For comparison, consider the full-depth split-only representation in which
every internal position splits and terminal predictions are optimized
independently rather than indexed as configuration actions. In this case the
displayed formulation has
\begin{equation}\label{eq:sizesplit}
 \underbrace{2^{D-1}F^D}_{\text{columns}},
 \qquad
 \underbrace{2^{D-1}+\sum_{h=1}^{D-1}2^{h-1}F^h}_{\text{equalities}},
\end{equation}
before redundant rows are removed. The master contains no
observation-indexed variables or constraints, although its local costs and
feasibility filters depend on the training data. The configuration space nevertheless grows
exponentially with the prescribed depth. This growth motivates the exact
reductions developed in Section~\ref{sec:exact_reductions}.

\section{Exact Reductions}\label{sec:exact_reductions}

All three reductions use the same principle: once the assignments on the
relevant separators are fixed, decisions that are not visible outside those
separators can be optimized locally. Separator-signature compression applies
this principle within a cluster, subtree contraction applies it to a block of
clusters, and endpoint elimination removes a leaf cluster after transferring
its conditional cost to its neighbor. For the fixed additive objective
\eqref{eq:objective}, each operation preserves the optimal value and retains
enough information to recover an optimal tree. Proofs are given in Electronic
Companion~\ref{ec:exactreductions}.

\subsection{Signature Compression}\label{sec:signature_compression}

For a cluster $q\in\Q$, let $N_{\mathcal J}(q)$ denote its neighbors in
$\mathcal J$. The separator signature of a configuration
$\omega\in\Om_q$ is
\begin{equation}\label{eq:separator_signature}
 \gamma_q(\omega)=
 \bigl(\omega|_{S_e}:e=\{q,r\}\in E,\ r\in N_{\mathcal J}(q)\bigr),
\end{equation}
and let
\[
 \Gamma_q=\{\gamma_q(\omega):\omega\in\Om_q\}
\]
be the signatures induced by admissible configurations. For
$\eta\in\Gamma_q$, define
\begin{equation}\label{eq:signature_compressed_cost}
 \bar c_q(\eta)=
 \min\{c_q(\omega):
       \omega\in\Om_q,\ \gamma_q(\omega)=\eta\}.
\end{equation}
For tree recovery, one minimizing configuration is stored for each signature.
Since admissibility is already encoded in $\Om_q$, this minimization removes
only distinctions that are invisible on every incident separator.

Introduce one variable $y_{q\eta}$ for each signature. The compressed LP has
the same normalization and separator-consistency structure as JT-LP:
\begin{subequations}\label{eq:signature_compressed_lp}
\begin{align}
 \min\quad
 &\sum_{q\in\Q}\sum_{\eta\in\Gamma_q}\bar c_q(\eta)y_{q\eta},
 \label{eq:signature_compressed_obj}\\
 \text{s.t.}\quad
 &\sum_{\eta\in\Gamma_q}y_{q\eta}=1
 &&q\in\Q,
 \label{eq:signature_compressed_norm}\\
 &\sum_{\substack{\eta\in\Gamma_q:\eta_e=\sigma}}y_{q\eta}
 -\sum_{\substack{\eta'\in\Gamma_r:\eta'_e=\sigma}}y_{r\eta'}=0
 &&e=\{q,r\}\in E,\ \sigma\in\Sigma_e,
 \label{eq:signature_compressed_sep}\\
 &y_{q\eta}\geq0
 &&q\in\Q,\ \eta\in\Gamma_q.
\end{align}
\end{subequations}
Here $\eta_e$ denotes the assignment induced by signature $\eta$ on
separator $S_e$. Configurations with the same signature have identical
coefficients in every master constraint. Their costs can therefore be
minimized within the signature class. The proposition below formalizes both
directions of this reduction: aggregation gives a feasible compressed vector,
and the stored representatives lift a compressed solution back to JT-LP.

\begin{proposition}[Exact signature compression]
\label{prop:signature_compression}
Let $P$ be the JT-LP feasible region and define
\begin{equation}\label{eq:signature_aggregation}
 y_{q\eta}
 =
 \sum_{\substack{\omega\in\Om_q:\\
                  \gamma_q(\omega)=\eta}}
 x_{q\omega}.
\end{equation}
The feasible region of \eqref{eq:signature_compressed_lp} is the image of
$P$ under \eqref{eq:signature_aggregation}. The original and compressed LPs
have the same optimal value, the compressed feasible region is integral, and
an optimal tree can be recovered from the stored minimizing configurations.
\end{proposition}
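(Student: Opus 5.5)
The argument works with the aggregation map $\pi:x\mapsto y$ defined by \eqref{eq:signature_aggregation} and proves the four claims in order: image, optimal value, integrality, recovery.

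\textbf{Image.} For $\pi(P)\subseteq\bar P$, where $\bar P$ is the feasible region of \eqref{eq:signature_compressed_lp}, take any $x\in P$.
\begin{itemize}
\item Nonnegativity of $y=\pi(x)$ is immediate.
\item Normalization \eqref{eq:signature_compressed_norm} follows by summing \eqref{eq:jtnorm} over the signature classes, which partition $\Om_q$.
\item For an edge $e=\{q,r\}$ and $\sigma\in\Sigma_e$, the restriction $\omega|_{S_e}$ is the $e$-component of $\gamma_q(\omega)$. Hence $\sum_{\eta:\eta_e=\sigma}y_{q\eta}=\sum_{\omega:\omega|_{S_e}=\sigma}x_{q\omega}$, and \eqref{eq:jtsep} transfers verbatim to \eqref{eq:signature_compressed_sep}.
\end{itemize}
For the reverse inclusion, fix for each $\eta\in\Gamma_q$ a stored minimizer $\omega^\star_q(\eta)$ of \eqref{eq:signature_compressed_cost}. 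Define the lift $x_{q\omega^\star_q(\eta)}=y_{q\eta}$ and set all other coordinates to zero. Since $\gamma_q(\omega^\star_q(\eta))=\eta$, the same identity shows that the lift satisfies \eqref{eq:jtnorm} and \eqref{eq:jtsep}, so it lies in $P$. Its image under $\pi$ is $y$. Thus $\bar P=\pi(P)$.

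\textbf{Optimal value.} Two inequalities give equality.
\begin{itemize}
\item For any $x\in P$, we have $\sum_\omega c_q(\omega)x_{q\omega}\ge\sum_\eta\bar c_q(\eta)y_{q\eta}$, because each $c_q(\omega)\ge\bar c_q(\gamma_q(\omega))$. So the compressed optimum is at most the JT-LP optimum.
\item The lift has cost exactly $\sum_\eta\bar c_q(\eta)y_{q\eta}$. So the JT-LP optimum is at most the compressed optimum.
\end{itemize}

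\textbf{Integrality.} Linear images of polytopes are polytopes, and the image of the convex hull is the convex hull of the images. By Theorem~\ref{thm:integrality}, $\bar P=\pi(P)=\conv\{\pi(x^T):T\in\Tset\}$. Each $\pi(x^T)$ is a $0/1$ vector, namely the indicator of $\gamma_q(T|_{C_q})$ for each $q$. Every extreme point of $\bar P$ lies among these generators and is therefore integral.

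I expect this step to be the main obstacle, since a polytope's image could in principle have extreme points that are not images of extreme points. The convex-hull description from Theorem~\ref{thm:integrality} removes this worry.

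\textbf{Recovery.} Take an optimal extreme point $y$ of $\bar P$.
\begin{itemize}
\item Its lift is an optimal point of $P$ by the value argument.
\item The lift is integral: $y$ is $0/1$, and the lift copies its entries onto the representatives.
\item An integral point of $P$ is $x^T$ for some $T\in\Tset$, by the one-to-one correspondence in Theorem~\ref{thm:integrality}.
\end{itemize}
Hence the recovered tree is the one assembled from the configurations $\omega^\star_q(\eta)$ with $y_{q\eta}=1$. Its cost equals the optimal value, so it is an optimal tree.
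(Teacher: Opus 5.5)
Your proposal is correct and follows essentially the same route as the paper's proof. Both arguments use the marginal identity over signature classes for $\pi(P)\subseteq\bar P$, the zero-padded lift through the stored representatives for the reverse inclusion and the two cost inequalities, integrality from the image of the convex hull in Theorem~\ref{thm:integrality}, and recovery by lifting an integral optimum.

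One small correction to your aside: a linear image of a polytope cannot have extreme points that are not images of extreme points, because the image of $\conv V$ is $\conv \pi(V)$. The step you flagged as the main obstacle is therefore routine, exactly as your own argument shows.
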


For the specified objective, compression retains a minimum-cost representative
of each signature class. Its coefficients preserve the interface, and its
stored configuration supplies tree recovery. A change of objective can change
the minimizing representative.
Under the complete split-only counting convention of
\eqref{eq:sizesplit}, with terminal predictions optimized locally, each
cluster has $F^D$ configurations. The deepest internal decision is private,
whereas the remaining $D-1$ split decisions determine the incident separator
assignments. Hence
\begin{equation}\label{eq:signature_variable_count}
 2^{D-1}F^D
 \quad\longrightarrow\quad
 2^{D-1}F^{D-1}.
\end{equation}
For the general model, early stopping and feasibility filters are already
reflected in the admissible sets $\Om_q$.

\subsection{Subtree Contraction}\label{sec:contraction}

Signature compression optimizes decisions private to one cluster. Larger
reductions become possible when adjacent clusters are grouped, because their
internal separators no longer connect the group to the rest of the model.
Subtree contraction applies the same conditional minimization to such a
group. Its boundary must still retain the ancestor decisions that determine
routing and feasibility within the private subtree.

Fix a private depth $h\in\{1,\ldots,D\}$ and let
\[
 \Q_h=\{0,1\}^{D-h}
\]
index the roots of the corresponding private subtrees. For
$t\in\Q_h$, let
\[
 \mathcal B_t=\{q\in\Q:q\text{ has prefix }t\}
\]
be the consecutive block of original clusters associated with the
depth-$h$ subtree rooted at $t$. Grouping the clusters in each
$\mathcal B_t$ produces a smaller lexicographic junction-tree chain.

A separator assignment $\sigma$ for the grouped block fixes the ancestor
decisions visible outside the private subtree and therefore fixes the
observations reaching $t$ and the inherited feasibility restrictions. Let
$\Xi_t$ be the set of such assignments that admit at least one feasible
completion. For $\sigma\in\Xi_t$, let $V_h(t,\sigma)$ denote the minimum
prediction loss and split penalties contributed by a feasible depth-$h$
subtree rooted at $t$. The cost of the corresponding contracted state is
\begin{equation}\label{eq:privatecost}
 \kappa_t(\sigma)
 =
 \sum_{\substack{v\text{ strict ancestor of }t}}
 \frac{\ell_v(\sigma)}{2^{D-h-|v|}}
 +V_h(t,\sigma).
\end{equation}
The first term is the portion of the shared-ancestor costs allocated to the
block by \eqref{eq:localcost}. Prediction losses retain the full-sample
denominator $n$. If an ancestor prediction makes the private subtree
inactive, then $V_h(t,\sigma)=0$.

The contracted master replaces each block $\mathcal B_t$ by one state for
each $\sigma\in\Xi_t$, with cost $\kappa_t(\sigma)$, one normalization
equation per block, and the separator-consistency equations induced between
adjacent blocks.

\begin{proposition}[Contraction as compression]
\label{prop:reduction_relationship}
Under the assumptions of Theorem~\ref{thm:integrality}, for every
$t\in\Q_h$ and $\sigma\in\Xi_t$,
\begin{equation}\label{eq:block_contraction_identity}
 \kappa_t(\sigma)
 =
 \min
 \left\{
 \sum_{q\in\mathcal B_t}c_q(\omega_q):
 \begin{array}{l}
 (\omega_q)_{q\in\mathcal B_t}\text{ are mutually consistent}\\
 \text{and agree with }\sigma\text{ on the external separators}
 \end{array}
 \right\},
\end{equation}
with value $+\infty$ if no feasible completion exists. Thus subtree
contraction is equivalent to grouping the block and then applying signature
compression with respect to its external separator assignments. For $h=1$,
the operation reduces to signature compression on the original clusters.
\end{proposition}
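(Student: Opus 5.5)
The plan is to establish \eqref{eq:block_contraction_identity} by showing that both sides minimize the same cost over the same set of partial trees. Fix $t\in\Q_h$ and $\sigma\in\Xi_t$. The first step identifies the external separators of the grouped block $\mathcal B_t$: its clusters are consecutive in the chain, so the only external edges join the first and last clusters of $\mathcal B_t$ to their outside neighbours. By the running-intersection property of Proposition~\ref{prop:jtconstruction} and the explicit form \eqref{eq:clusterdef}, each such separator consists of common ancestors. These are strict ancestors of $t$, together with a possibly shallower common prefix on the outside side. Hence every position of the block visible outside lies in $A_t$. Conversely, every $u\in A_t$ is shared with a neighbouring block, unless the block is the whole chain, in which case $t=\epsilon$ and $A_t=\varnothing$. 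I will therefore read $\sigma$ as an action assignment to $A_t$, i.e.\ the root-to-$t$ path, and argue that only this path is fixed on the external interface.

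The second step is a bijection. Restricted to the block, the configuration--tree correspondence of Theorem~\ref{thm:integrality} says the following. Mutually consistent tuples $(\omega_q)_{q\in\mathcal B_t}$ agreeing with $\sigma$ are exactly the assignments of actions to $A_t\cup\{\text{descendants of }t\text{ (inclusive) to depth }D\}$ that extend $\sigma$ and satisfy the structural rules and ancestor-local predicates. To show this, I apply the theorem's construction on the sub-chain $\mathcal B_t$, which is itself a junction tree for the clusters with prefix $t$. Consistency forces a single action per position, and feasibility is checked cluster-by-cluster because each cluster carries full ancestor paths. Feasible completions are then exactly depth-$h$ subtrees rooted at $t$ compatible with the inherited routing and feasibility determined by $\sigma$. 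If $\sigma$ makes $t$ inactive, the only completion is all-$\bot$. If no completion exists, both sides are $+\infty$, although $\sigma\in\Xi_t$ excludes this.

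The third step splits the cost. Summing $c_q(\omega_q)$ over $q\in\mathcal B_t$ with \eqref{eq:localcost}, each position $v$ in the subtree of $t$ lies only in clusters of $\mathcal B_t$, so its full $\ell_v$ is counted once. Each strict ancestor $v$ of $t$ lies in all $2^{h-1}$ clusters of the block, and $m_v=2^{D-1-|v|}$, so it contributes $2^{h-1}\ell_v(\sigma)/2^{D-1-|v|}=\ell_v(\sigma)/2^{D-h-|v|}$. Moreover, $\ell_v$ for $v\in A_t$ depends only on $\sigma$, since routing to $v$ is determined by the actions at ancestors of $v$. The ancestor term is therefore constant over completions, and minimizing the subtree part gives exactly $V_h(t,\sigma)$, with prediction losses using the routing fixed by $\sigma$ and denominator $n$. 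This yields \eqref{eq:block_contraction_identity}.

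The remaining claims follow quickly. The right-hand side of \eqref{eq:block_contraction_identity} is precisely the signature-compressed cost \eqref{eq:signature_compressed_cost} of the grouped cluster, whose configuration set is the set of consistent block tuples, with signature $\sigma$. Hence Proposition~\ref{prop:signature_compression} applies verbatim to the grouped chain. For $h=1$, each block is a single cluster $q=t$, and the external separators are its incident separators, so $\kappa_t$ coincides with $\bar c_q$. I expect the main obstacle to be the bookkeeping in the first step: showing that the external separators jointly expose exactly $A_t$, and that no deeper shared position leaks out, which needs care at the chain endpoints and for the outer neighbours whose common prefix with $t$ is shorter.
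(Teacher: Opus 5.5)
Your proposal is correct and follows essentially the same route as the paper's proof. Both arguments use a bijection between feasible private completions and mutually consistent block tuples agreeing with $\sigma$; a multiplicity count giving each strict ancestor total weight $2^{h-1}/2^{D-1-|v|}=2^{-(D-h-|v|)}$ and each private position total weight one; and the observation that the external separators expose exactly $A_t$, so the contracted cost is the signature-compressed cost of the grouped block, with $h=1$ recovering $\bar c_q$.

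The only difference is emphasis. The paper settles the interface question in one line: the adjacent sibling block shares all of $A_t$, and the signature is empty when $D-h=0$. You instead derive it from the longest common prefix of adjacent cluster indices, which is a valid and slightly more explicit justification.
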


Proposition~\ref{prop:reduction_relationship} identifies the contracted cost
with the minimum cost of a compatible block in the original formulation.
The next result uses this identity to establish global optimality and tree
recovery. Replacing a private completion by its minimizer leaves every
external separator assignment unchanged, so the replacements can be made
independently across blocks.

\begin{proposition}[Exact subtree contraction]\label{prop:contraction}
Assume the ancestor-local feasibility conditions of
Section~\ref{sec:jtform}, the additive objective \eqref{eq:objective}, and
$\Tset\neq\varnothing$. The contracted master has optimal value $\OPT_D$
and an integral optimum. An optimal full classification tree can be
recovered by combining an optimal set of contracted states with the stored
minimizing private subtrees.
\end{proposition}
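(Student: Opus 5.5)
The plan is to reduce Proposition~\ref{prop:contraction} to the machinery already in place: Theorem~\ref{thm:integrality}, Proposition~\ref{prop:signature_compression} and Proposition~\ref{prop:reduction_relationship}. The first step is to make the grouping explicit. For each $t\in\Q_h$, merge the clusters of $\mathcal B_t$ into one super-cluster $C_t=\bigcup_{q\in\mathcal B_t}C_q$. Its admissible configurations are the mutually consistent tuples $(\omega_q)_{q\in\mathcal B_t}$, equivalently the admissible assignments on the depth-$h$ subtree rooted at $t$ together with its ancestor path, and its cost is $\sum_{q\in\mathcal B_t}c_q(\omega_q)$.

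Next I check that the chain of super-clusters in lexicographic order of $t$ is again a junction tree. The clusters containing any internal position form a lexicographic interval, as in Proposition~\ref{prop:jtconstruction}, so the blocks containing it also form an interval, and running intersection holds. Because each super-cluster still contains the complete ancestor path of every position it contains, the correspondence argument of Theorem~\ref{thm:integrality} applies verbatim to the grouped family. Consistent grouped configurations therefore correspond bijectively to trees in $\Tset$. Cost decomposition also carries over: summing Proposition~\ref{prop:costdecomp} over blocks gives $J(T)=\sum_t\sum_{q\in\mathcal B_t}c_q(T|_{C_q})$. Hence the grouped LP has the convex-hull property and optimal value $\OPT_D$.

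I then apply signature compression to the grouped LP. The separator signature of a block configuration is exactly its restriction to the external separators. This is the assignment $\sigma$ on ancestors of $t$ that are shared with neighbouring blocks, which by the interval structure is the strict-ancestor path of $t$. Proposition~\ref{prop:signature_compression}, applied to the grouped junction tree, says the compressed LP is the aggregated image, has the same optimal value, and is integral. By Proposition~\ref{prop:reduction_relationship}, the compressed costs coincide with $\kappa_t(\sigma)$. Signatures with no feasible completion do not occur in $\Gamma_t$, so we may restrict to $\Xi_t$, and the compressed LP is exactly the contracted master. Its optimal value is $\OPT_D$, which is finite since $\Tset\neq\varnothing$, and it has an integral optimum.

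For recovery, take an integral optimum. It selects one $\sigma_t$ per block, and these are pairwise consistent on the separators. Replace each $\sigma_t$ by its stored minimizing block completion. These completions differ only on positions private to the block, so they leave all external separator assignments unchanged and the replacements are independent across blocks. The result is a consistent tuple of original configurations, hence a tree $T\in\Tset$ by Theorem~\ref{thm:integrality}, with $J(T)=\sum_t\kappa_t(\sigma_t)=\OPT_D$.

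The main obstacle is verifying that the grouped chain satisfies the hypotheses reused from Theorem~\ref{thm:integrality}. In particular, the block's external separators must be exactly the strict ancestors of $t$, so that the information carried by $\sigma$ suffices for routing and for the ancestor-local feasibility of the private subtree. If these separators are smaller than the full ancestor path, I would instead argue that the full ancestor path is nonetheless determined by consistency along the chain, using the interval property of the lexicographic ordering.
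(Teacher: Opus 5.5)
Your argument is correct and follows essentially the paper's route. You group each block $\mathcal B_t$ into an enlarged cluster, verify running intersection, cost decomposition and the configuration--tree correspondence so that the grouped model is integral, and then retain one minimizing private completion per ancestor assignment. The paper phrases this last step as a column restriction of the enlarged model, using Corollary~\ref{cor:rmp} for integrality and a direct exchange argument for the optimal value, rather than through the aggregation map of Proposition~\ref{prop:signature_compression}. Your concern about the external separators is settled: the sibling block of $t$ is adjacent in the chain and shares all of $A_t$.
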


After contraction, another application of signature compression with the
same external separators produces no additional merging. Eliminating
separator information can, however, create new signature equivalences.
Section~\ref{sec:conditional_evaluation} gives the dynamic program used to
compute $V_h(t,\sigma)$.

\subsection{Endpoint Elimination}\label{sec:endpoint_elimination}

Compression and contraction reduce the states within each retained group.
Endpoint elimination reduces the number of groups. A leaf cluster interacts
with the rest of the junction tree through a single separator; its optimized
conditional cost can therefore be added to its neighbor. Unlike compression,
this operation also removes the corresponding consistency equations.

Let $q$ be a leaf of the compressed junction tree, let $r$ be its unique
neighbor, and write $e=\{q,r\}$. Since $q$ has only one incident separator,
its signature is identified with an assignment on $S_e$. First remove any
state of $r$ whose separator assignment has no feasible extension in $q$.
For every remaining state $\eta\in\Gamma_r$, update its cost by
\begin{equation}\label{eq:leaf_updated_cost}
 \widetilde c_r(\eta)
 =
 \bar c_r(\eta)+\bar c_q(\eta_e).
\end{equation}

\begin{proposition}[Exact endpoint elimination]
\label{prop:leaf_cluster_elimination}
After the update \eqref{eq:leaf_updated_cost}, deleting cluster $q$, its
normalization equation, and the separator equations on $e$ preserves the
optimal value and integrality. The deleted variables are recovered from
\begin{equation}\label{eq:leaf_variable_recovery}
 y_{q\sigma}
 =
 \sum_{\substack{\eta\in\Gamma_r:\eta_e=\sigma}}y_{r\eta},
 \qquad \sigma\in\Gamma_q.
\end{equation}
Combining the recovered signatures with the stored representatives from
Proposition~\ref{prop:signature_compression} recovers an optimal tree.
\end{proposition}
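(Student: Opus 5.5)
The plan is to compare the compressed LP \eqref{eq:signature_compressed_lp} before elimination with the eliminated LP. Let $Y$ be the compressed feasible region and $\widetilde Y$ the feasible region after deleting $q$. I will show that the projection $\pi:Y\to\widetilde Y$, which forgets the $y_{q\cdot}$ coordinates, is a bijection. Its inverse will be the recovery map \eqref{eq:leaf_variable_recovery}. I will also show that objective values agree under this correspondence.

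\emph{Projection is well defined.} Take $y\in Y$ and fix $\eta\in\Gamma_r$ with $y_{r\eta}>0$. The separator equation on $e$ for $\sigma=\eta_e$ says
\[
 y_{q\sigma}=\sum_{\eta'\in\Gamma_r:\eta'_e=\sigma}y_{r\eta'}>0.
\]
Hence $\sigma\in\Gamma_q$, so $\sigma$ has a feasible extension in $q$. Consequently every state of $r$ removed in the pre-filtering step carries zero weight, and deleting it loses nothing. The remaining rows of $\widetilde Y$ (the normalizations of the other clusters and the separator equations on edges other than $e$) do not involve $y_{q\cdot}$. So $\pi(y)\in\widetilde Y$.

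\emph{Recovery lands back in $Y$.} Conversely, take $\tilde y\in\widetilde Y$ and define $y_{q\sigma}$ by \eqref{eq:leaf_variable_recovery}. These values are nonnegative. Because every retained $\eta$ satisfies $\eta_e\in\Gamma_q$, summing over $\sigma\in\Gamma_q$ gives
\[
 \sum_{\sigma}y_{q\sigma}=\sum_{\eta}\tilde y_{r\eta}=1,
\]
using the normalization of $r$. The separator equations on $e$ hold by construction, because $q$'s signature is exactly its assignment on $S_e$. So the lift lies in $Y$. The two maps are mutually inverse: the recovery formula is forced by the separator equations on $e$. Since the lift is linear, $\pi$ is an affine bijection between polyhedra, and it therefore maps extreme points to extreme points.

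\emph{Objective and integrality.} The objective is preserved because
\[
 \sum_{\sigma}\bar c_q(\sigma)y_{q\sigma}=\sum_{\eta}\bar c_q(\eta_e)\tilde y_{r\eta},
\]
which is exactly the increment introduced by \eqref{eq:leaf_updated_cost}. Hence the optimal values coincide. For integrality, let $\tilde y$ be an extreme point of $\widetilde Y$. Its lift is an extreme point of $Y$. By Proposition~\ref{prop:signature_compression}, $Y$ is integral, so the lift is a $0/1$ vector, and therefore $\tilde y$ is too. Tree recovery then follows by lifting an optimal integral $\tilde y$, obtaining an optimal integral $y\in Y$, and applying the stored representatives exactly as in Proposition~\ref{prop:signature_compression}.

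The main obstacle is the bookkeeping in the pre-filtering step. I must check that removing states of $r$ cannot cut off any point of $Y$. I must also check that the separator rows for assignments $\sigma\notin\Gamma_q$ force the corresponding $r$-states to zero, so that these rows do not lose information when they are dropped. Both checks reduce to the first step above. A secondary point is that $q$ being a leaf of a chain, or more generally of a tree, means $e$ is its only incident edge, so no other constraint references $y_{q\cdot}$.
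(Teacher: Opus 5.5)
Your proposal is correct and follows essentially the same route as the paper's proof. The map forgetting $y_{q\cdot}$ and the lift defined by \eqref{eq:leaf_variable_recovery} are mutually inverse, with filtering supplying an admissible state at $q$ for each retained assignment at $r$; substituting the recovery formula matches the objectives; and an optimal tree is recovered through the stored representatives of Proposition~\ref{prop:signature_compression}. Your explicit extreme-point argument for integrality, using the affine bijection and the integrality of the compressed region, is a slightly more careful version of the paper's brief remark that the bijection preserves integral vectors.
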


\begin{corollary}[Repeated elimination]\label{cor:repeated_elimination}
Any sequence of endpoint eliminations, recompressing each newly exposed
leaf with respect to its remaining separator, preserves the optimal value,
integrality, and recovery of an optimal tree. Stopping before all clusters
are eliminated leaves a smaller exact LP. Eliminating all nonroot clusters
gives the min-sum recursion developed in Section~\ref{sec:mp}.
\end{corollary}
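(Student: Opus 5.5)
The plan is to prove Corollary~\ref{cor:repeated_elimination} by induction on the number of endpoint eliminations, with Proposition~\ref{prop:leaf_cluster_elimination} as the inductive step. The invariant we carry is that the current LP is a compressed configuration LP on a chain (or tree) junction tree whose clusters are unions of original clusters; its costs are the original costs with eliminated conditional minima folded in; its optimal value is $\OPT_D$; its feasible region is integral; and it has a recovery map back to JT-LP. Initially this holds by Theorem~\ref{thm:integrality} and Proposition~\ref{prop:signature_compression}.

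For the inductive step, eliminating a leaf $q$ with neighbor $r$ makes the updated cluster $r$ carry cost $\widetilde c_r$. If $r$ now becomes a leaf, its previous signature included assignments on two separators, of which only one remains incident. Recompressing $r$ with respect to its remaining separator is an instance of Proposition~\ref{prop:signature_compression}, with $\widetilde c_r$ playing the role of $c_r$.

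The key check is that Proposition~\ref{prop:signature_compression} and Proposition~\ref{prop:leaf_cluster_elimination} apply to this reduced model and not only to the original JT-LP. Both arguments use only the following facts:
\begin{itemize}
\item[(i)] the LP has normalization and separator-consistency constraints on a junction tree;
\item[(ii)] costs are additive over clusters;
\item[(iii)] the polytope is the convex hull of consistent integral assignments.
\end{itemize}
We verify that (i) is preserved because deleting a leaf of a tree leaves a tree, and running intersection is inherited on the remaining separators. Property (ii) is preserved because the update \eqref{eq:leaf_updated_cost} simply adds a function of $r$'s state. For (iii), we invoke Proposition~\ref{prop:leaf_cluster_elimination}, which states integrality preservation, and Proposition~\ref{prop:signature_compression} for the image statement.

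This generality check is the main obstacle, since the earlier propositions are phrased for the specific clusters $C_q$. I would handle it by noting that their proofs treat cluster states abstractly as labels with separator restrictions. Alternatively, the current LP can be realized as the image of JT-LP under a composition of aggregation maps \eqref{eq:signature_aggregation} and the projections defined by \eqref{eq:leaf_variable_recovery}. The image of an integral polytope under such a coordinate-summing map sends vertices to integral points, and feasibility-based removal of states preserves the convex-hull description.

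Recovery is obtained by composing the stored maps in reverse order. Given an optimal solution of the final LP:
\begin{enumerate}
\item Undo recompressions using the stored minimizing representatives.
\item Reconstruct the deleted leaf variables via \eqref{eq:leaf_variable_recovery}, together with their stored minimizers.
\item Iterate back to a JT-LP solution, from which Theorem~\ref{thm:integrality} yields an optimal tree.
\end{enumerate}
Stopping early simply means halting the induction. When every nonroot cluster has been eliminated, only the root cluster remains, with a single normalization equation. Its LP is then the minimization of the accumulated cost over root states, and unwinding the accumulated costs gives exactly the min-sum recursion used in Section~\ref{sec:mp}.
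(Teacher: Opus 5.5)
Your proposal is correct and follows the paper's argument: induction on the number of eliminations with Proposition~\ref{prop:leaf_cluster_elimination} as the inductive step, recompression of each newly exposed leaf, and identification of the accumulated leaf potentials with the min-sum messages. The paper states the last step explicitly through the recompressed recursion \eqref{eq:reduction_leaf_message}, which you only describe as ``unwinding'', while your check that Propositions~\ref{prop:signature_compression} and~\ref{prop:leaf_cluster_elimination} also apply to the reduced models, and not only to the original JT-LP, spells out a step the paper leaves implicit.
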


The depth-three case illustrates the size reduction. Consider the complete
split-only model with $F$ candidate rules at each split and terminal
predictions optimized locally, before feasibility filtering.
Optimizing the deepest splits reduces the explicit variable count from
$4F^3$ to $4F^2$. Endpoint elimination then leaves $2F^2$ variables:
\[
4F^3\longrightarrow4F^2\longrightarrow2F^2.
\]

The remaining model coordinates the two sides of the tree through the root
split and has $F+1$ linearly independent equalities. Thus, for $F=100$, the
variable count falls from four million to twenty thousand. Electronic
Companion~\ref{ec:exactreductions} derives these counts.

These reductions decrease the size of the explicit LP; its coefficients are
obtained by optimizing the eliminated subtree decisions. This separates
subtree evaluation from the coordination needed to form a complete tree.
Eliminating the remaining decisions conditional on the root gives the
message-passing method in Section~\ref{sec:methods}.

The reductions above require the separator assignments to contain all
information through which an eliminated configuration or block interacts
with the rest of the model. If additional coupling is introduced, the
separator state must be augmented accordingly.

% ===== END inlined file: sections/exact_reductions =====

\section{Exact Solution Methods}\label{sec:methods}

{
JT-MP and JT-CG share the same subtree evaluator but combine its results
through message passing and column generation, respectively. Both recover
feasible trees and use lower bounds valid for the original OCT problem to
certify optimality. These bounds can incorporate results from unfinished
subtree searches.
\par}

\subsection{Conditional Subtree Evaluation}
\label{sec:acceleration}\label{sec:conditional_evaluation}

Subtree contraction removes private decisions from the explicit formulation, but their conditional costs must still be computed. At an active position $v$, let a state $s$ contain the observations reaching $v$ together with the inherited decisions needed to evaluate ancestor-local feasibility. Let $L(v,s)$ be the minimum feasible prediction loss at $v$, with value $+\infty$ when prediction is not allowed, and let $\F(v,s)$ be the admissible split features. If feature $f$ is selected, let $s_b^f$ be the child state on branch $b\in\{0,1\}$ after routing the observations and updating the inherited decisions. For a contracted block of private depth $h$, let $d$ denote the remaining depth at the current position: $d=h$ at the block root and $d=h-j$ after $j$ successive splits. Define the depth-zero value and, for $d=1,\ldots,h$, the recurrence
\begin{equation}\label{eq:privatedp}
\begin{aligned}
 V_0(v,s)&=L(v,s),\\
 V_d(v,s)&=
 \min\left\{
 L(v,s),\
 \min_{f\in\F(v,s)}
 \left[
 \lambda_{v,f}
 +V_{d-1}(v0,s_0^f)
 +V_{d-1}(v1,s_1^f)
 \right]
 \right\}.
\end{aligned}
\end{equation}
Only feasible actions are included in the minima. An inactive subtree has value zero, and prediction losses retain the full-sample denominator $n$. For a contracted block rooted at $t$, the separator assignment $\sigma$ induces an initial state $s$ in \eqref{eq:privatedp}. The notation $V_h(t,\sigma)$ from Section~\ref{sec:contraction} denotes $V_h(t,s)$ evaluated at this induced state. Minimizing actions are stored for tree recovery.

Algorithm~\ref{alg:conditional_subtree} evaluates \eqref{eq:privatedp} for any prescribed private depth $h\geq0$ by combining conditional costs from the leaves to the root and recording minimizing choices. For the algorithm, write $x=(v,s)$, $L(x)=L(v,s)$, and $x_b^f=(vb,s_b^f)$. Let $\mathcal S_j$ contain the conditional states reached from the input by $j$ successive admissible splits, for $j=0,\ldots,h$, so every state in $\mathcal S_j$ has remaining depth $d=h-j$. {States are identified by the full remaining subproblem, including inherited feasibility restrictions and position-dependent costs.} These sets specify dependencies and need not be stored simultaneously.

\begin{algorithm}[!htbp]
\caption{Exact conditional-subtree evaluation}\label{alg:conditional_subtree}
\small
\begin{tabularx}{\linewidth}{@{}r>{\raggedright\arraybackslash}X@{}}
&\textbf{Input:} Subtree root $t$, separator assignment $\sigma$ with induced state $s$, and private depth $h\geq0$.\\
&\textbf{Output:} Exact private cost $V_h(t,\sigma)$ and a minimizing subtree, or infeasibility.\\[2pt]
1&If the ancestors make $t$ inactive, return cost $0$ and an inactive subtree. Otherwise set $x_{\mathrm{in}}=(t,s)$ and $\mathcal S_0=\{x_{\mathrm{in}}\}$.\\
2&For $j=0,\ldots,h-1$, generate $\mathcal S_{j+1}$ from $x_b^f$ for $x=(u,s')\in\mathcal S_j$, $f\in\F(u,s')$, and $b\in\{0,1\}$.\\
3&For each $x\in\mathcal S_h$, set $B_0(x)\gets L(x)$; if finite, record a feasible minimizing prediction label as $A_0(x)$.\\
4&\textbf{For} $d=1,\ldots,h$ \textbf{do} \hfill\emph{Combine costs from the leaves to the root}\\
5&\quad \textbf{For each} $x=(u,s')\in\mathcal S_{h-d}$ \textbf{do}\\
6&\qquad Set $B_d(x)\gets L(x)$; if finite, record a feasible minimizing prediction label as $A_d(x)$.\\
7&\qquad \textbf{For each} $f\in\F(u,s')$ \textbf{do}\\
8&\qquad\quad Evaluate $c_f=\lambda_{u,f}+B_{d-1}(x_0^f)+B_{d-1}(x_1^f)$.\\
9&\qquad\quad If $c_f<B_d(x)$, set $B_d(x)\gets c_f$ and record $f$ and pointers to $A_{d-1}(x_0^f)$ and $A_{d-1}(x_1^f)$ as $A_d(x)$.\\
10&If $B_h(x_{\mathrm{in}})=+\infty$, return infeasibility.\\
11&Backtrack from $A_h(x_{\mathrm{in}})$: a prediction ends the branch and inactivates its descendants; a split follows both stored child choices.\\
12&Return $V_h(t,\sigma)=B_h(x_{\mathrm{in}})$ and the recovered subtree.\\
\end{tabularx}
\end{algorithm}

For $h=0$, the split-generation and cost-combination loops are empty, so the algorithm returns the best feasible prediction or infeasibility. Prediction costs minimize over all feasible classes, retain the full-sample denominator $n$, and equal $+\infty$ when prediction is infeasible. An empty split domain leaves the prediction value unchanged. Each split penalty is charged once. The allocated ancestor contribution is added to the returned private cost as in \eqref{eq:privatecost}. Successive applications of \eqref{eq:privatedp} give $B_d(x)=V_d(u,s')$ for $x=(u,s')\in\mathcal S_{h-d}$, and the recorded choices recover a minimizing feasible subtree. The algorithm assumes complete evaluation; a resource stop leaves an unresolved conditional cost rather than proving infeasibility.

The private depth $h$ controls the division of work between the explicit coordinator and the conditional evaluator. With $K=|\K|$, early stopping and repeated features give the unfiltered bound
\begin{equation}\label{eq:prefixcount}
 \sum_{t\in\Q_h}|\Xi_t|
 \leq
 2^{D-h}\left(
 F^{D-h}
 +K\sum_{j=0}^{D-h-1}F^j
 \right),
\end{equation}
where the sum is zero when $D-h=0$. Path and leaf restrictions can only reduce this count. Increasing $h$ decreases the number of explicit ancestor states but enlarges each conditional subtree problem; at $h=D$, the single conditional problem is the original OCT problem.

Table~\ref{tab:pricingsize} reports unfiltered counts for general $(D,h)$ and selected depths. Original counts assume split-only configurations with locally optimized terminal labels; representative bounds allow early stopping. For fixed $D$, increasing $h$ reduces the group count and representative bound. The baseline uses $h=1$ for $D=2,3$ and $h=3$ for $D=4,5$; Section~\ref{sec:accelerationablations} compares alternative private depths.

\begin{table}[!htbp]
\centering\small
\setlength{\tabcolsep}{4pt}
\caption{Configuration and representative-state counts.}
\label{tab:pricingsize}
\begin{tabular}{ccccc}
\toprule
$D$ & Original columns & Private depth $h$ & Groups & Representative bound\\
\midrule
$D$ & $2^{D-1}F^D$ & $h$ & $2^{D-h}$ & $2^{D-h}\bigl[F^{D-h}+K\sum_{j=0}^{D-h-1}F^j\bigr]$\\
\midrule
2 & $2F^2$ & 1 & 2 & $2(F+K)$\\
3 & $4F^3$ & 1 & 4 & $4[F^2+K(F+1)]$\\
3 & $4F^3$ & 2 & 2 & $2(F+K)$\\
\midrule
4 & $8F^4$ & 1 & 8 & $8[F^3+K(F^2+F+1)]$\\
4 & $8F^4$ & 2 & 4 & $4[F^2+K(F+1)]$\\
4 & $8F^4$ & 3 & 2 & $2(F+K)$\\
\midrule
5 & $16F^5$ & 1 & 16 & $16[F^4+K(F^3+F^2+F+1)]$\\
5 & $16F^5$ & 2 & 8 & $8[F^3+K(F^2+F+1)]$\\
5 & $16F^5$ & 3 & 4 & $4[F^2+K(F+1)]$\\
\bottomrule
\end{tabular}
\end{table}

\subsection{Message Passing}\label{sec:mp}

Complete endpoint elimination gives the standard min-sum recursion on the junction tree \citep{kschischang2001factor,wainwright2008graphical}. We first write the recursion for the original configuration formulation; after exact reduction, the same equations apply with reduced states and costs.

Root $\mathcal J$ at cluster $q_0$. For neighboring clusters $q$ and $p$, let $S_{qp}=C_q\cap C_p$, and let $N(q)$ denote the neighbors of $q$. The message from $q$ to its parent $p$ is
\begin{equation}\label{eq:exactmessages}
 M^c_{q\to p}(\sigma)=
 \min_{\substack{\omega\in\Om_q:\\
                 \omega|_{S_{qp}}=\sigma}}
 \left\{
 c_q(\omega)
 +\sum_{r\in N(q)\setminus\{p\}}
 M^c_{r\to q}(\omega|_{S_{rq}})
 \right\}.
\end{equation}
A minimum over an empty domain is $+\infty$. After an inward sweep,
\begin{equation}\label{eq:mproot}
 \OPT_D=
 \min_{\omega\in\Om_{q_0}}
 \left\{
 c_{q_0}(\omega)
 +\sum_{r\in N(q_0)}
 M^c_{r\to q_0}(\omega|_{S_{rq_0}})
 \right\}.
\end{equation}
A minimizing root state and the stored minimizing choices in \eqref{eq:exactmessages} recover an optimal tree by backtracking. Separator agreement and the running-intersection property make the recovered local states globally consistent.

For the constructed chain, once local states, costs, and separator projections are available, a sweep requires
$O(\sum_q|\Om_q|+\sum_e|\Sigma_e|)$ arithmetic operations when projections are preindexed. This is a coordination bound; constructing states and evaluating their costs are separate tasks.

Let $E(q)$ be the edges incident to cluster $q$. Orient each edge $e$ of the junction tree and let $s_{qe}\in\{+1,-1\}$ be its sign at $q$, with opposite signs at its endpoints. With $\alpha_q$ for the normalization equation and $\pi_{e\sigma}$ for separator assignment $\sigma$, the dual of JT-LP is
\begin{equation}\label{eq:rmpdual}
\begin{aligned}
 \max\quad &\sum_{q\in\Q}\alpha_q\\
 \text{s.t.}\quad
 &\alpha_q+
 \sum_{e\in E(q)}s_{qe}\pi_{e,\omega|_{S_e}}
 \leq c_q(\omega)
 &&q\in\Q,\ \omega\in\Om_q,
\end{aligned}
\end{equation}
with unrestricted dual variables.

The message recursion also constructs dual prices for separator consistency.
A message is the minimum cost of the component on one side of an edge,
conditional on the separator assignment. Charging this cost at the child
and subtracting it at the parent makes the local message inequalities match
the dual constraints. The following proposition makes this relation explicit,
including separator assignments without a feasible conditional completion.

\begin{proposition}[Min-sum messages and the JT-LP dual]
\label{prop:mpdual}
Assume $\Tset\neq\varnothing$ and the nonnegative local costs in \eqref{eq:localcost}. Root $\mathcal J$ at $q_0$ and orient every edge from child $q$ to parent $p$, with sign $+1$ at the child. Choose $H$ larger than the cost of a feasible tree. Compute finite messages $\widehat M^c_{q\to p}$ by \eqref{eq:exactmessages}, assigning value $H$ to an empty local minimization domain and using the resulting finite incoming messages in all other cases. Then
\[
 \pi_{e\sigma}=\widehat M^c_{q\to p}(\sigma),\qquad
 \alpha_{q_0}=\OPT_D,\qquad
 \alpha_q=0\quad(q\neq q_0)
\]
is optimal for \eqref{eq:rmpdual}. Together with the tree obtained by backtracking, these multipliers form a primal--dual optimal pair for JT-LP.
\end{proposition}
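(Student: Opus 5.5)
The plan is to verify dual feasibility of the proposed multipliers in \eqref{eq:rmpdual}, check that their dual objective equals $\OPT_D$, and then close the argument by weak duality. The primal side is the incidence vector $x^T$ of the backtracked tree. By Proposition~\ref{prop:costdecomp} it has JT-LP cost $J(T)$, and by Theorem~\ref{thm:integrality} it is feasible for JT-LP with $J(T)=\OPT_D$. Primal objective then equals dual objective, so both are optimal and form a primal--dual pair.

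\textbf{Non-root constraints.} Fix a non-root cluster $q$ with parent $p$ and edge $e=\{q,p\}$. With the stated orientation, the constraint for $\omega\in\Om_q$ has sign $+1$ on $e$ and sign $-1$ on every child edge $\{r,q\}$, where $q$ is the parent endpoint. Since $\alpha_q=0$, the constraint reads
\[
 \widehat M^c_{q\to p}(\omega|_{S_e})-\sum_{r\in N(q)\setminus\{p\}}\widehat M^c_{r\to q}(\omega|_{S_{rq}})\le c_q(\omega).
\]
Because $\omega$ lies in the local minimization domain for $\sigma=\omega|_{S_e}$, that domain is nonempty. So $\widehat M^c_{q\to p}(\sigma)$ is the minimum in \eqref{eq:exactmessages}, computed with the finite incoming messages, and the inequality is immediate from the definition of a minimum. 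Separator assignments $\sigma\in\Sigma_e$ that do not occur at the child get value $H$. They appear in no child constraint, and in parent constraints they appear only with a $-$ sign where the domain is nonempty on the child side. This is consistent with the definition, so no extra case arises.

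\textbf{Root constraint.} The constraint at $q_0$ reads $\OPT_D-\sum_{r}\widehat M^c_{r\to q_0}(\omega|_{S_{rq_0}})\le c_{q_0}(\omega)$ for all $\omega\in\Om_{q_0}$. I expect this to be the main obstacle, because the truncated messages $\widehat M^c$ differ from the exact messages $M^c$, which may be $+\infty$. I would prove by induction up the rooted chain that
\[
 \widehat M^c_{q\to p}(\sigma)\ge\min\{M^c_{q\to p}(\sigma),H\},
\]
with equality whenever the exact message is finite and below $H$.

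The induction step uses the nonnegativity of the local costs. If some incoming hatted message is at least $H$, the whole bracket in \eqref{eq:exactmessages} is at least $H$. Otherwise every incoming hatted message coincides with the exact one, so the bracket equals the exact bracket. An empty domain gives exactly $H$.

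Applying the same dichotomy at the root for any $\omega$: either some incoming hatted message is at least $H>\OPT_D$, and the inequality holds trivially by nonnegativity; or all incoming hatted messages equal the exact ones, and the inequality is \eqref{eq:mproot}. In the second case, \eqref{eq:mproot} is valid by the junction-tree consistency argument of Theorem~\ref{thm:integrality}.

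\textbf{Objective and conclusion.} The dual objective is $\sum_q\alpha_q=\OPT_D$, matching the primal value. Optionally, I would also check complementary slackness directly: the backtracked configurations attain the minima in \eqref{eq:exactmessages} and \eqref{eq:mproot}, so every constraint with $x^T_{q\omega}=1$ is tight. This confirms the primal--dual pairing without relying on weak duality alone.
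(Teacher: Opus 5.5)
Your proposal is correct and follows essentially the same route as the paper: the nonroot dual constraints come directly from the message minimization, the root constraint comes from the $H$-truncated root minimum still equalling $\OPT_D$, and weak duality with the backtracked tree closes the argument. Your induction $\widehat M^c\geq\min\{M^c,H\}$ just makes explicit what the paper asserts in one line from nonnegativity, and one small slip is harmless: an assignment with empty child domain enters only the \emph{parent's} constraints, so your phrase ``nonempty on the child side'' should read ``parent side.''
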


Thus complete elimination produces both an optimal tree and an LP optimality certificate. We call this method JT-MP. On a contracted representation, JT-MP applies the same recursion to the contracted states and their conditional costs. The cost-bound refinements used by the reported implementation are given in Electronic Companion~\ref{sec:certificate}.

\subsection{Column Generation}\label{sec:cg}

JT-CG retains only a subset of the configuration columns and exposes omitted states by pricing, in the spirit of Dantzig--Wolfe decomposition \citep{dantzig1960}.

\subsubsection{Restricted Master}\label{sec:rmp}

Let $\widehat\Om_q\subseteq\Om_q$ be the retained configurations at cluster $q$. The restricted master problem (RMP) is the restriction of JT-LP described in Corollary~\ref{cor:rmp}: it keeps the retained columns and every separator row induced by a retained column at either endpoint. A newly added column is therefore accompanied by any separator rows that it introduces, giving simultaneous column-and-row generation \citep{muter2013,spliet2024}.

Let $z_R$ denote the RMP optimum. By Corollary~\ref{cor:rmp}, every feasible RMP has an integral optimum, so $z_R$ is the value of a feasible classification tree and is an upper bound on $\OPT_D$. If an LP solver returns a fractional point on the optimal face, a tree of the same value can be recovered by the support traversal used in Theorem~\ref{thm:integrality}. The RMP is initialized from the restrictions of any feasible tree; subsequent column additions preserve feasibility.

Orient every junction-tree edge and let $s_{qe}$ be its sign at cluster $q$. Denote the dual variables for the normalization and separator rows by $\alpha_q$ and $\pi_{e\sigma}$. Multipliers for separator rows not present in the current RMP are set to zero. The RMP dual is obtained from \eqref{eq:rmpdual} by retaining only the constraints corresponding to retained columns, and strong duality gives
\[
 \sum_{q\in\Q}\alpha_q=z_R.
\]

\subsubsection{Pricing}\label{sec:pricing}

For a full-domain configuration $\omega\in\Om_q$, define its reduced cost by
\begin{equation}\label{eq:pricing}
 \widetilde c_q(\omega)
 =
 c_q(\omega)-\alpha_q
 -\sum_{e\in E(q)}
 s_{qe}\pi_{e,\omega|_{S_e}},
\end{equation}
and let
\begin{equation}\label{eq:pricingvalue}
 r_q=\min_{\omega\in\Om_q}\widetilde c_q(\omega).
\end{equation}
A negative value identifies a violated full-master dual constraint and a candidate column to add to the RMP. Because all configurations in one signature class have identical master coefficients, signature compression is compatible with pricing: only the minimum-cost representative of a signature need be considered.

After subtree contraction, the same pricing formula applies with the contracted state set and the conditional costs $\kappa_t(\sigma)$ in place of the original configurations and costs. Conditional subtree costs are independent of the RMP dual multipliers and may therefore be reused across pricing iterations.

\subsubsection{Certification}\label{sec:completealgorithm}

Corollary~\ref{cor:rmp} makes the RMP objective $z_R$ an upper bound on the
full optimum. A lower bound requires accounting for configurations omitted
from the RMP. Pricing measures the largest dual-constraint violation in each
cluster. Subtracting these violations from the corresponding normalization
prices produces a dual solution feasible for the full master, as stated next.
All pricing values must refer to the same dual vector.

\begin{proposition}[Pricing lower bound]\label{prop:pricingbound}
Let $(\alpha,\pi)$ be an optimal dual solution of a feasible RMP, with absent separator multipliers set to zero. If
$\underline r_q\leq r_q$ is a valid lower bound on the pricing value \eqref{eq:pricingvalue} computed with this same dual vector, then
\begin{equation}\label{eq:cgbound}
 \LB_{\rm CG}
 =
 z_R+\sum_{q\in\Q}\min\{0,\underline r_q\}
 \leq \OPT_D.
\end{equation}
If exact pricing gives $r_q\geq-\tau$ for every $q$, where $\tau\geq0$, then
\begin{equation}\label{eq:epsbound}
 z_R-|\Q|\tau\leq\OPT_D\leq z_R.
\end{equation}
In particular, nonnegative exact reduced costs certify optimality.
\end{proposition}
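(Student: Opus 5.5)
The plan is to build, from the given RMP dual, a dual vector that is feasible for the full JT-LP dual \eqref{eq:rmpdual} and whose objective equals $\LB_{\rm CG}$. Weak duality for JT-LP then bounds $\OPT_D$ from below, and Theorem~\ref{thm:integrality} identifies the JT-LP optimum with $\OPT_D$ (here $\Tset\neq\varnothing$, since the RMP is feasible and Corollary~\ref{cor:rmp} yields a tree). Concretely, keep $\pi$ unchanged, with multipliers of absent separator rows equal to zero as stipulated, and set
\[
 \alpha'_q=\alpha_q+\min\{0,\underline r_q\},\qquad q\in\Q .
\]

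The first step checks feasibility of $(\alpha',\pi)$ for every full-domain column $\omega\in\Om_q$. By \eqref{eq:pricing} and \eqref{eq:pricingvalue},
\[
 c_q(\omega)-\alpha_q-\sum_{e\in E(q)}s_{qe}\pi_{e,\omega|_{S_e}}=\widetilde c_q(\omega)\geq r_q\geq\underline r_q\geq\min\{0,\underline r_q\}.
\]
Rearranging gives $\alpha'_q+\sum_{e}s_{qe}\pi_{e,\omega|_{S_e}}\leq c_q(\omega)$, which is exactly the constraint of \eqref{eq:rmpdual}.

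One detail needs care. The full dual has a multiplier for every $\sigma\in\Sigma_e$, and rows absent from the RMP receive multiplier zero. The reduced cost \eqref{eq:pricing} is defined using exactly these zero-padded multipliers, so the inequality above is the full-master constraint and no mismatch arises. Because all pricing values are computed with the same vector, the single vector $(\alpha',\pi)$ works simultaneously for all clusters.

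The second step computes the objective. Its value is
\[
 \sum_q\alpha'_q=\sum_q\alpha_q+\sum_q\min\{0,\underline r_q\}.
\]
Strong duality for the feasible RMP gives $\sum_q\alpha_q=z_R$, so the objective equals $\LB_{\rm CG}$. Weak duality for the full JT-LP then gives $\LB_{\rm CG}\leq$ the JT-LP optimum, which equals $\OPT_D$ by Theorem~\ref{thm:integrality}. This proves \eqref{eq:cgbound}.

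For \eqref{eq:epsbound}, take $\underline r_q=r_q\geq-\tau$. Then $\min\{0,r_q\}\geq-\tau$, and summing over $|\Q|$ clusters gives the lower bound $z_R-|\Q|\tau$. The upper bound $\OPT_D\leq z_R$ follows from Corollary~\ref{cor:rmp}, since an integral RMP optimum is the incidence vector of a feasible tree of value $z_R$. Setting $\tau=0$ yields the certification statement.

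The main point to handle cleanly is the treatment of separator rows absent from the RMP. One must argue that setting their multipliers to zero is consistent both with RMP optimality and with the full dual. For the RMP side, those rows have no retained columns by the construction in Corollary~\ref{cor:rmp}, so the RMP dual is unaffected. For the full dual, all multipliers are unrestricted in sign, so zero is an admissible value.
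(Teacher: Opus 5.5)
Your proof is correct and matches the paper's argument. You shift each normalization multiplier by $\min\{0,\underline r_q\}$, check full-master dual feasibility through $\widetilde c_q(\omega)\geq r_q\geq\underline r_q$, conclude by weak duality with $\sum_q\alpha_q=z_R$, and take $\OPT_D\leq z_R$ from Corollary~\ref{cor:rmp}; your explicit use of Theorem~\ref{thm:integrality} to equate the JT-LP optimum with $\OPT_D$, and your remark on zero multipliers for absent rows, only spell out steps the paper leaves implicit.
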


Electronic Companion~\ref{ec:cg} gives the proof. The same correction applies to any RMP-dual-feasible vector: if
$z_D=\sum_q\alpha_q$ is its dual objective, then
\begin{equation}\label{eq:cgdualgeneral}
 z_D+\sum_{q\in\Q}\min\{0,\underline r_q\}
\end{equation}
is a valid lower bound on $\OPT_D$.

{
Algorithm~\ref{alg:jtcg} gives the exact-cost JT-CG scheme in the original
configuration notation; contraction replaces these domains and costs as in
\eqref{eq:privatecost}. In the D4/D5 implementation, subtree searches provide
valid lower bounds and feasible solutions before completion. The solver uses
these bounds to select further evaluations and can certify optimality before
every subtree cost is known exactly. It retains lower bounds for all
candidate states and feasible subtrees for their upper costs. Newly priced columns are admitted after
exact cost evaluation; initial columns induced by incumbent trees may retain
upper costs. Refinement can therefore improve an existing column as well as
produce a new one. The restricted master uses the current representative
costs, and certification takes the strongest available full-domain pricing,
lower-cost message, and depth bounds. Algorithm~\ref{alg:intervalcg} in
Electronic Companion~\ref{ec:cg} specifies how these bounds are updated and
when the restricted master is solved again. A resource stop returns the best feasible tree and valid bounds.
\par}

\begin{algorithm}[!htbp]
\caption{Junction-tree column generation (JT-CG)}\label{alg:jtcg}
\small
\begin{tabularx}{\linewidth}{@{}r>{\raggedright\arraybackslash}X@{}}
&\textbf{Input:} Configuration domains $\Om_q$, feasible tree $T^0$, absolute gap tolerance $\varepsilon\geq0$, and resource limit.\\
&\textbf{Output:} A feasible tree $T$ and bounds $\LB\leq\OPT_D\leq\UB=J(T)$.\\[2pt]
1&Initialize each $\widehat\Om_q$ with the configuration induced by $T^0$ and add all induced separator rows. Set $T\gets T^0$, $\UB\gets J(T)$, and $\LB\gets-\infty$.\\
2&\textbf{while} resources remain \textbf{do}\\
3&\quad Solve the RMP; obtain $z_R$, an optimal dual vector $(\alpha,\pi)$, and a tree $T_R$ from its optimal support. Set absent separator multipliers to zero.\\
4&\quad If $J(T_R)<\UB$, set $T\gets T_R$ and $\UB\gets J(T_R)$.\\
5&\quad Price every cluster over its full domain in parallel using the same dual vector; obtain negative-reduced-cost configurations and valid bounds $\underline r_q\leq r_q$.\\
6&\quad Update $\LB\gets\max\{\LB,z_R+\sum_{q\in\Q}\min\{0,\underline r_q\}\}$. Strengthen it with the configuration-cost bound and, when available, the depth bound in Electronic Companion~\ref{ec:cg}.\\
7&\quad \textbf{if} $\UB-\LB\leq\varepsilon$ \textbf{then return} $(T,\LB,\UB)$.\\
8&\quad Add selected negative-reduced-cost configurations and every separator row they induce.\\
9&\quad If no column was found and pricing is incomplete, retain the current dual vector and resume unfinished pricing at Step~5 without resolving the RMP, until a column, certificate, or resource stop is obtained.\\
10&\quad If pricing is exact and $r_q\geq0$ for every $q\in\Q$, return $(T,\UB,\UB)$.\\
11&\textbf{end while}; \textbf{return} $(T,\LB,\UB)$.\\
\end{tabularx}
\end{algorithm}

The pseudocode uses exact reduced-cost signs and no column deletion. Each pricing correction uses bounds from one fixed dual solution. If the resource limit interrupts a solve or pricing call, return the incumbent and the last valid bounds. With complete exact pricing and no resource limit, every nonterminal iteration adds a previously absent configuration from a finite domain, so the procedure terminates finitely.

\subsection{Parallel Evaluation}\label{sec:parallel}

For a fixed separator assignment, the conditional subtree problem in Section~\ref{sec:conditional_evaluation} depends only on the observations and feasibility information carried by that state. Distinct conditional requests can therefore be evaluated independently. Within one request, candidate split and prediction costs can also be evaluated in batches before being combined through \eqref{eq:privatedp}.

Parallel workers evaluate conditional costs. JT-MP combines the resulting messages, while JT-CG prices states under a common current dual vector; the coordinator selects global states and checks certification.

{
A cached conditional value is indexed by the routed observations, remaining
depth, inherited feasibility restrictions, and all costs and restrictions
of the remaining subtree. Position is part of this identity when it changes
the subproblem. The computational study uses uniform observation weights and
a common split penalty. Separator states remain distinct unless their master
coefficients agree. Electronic Companion~\ref{ec:costbounds} gives the bound
reuse conditions; the experiments distinguish fixed-task throughput from
end-to-end solver runtime.
\par}

{
For parallel pricing, partition $\Om_q$ into disjoint blocks $\Om_{qh}$, $h=1,\ldots,H_q$. Then
\begin{equation}\label{eq:partitionpricing}
 r_q=\min_{1\leq h\leq H_q}\min_{\omega\in\Om_{qh}}\widetilde c_q(\omega).
\end{equation}
Valid block bounds $\underline r_{qh}$ combine as
\begin{equation}\label{eq:ecparallelbound}
 \underline r_q=\min_{1\leq h\leq H_q}\underline r_{qh}\leq r_q.
\end{equation}
All blocks, including unfinished ones, contribute valid bounds under the same dual vector to \eqref{eq:ecparallelbound}; completed blocks may return columns immediately.
\par}

% ===== END inlined file: sections/algorithm_and_acceleration =====

% ===== BEGIN inlined file: sections/experiments =====

\section{Computational Experiments}\label{sec:experiments}

The computational study has three parts. We first compare the proposed
methods with existing exact OCT solvers. We then examine the effect of the
exact reductions on formulation size and end-to-end solution runtime. Finally,
we evaluate the computational impact of subtree contraction and parallel
conditional-subtree evaluation.

\subsection{Experimental Design}\label{sec:setup}

We use the 11 public classification datasets summarized in Table~\ref{tab:datasets}, which reports the number of observations, binary features, and classes after preprocessing. Numerical variables are converted to cumulative empirical-decile predicates, with repeated thresholds removed, and categorical variables to equality indicators. All methods receive the same resulting binary matrices and use all observations for training. Maximum depths $D\in\{2,3,4,5\}$ and split penalties $\lambda\in\{0,0.01\}$ give 88 benchmark settings.

\begin{table}[!htbp]
\centering\small
\caption{Benchmark datasets.}\label{tab:datasets}
\begin{tabular}{lrrr}
\toprule
Dataset & Observations $n$ & Binary features $F$ & Classes $|\K|$\\
\midrule
\texttt{avila} & 20,867 & 85 & 12\\
\texttt{banknote} & 1,372 & 36 & 2\\
\texttt{compas} & 12,381 & 71 & 2\\
\texttt{diabetic} & 101,766 & 315 & 3\\
\texttt{fico} & 10,459 & 159 & 2\\
\texttt{give} & 150,000 & 57 & 2\\
\texttt{htru2} & 17,898 & 72 & 2\\
\texttt{letter} & 20,000 & 99 & 26\\
\texttt{skin} & 245,057 & 27 & 2\\
\texttt{spambase} & 4,601 & 152 & 2\\
\texttt{transactions} & 786,363 & 131 & 2\\
\bottomrule
\end{tabular}
\end{table}

Runs use an Intel Core i9-14900KF central processing unit (CPU), 64~GiB of host memory, and an NVIDIA
RTX~4080 SUPER graphics processing unit (GPU). The nominal time limit is 600 seconds. Gurobi~13.0.3 is used to solve the LP models and the mixed-integer programming (MIP) models.

JT runs use an absolute gap tolerance of $10^{-7}$ for numerical optimality
certification. The lower bounds come from the methods' full-domain
certificates; upper bounds are attained by recovered trees. Each returned
tree is independently rerouted and reevaluated under \eqref{eq:objective}
to check its feasibility and objective.
Our code and datasets are available in the GitHub repository (\url{https://github.com/Tommytutu/JT-OCT}).

We compare JT-LP, JT-CG and JT-MP with the mixed-integer method BendersOCT (BOCT)
\citep{aghaei2024strong} and the exact-search methods STreeD
\citep{vanderlinden2023}, MurTree \citep{demirovic2022murtree}, GOSDT
\citep{lin2020gosdt}, Branches \citep{chaouki2025}, and DL8.5
\citep{aglin2020dl85}. We additionally compare with RolloTree's OCT-2 module \citep{organ2026rolling} only at $D=2$ and $\lambda=0$, under its complete-tree domain. We also report heuristic column generation (HCG) \citep{patel2024cg}, using screened feature sets, and CART
\citep{breiman1984cart} as a feasible-tree baseline. {GOSDT results use \texttt{gosdt-guesses} \citep{mctavish2022fast}, version~1.0.4. For a binary tree, the leaf count is one plus the split count, so we subtract the constant $\lambda$ from its leaf-penalized objective and bounds to match \eqref{eq:objective}.} Among the exact methods,
The BendersOCT
objective is modified to match the common training-loss and split-penalty
objective. {DL8.5 uses a maximum-depth constraint and permits smaller trees; its experiments here cover $\lambda=0$.}
All external implementations are run on the same machine using their
documented interfaces. 

{
The JT implementations allow early stopping and prohibit repeated features on
a path. JT-LP solves the formulation after signature compression and endpoint
elimination, using direct minimization when one cluster remains. At D2/D3,
JT-CG uses a specialized exact shallow procedure. At D4/D5, it evaluates
subtrees of private depth $h=3$ and uses bounds from unfinished searches as
specified in Algorithm~\ref{alg:intervalcg}. Of the 44 settings, 42 enter
the restricted-master loop. For \texttt{transactions} at both depths with
$\lambda=0.01$, the depth bound in \eqref{eq:ecdepthbound} closes the gap
before this loop. JT-MP uses complete min-sum elimination.
The automatic conditional-evaluation policy selects CPU or GPU execution;
for D4/D5 JT-CG, it selects CPU for the four \texttt{banknote} settings and
GPU for the remaining 40. These D4/D5 runs use up to eight CPU workers;
the shallow procedures retain their recorded thread settings.
\par}

\subsection{Comparison with Existing Exact Methods}\label{sec:performance}

{
The comparison uses the common objective \eqref{eq:objective} and the
method-specific domains described above. We report the shallow and deeper
results separately to distinguish the specialized D2/D3 procedure from the
D4/D5 restricted-master implementation.
\par}

Table~\ref{tab:depthbenchmark} reports returned objective values, runtimes, and optimality certificates.
At depth two, the JT formulation specializes to the depth-two configuration
structure of \citet{organ2026rolling} under matched modeling assumptions.
RolloTree therefore provides a direct reference for the shallow formulation.
For $\lambda=0$, both RolloTree and JT-LP certify all 11 datasets, while their
arithmetic-mean runtimes are 40.40 and 0.09 seconds, respectively. The
JT-LP implementation reduces the explicit formulation before optimization
and uses parallel computation in local-cost construction.

BOCT certifies 2 of the 88 settings and reaches the time limit on most runs.
Its numbers of variables and constraints increase with sample size, making
the large datasets in this benchmark particularly demanding. For fixed
depth, features, and classes, the size bounds of the JT formulation do not
depend on the number of observations. This structural difference helps
explain the runtime advantage of the JT methods.

\begin{table}[!htbp]
\centering\scriptsize
\caption{Optimization results by depth and split penalty. ObjVal and AverageRuntime are arithmetic means; OPT is the number of method-supplied certificates among 11 datasets.}\label{tab:depthbenchmark}
\setlength{\tabcolsep}{3.2pt}\renewcommand{\arraystretch}{0.96}
\resizebox{\linewidth}{!}{%
\begin{tabular}{lrrrrrrrrrrrr}
\toprule
& \multicolumn{4}{c}{ObjVal} & \multicolumn{4}{c}{AverageRuntime (s)} & \multicolumn{4}{c}{OPT}\\
\cmidrule(lr){2-5}\cmidrule(lr){6-9}\cmidrule(lr){10-13}
Model & $D=2$ & $D=3$ & $D=4$ & $D=5$ & $D=2$ & $D=3$ & $D=4$ & $D=5$ & $D=2$ & $D=3$ & $D=4$ & $D=5$\\
\midrule
\multicolumn{13}{l}{\textit{Panel A: $\lambda=0$}}\\
CART & 0.2597 & 0.2345 & 0.2135 & 0.1928 & --- & --- & --- & --- & --- & --- & --- & ---\\
RolloTree & 0.2466 & --- & --- & --- & 40.40 & --- & --- & --- & 11 & --- & --- & ---\\
BOCT & 0.2552 & 0.2322 & 0.2130 & 0.1928 & 558.30 & 600.93 & 601.52 & 602.07 & 1 & 0 & 0 & 0\\
HCG & 0.2566 & 0.2332 & 0.2114 & 0.1927 & 79.29 & 111.86 & 185.27 & 282.58 & --- & --- & --- & ---\\
DL8.5 & 0.2466 & 0.2188 & 0.2055 & 0.2262 & 2.03 & 126.82 & 412.71 & 503.39 & 11 & 9 & 4 & 2\\
GOSDT & 0.2466 & 0.2184$_{9/11}$ & 0.0865$_{5/11}$ & 0.0057$_{2/11}$ & 3.33 & 45.61 & 132.89 & 73.38 & 11 & 9 & 4 & 2\\
MurTree & 0.2466 & 0.2188 & 0.1894$_{9/11}$ & 0.0702$_{4/11}$ & 2.20 & 12.49 & 160.95 & 437.76 & 11 & 11 & 9 & 4\\
STreeD & 0.2466 & 0.2188 & 0.1947 & 0.1788 & 2.20 & 4.80 & 133.83 & 381.09 & 11 & 11 & 10 & 5\\
Branches & 0.2466 & 0.2192 & 0.2194 & 0.2352 & 8.55 & 42.78 & 80.14 & 79.10 & 11 & 8 & 2 & 1\\
JT-LP & 0.2466 & 0.2188 & 0.1947 & 0.1712 & 0.09 & 1.52 & 27.63 & 201.90 & 11 & 11 & 11 & 9\\
JT-CG & 0.2466 & 0.2188 & 0.1947 & 0.1706 & \textbf{0.07} & \textbf{0.40} & \textbf{20.59} & 160.96 & 11 & 11 & 11 & 9\\
JT-MP & 0.2466 & 0.2188 & 0.1947 & 0.1705 & 0.23 & 0.55 & 23.52 & 174.92 & 11 & 11 & 11 & 9\\
\midrule
\multicolumn{13}{l}{\textit{Panel B: $\lambda=0.01$}}\\
CART & 0.2897 & 0.3027 & 0.3517 & 0.4555 & --- & --- & --- & --- & --- & --- & --- & ---\\
BOCT & 0.2833 & 0.2863 & 0.3113 & 0.3700 & 548.67 & 601.19 & 601.29 & 601.48 & 1 & 0 & 0 & 0\\
GOSDT & 0.2674 & 0.2548 & 0.2366$_{9/11}$ & 0.0579$_{5/11}$ & 2.19 & 79.36 & 220.71 & 72.56 & 11 & 10 & 6 & 5\\
MurTree & 0.2674 & 0.2548 & 0.2279$_{10/11}$ & 0.0974$_{6/11}$ & 3.51 & 4.21 & 107.82 & 355.55 & 11 & 11 & 10 & 6\\
STreeD & 0.2674 & 0.2548 & 0.2476 & 0.2478 & 1.95 & 3.13 & 87.20 & 283.38 & 11 & 11 & 11 & 7\\
Branches & 0.2674 & 0.2548 & 0.2548 & 0.2548 & 10.58 & 67.27 & 137.89 & 146.69 & 11 & 9 & 4 & 3\\
JT-LP & 0.2674 & 0.2548 & 0.2476 & 0.2436 & 0.09 & 1.46 & 27.23 & 201.05 & 11 & 11 & 11 & 9\\
JT-CG & 0.2674 & 0.2548 & 0.2476 & 0.2436 & \textbf{0.07} & \textbf{0.30} & \textbf{10.00} & 88.09 & 11 & 11 & 11 & 10\\
JT-MP & 0.2674 & 0.2548 & 0.2476 & 0.2436 & 0.23 & 0.48 & 10.98 & 92.69 & 11 & 11 & 11 & 10\\
\bottomrule
\end{tabular}%
}
\begin{minipage}{\linewidth}\footnotesize\vspace{2pt}
ObjVal averages returned feasible objectives, including CART fallback trees for BOCT runs without a solver incumbent; a subscript $n/11$ gives the number of available objectives. AverageRuntime averages recorded runtimes over attempted runs, including resource-limited runs. OPT counts each method's certificates for its stated domain. {RolloTree uses a complete-tree domain at $D=2$; DL8.5 uses a maximum-depth domain. Both are reported only at $\lambda=0$.} HCG uses screened features and has no full-domain certificate. The JT results compare implementations under their recorded CPU/GPU policies.
\end{minipage}
\end{table}

{
Among the external exact methods, STreeD certifies the largest number of
settings, 77 of 88. JT-LP certifies 84, while JT-CG and JT-MP each certify
85. On the 77 settings jointly certified by JT-CG and STreeD, JT-CG is
faster on 76, with a paired geometric-mean runtime ratio
$t_{\mathrm{STreeD}}/t_{\mathrm{JT\text{-}CG}}$ of 21.79.
For D2/D3, all 44 settings are jointly certified, with JT-CG faster on all
44 and a geometric-mean ratio of 38.20. For D4/D5, the corresponding counts
are 33 jointly certified settings and 32 faster cases, with a ratio of
10.31. Excluding the two cases certified before the restricted-master loop
gives 9.38 on 31 jointly certified settings. 
\par}

The runtime difference decreases with depth. Using the unrounded
arithmetic-mean runtimes underlying Table~\ref{tab:depthbenchmark}, the STreeD/JT-CG ratio is 28.42--31.35 at
depth two, 10.27--12.03 at depth three, 6.50--8.72 at depth four, and
2.37--3.22 at depth five, where each range corresponds to the two values of
$\lambda$. Thus, the largest runtime differences occur at depths two and
three. At depth five, the difference is more apparent in certification
coverage: JT-CG certifies 19 of the 22 settings, compared with 12 for STreeD.

{
Among the JT implementations, JT-CG has the lowest arithmetic-mean runtime in every depth--penalty group in Table~\ref{tab:depthbenchmark}.
At D4/D5, the ratios of JT-MP to JT-CG mean runtimes range from 1.05 to 1.14,
with identical certification counts. This difference is smaller than the
corresponding comparison with STreeD. JT-LP is faster than JT-MP at depth two,
whereas JT-MP has the
lower mean runtime from depth three onward.
\par}

The effect of the split penalty also differs across the three JT
implementations. The runtime of JT-LP changes little between $\lambda=0$
and $\lambda=0.01$: changing the penalty changes the configuration costs but
not the size of the reduced LP. The effect is larger for JT-CG and JT-MP. At
depth five, the mean runtime decreases from 160.96 to 88.09 seconds for
JT-CG and from 174.92 to 92.69 seconds for JT-MP when $\lambda$ increases from
0 to 0.01. A positive split penalty enters the conditional lower bounds and can
enable pruning before full subtree evaluation. STreeD shows the same direction of change in mean
runtime at depths three through five.

Figure~\ref{fig:benchmarkdistributions} provides the corresponding
distributional comparison. Panel~(a) reports the cumulative number of settings
certified as a function of runtime. Panel~(b) reports the distribution of
$t_{\mathrm{STreeD}}/t_{\mathrm{JT}}$ over settings certified by both methods.
The first panel therefore shows both the rate at which certificates are
obtained and the final certification coverage, while the second shows how the
runtime differences are distributed across individual benchmark settings.

\begin{figure}[!htbp]
\centering
\includegraphics[width=\linewidth]{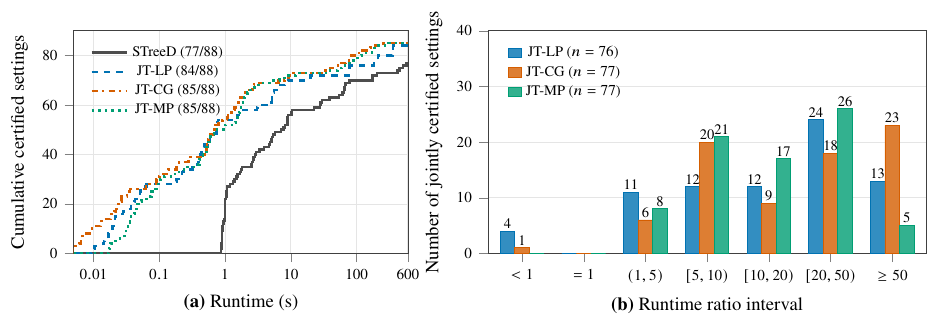}
\caption{Certification runtime profiles and paired runtime ratios relative to STreeD.}
\label{fig:benchmarkdistributions}
\end{figure}

The three settings not certified by JT-CG within the time limit are all at
depth five: \texttt{diabetic} for both values of $\lambda$ and
\texttt{transactions} for $\lambda=0$. Their final lower and upper bounds,
together with the complete per-instance objectives, gaps, runtimes, and
termination statuses, are reported in Electronic
Companion~\ref{app:results}.

\subsection{Effect of Structural Reduction}
\label{sec:structural-reduction}\label{sec:ablations}\label{sec:progressivereduction}

{
This experiment asks how much signature compression (SC) and endpoint
elimination (EE) reduce the explicit formulation, and whether the smaller
coordination system reduces coordination and end-to-end runtime. We use
the depth-three complete-tree model with $\lambda=0$, seven splits,
eight nonempty leaves, no repeated feature on a path, and locally optimized
terminal labels. LP-base is unreduced; LP+SC retains one minimum-cost
representative per separator signature; LP+SC+EE also eliminates the endpoint
clusters; and message passing (MP) eliminates the remaining chain. The three
reduced variants share data, conditional costs, execution policy, and timing
boundary. JT-LP in Section~\ref{sec:performance} is LP+SC+EE.
\par}

\begin{table}[!htbp]
\centering\scriptsize
\caption{Effect of structural reduction in the depth-three complete-tree experiment with $\lambda=0$.}
\label{tab:structural-reduction}\label{tab:structurald3pending}
\setlength{\tabcolsep}{3.2pt}
\resizebox{\linewidth}{!}{%
\begin{tabular}{lrrrrrrrrrrrr}
\toprule
& \multicolumn{2}{c}{Data} & \multicolumn{3}{c}{Number of columns} & \multicolumn{3}{c}{Number of rows} & \multicolumn{4}{c}{Runtime (s)}\\
\cmidrule(lr){2-3}\cmidrule(lr){4-6}\cmidrule(lr){7-9}\cmidrule(lr){10-13}
Dataset & $n$ & $F$ & LP-base & LP+SC & LP+SC+EE & LP-base & LP+SC & LP+SC+EE & LP-base & LP+SC & LP+SC+EE & MP\\
\midrule
avila & 20,867 & 85 & 2,101,044 & 27,898 & 13,618 & 14,369 & 14,369 & 87 & 36.069 & 0.484 & 0.444 & \textbf{0.424}\\
banknote & 1,372 & 36 & 107,436 & 4,594 & 2,074 & 2,560 & 2,560 & 38 & 0.186 & 0.012 & 0.008 & \textbf{0.004}\\
compas & 12,381 & 71 & 1,138,266 & 19,172 & 9,232 & 10,015 & 10,015 & 73 & 13.109 & 0.061 & 0.043 & \textbf{0.032}\\
diabetic & 101,766 & 315 & 71,564,408 & 335,582 & 143,013 & 192,888 & 192,888 & 317 & M & 6.497 & 6.038 & \textbf{5.962}\\
fico & 10,459 & 159 & 14,230,964 & 97,968 & 47,882 & 50,249 & 50,249 & 161 & T & 0.640 & 0.550 & \textbf{0.512}\\
give & 150,000 & 57 & 592,276 & 12,350 & 5,966 & 6,445 & 6,445 & 59 & 5.819 & 0.184 & 0.178 & \textbf{0.158}\\
htru2 & 17,898 & 72 & 1,023,958 & 19,098 & 8,874 & 10,300 & 10,300 & 74 & 10.773 & 0.065 & 0.050 & \textbf{0.044}\\
letter & 20,000 & 99 & 3,389,466 & 38,094 & 18,690 & 19,507 & 19,507 & 101 & 61.871 & 0.798 & 0.746 & \textbf{0.714}\\
skin & 245,057 & 27 & 45,882 & 2,592 & 1,188 & 1,435 & 1,435 & 29 & 0.266 & 0.055 & 0.054 & \textbf{0.051}\\
spambase & 4,601 & 152 & 12,833,790 & 90,832 & 44,928 & 46,060 & 46,060 & 154 & 333.997 & 0.587 & 0.483 & \textbf{0.439}\\
transactions & 786,363 & 131 & 7,870,986 & 66,674 & 32,614 & 34,195 & 34,195 & 133 & T & \textbf{5.133} & 5.217 & 5.212\\
\bottomrule
\end{tabular}%
}
\begin{minipage}{0.98\linewidth}
\footnotesize\vspace{2pt}
\textit{Notes.} All reduced-model runs pass the independent tree and objective
audit. M denotes a memory limit and T a time limit. LP+SC, LP+SC+EE, and MP
use the same matched evaluation policy. LP-base is retained as an unreduced
size and tractability reference.
\end{minipage}
\end{table}

{
SC reduces the total number of columns from 114,898,476 to 714,854, a factor
of 160.7. EE reduces it further to 328,079 columns and reduces submitted
equalities from 388,023 to 1,226. The reductions are especially pronounced
for \texttt{diabetic}: 71.6 million columns in LP-base, 335,582 after SC, and
143,013 after SC+EE. LP-base reaches a memory limit on \texttt{diabetic} and
time limits on \texttt{fico} and \texttt{transactions}; all reduced variants
complete the 11 datasets.

Phase runtimes separate coordination from cost construction. Across the 11
datasets, EE reduces coordination runtime by a geometric-mean factor of 2.57,
whereas cost-preparation runtime changes by a factor of 1.02 and total runtime by
1.16. A fixed-cost control, which supplies complete representative-cost tables
to each coordinator, gives the same conclusion: on the 19 available D5 tables,
EE reduces coordination runtime by factors of 2.42 for LP and 4.85 for eager CG.
Thus SC produces most of the reduction in the explicit domain, while EE acts
primarily on coordination; after cost evaluation becomes dominant, the latter
has a smaller effect on total runtime.
\par}

\subsection{Effect of Subtree Contraction}
\label{sec:subtree-contraction}\label{sec:contraction_experiments}\label{sec:accelerationablations}

{
This experiment asks how contraction and private depth change the amount of
conditional work. We compare MP with and without contraction
on the 44 D4/D5 settings under CPU8 and a 600-second limit. The contracted
variant uses private depth $h=3$; both variants use the same data and tree
domain. Contraction raises certification from 26 to 29 settings. On the 26
joint certificates, it is faster in every pair, reduces arithmetic mean runtime
from 133.56 to 23.64 seconds, and gives a geometric-mean speedup of 4.30.
\par}

\begin{figure}[!htbp]
\centering
\includegraphics[width=0.86\linewidth]{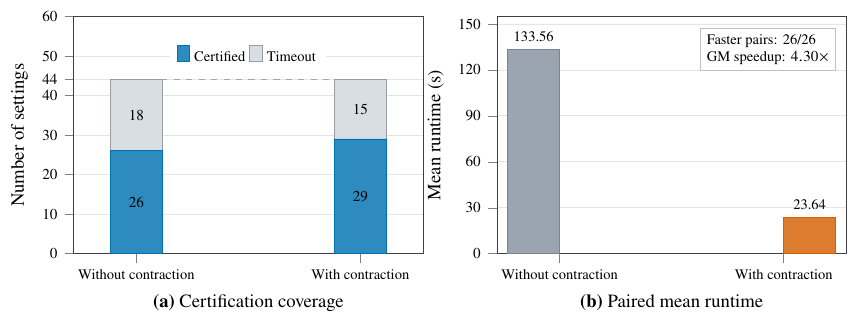}
\caption{{Subtree contraction under CPU8 and a 600-second
limit. Panel (a) reports certification coverage over 44 D4/D5 settings; panel
(b) reports arithmetic mean runtime on the 26 jointly certified settings.}}
\label{fig:contraction-summary}
\end{figure}

{
We compare private-subtree depths $h=2$ and $h=3$ using the same
implementation under CPU8. Table~\ref{tab:privatedepth} reports certification over all 44
settings and work statistics on the 20 certified at both depths. Increasing $h$ from two to three moves work from the explicit
chain into larger private problems, reducing retained states and oracle calls
enough to lower total runtime. Oracle calls at the two depths solve different
private subproblems and therefore measure invocation counts, rather than
equal-sized units of work.
\par}

\begin{table}[!htbp]
\centering\small
\caption{{Effect of private-subtree depth on 44 D4/D5 settings under
CPU8. State and call counts are totals; runtimes are arithmetic means over
the 20 jointly certified settings. At $h=2$, 16 runs reach the 200,000-state
capacity and eight reach the time limit; at $h=3$, 14 reach the time limit.}}
\label{tab:privatedepth}
{
\setlength{\tabcolsep}{4pt}
\begin{tabular}{lrrrrrr}
\toprule
\shortstack[l]{Private\\depth} & \shortstack{Certified\\settings} & \shortstack{Retained\\states} & \shortstack{Private-subtree\\oracle calls} & \shortstack{Mean\\runtime (s)} & \shortstack{Faster\\pairs} & \shortstack{GM speedup\\relative to $h=2$}\\
\midrule
$h=2$ & 20/44 & 983,728 & 295,636 & 31.78 & 2/20 & 1.00\\
$h=3$ & 30/44 & 9,304 & 4,337 & 21.22 & 18/20 & 3.28\\
\bottomrule
\end{tabular}}
\end{table}

\subsection{Parallel Conditional Evaluation}
\label{sec:parallel-evaluation}\label{sec:parallelablations}

{
We first measure throughput on identical conditional-subtree task lists.
CPU1, CPU8, and GPU complete all three repetitions on
38, 41, and 44 of the 44 D4/D5 settings, respectively. On the 38 settings
completed by all three modes, the geometric-mean speedups over CPU1 are 4.41
for CPU8 and 9.81 for GPU. This fixed-task experiment isolates conditional
evaluation from the state-selection decisions of the complete solvers.
\par}

\begin{figure}[H]
    \centering
    \includegraphics[width=0.82\linewidth]{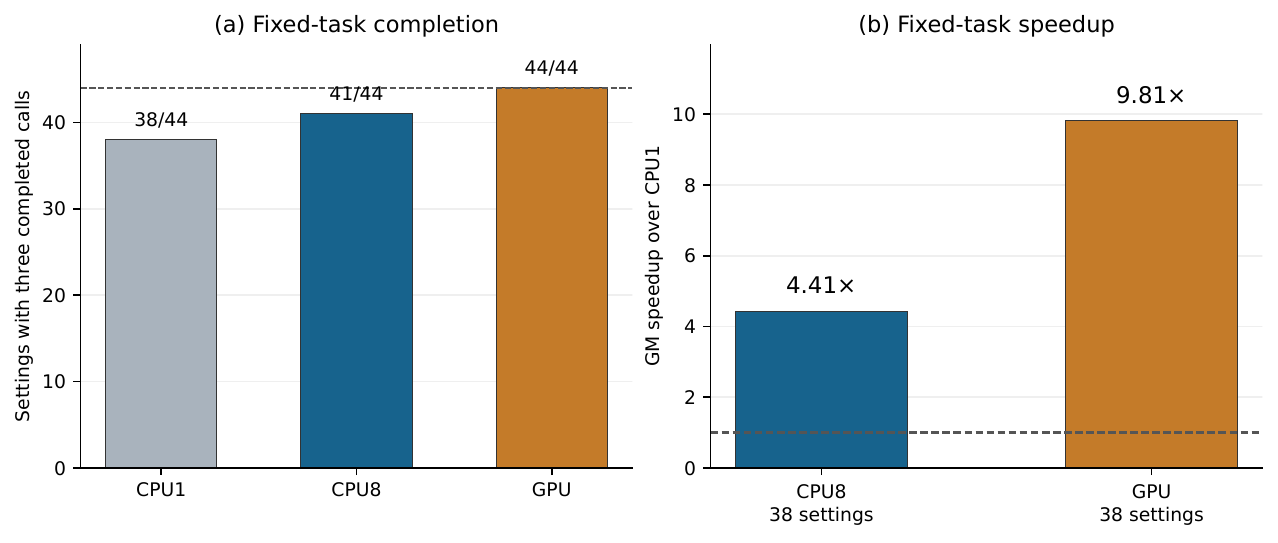}
    \caption{{Parallel evaluation of identical conditional-subtree
    tasks. Panel (a) reports settings for which all three repetitions finish
    within 600 seconds. Panel (b) reports geometric-mean speedups over CPU1 on
    the 38 settings completed by all three modes, using median runtime.}}
    \label{fig:parallel-evaluation}\label{fig:parallelcoverage}
\end{figure}

{
We next compare CG and MP in matched complete solvers,
sharing the evaluator, cache rules, initial feasible tree, hardware policy,
and 600-second limit. Each method certifies 31 of 44 settings under CPU8 and
41 under Auto. Table~\ref{tab:matched-coordination} compares conditional work
on the jointly certified settings that enter both coordination procedures.
CG makes 4.8\% fewer exact oracle calls under CPU8 and 7.8\% fewer under Auto.
Evaluation dominates runtime for both methods. The saved evaluations
are partly offset by CG's master solves: on these paired settings, the
geometric-mean ratios $t_{\mathrm{MP}}/t_{\mathrm{CG}}$ are 0.921 and 0.957.
\par}

\begin{table}[!htbp]
\centering\small
\caption{{Matched conditional work and runtime. Statistics use
29 jointly certified CPU8 settings and 39 jointly certified Auto settings
that enter both coordination procedures. Calls are totals; runtimes are
arithmetic means in seconds.}}
\label{tab:matched-coordination}
{
\begin{tabular}{llrrrrr}
\toprule
Policy & Method & Exact oracle calls & Evaluation & RMP & Messages & Total\\
\midrule
CPU8 & CG & 60,053 & 42.86 & 1.435 & 0.039 & 46.08\\
 & MP & 63,101 & 43.18 & 0.000 & 0.026 & 44.74\\
Auto & CG & 210,106 & 29.53 & 1.029 & 0.025 & 32.25\\
 & MP & 227,914 & 32.72 & 0.000 & 0.030 & 34.49\\
\bottomrule
\end{tabular}}
\end{table}

{
Finally, we compare CPU8 and Auto in the complete solvers. On the 31
settings certified under both policies, Auto gives
geometric-mean speedups over CPU8 of 3.71 for CG and 3.80 for MP. Auto includes
the tested CPU/GPU selection and batching policy; the fixed-task experiment
above provides the direct throughput comparison on an identical workload.
\par}

\FloatBarrier
% ===== END inlined file: sections/experiments =====

\FloatBarrier
% ===== BEGIN inlined file: sections/conclusions =====
\section{Conclusion}\label{sec:conclusions}

{
This paper develops an exact linear programming approach to optimal
classification trees of any prescribed maximum depth with binary features.
The model accommodates early stopping and minimum leaf support, and an
optimal tree can be recovered from an optimal LP solution. It describes
tree decisions without introducing variables and constraints for individual
training observations; the data determine the costs and feasibility of
these decisions. Exact reductions preserve the optimal objective while
shrinking the model and separating conditional-subtree optimization from
the task of combining compatible decisions. This structure supports two
exact solution methods, JT-CG and JT-MP, and parallel evaluation of the
conditional-subtree problems.

The experiments show that the reductions substantially decrease the explicit
model size and that parallel subtree evaluation accelerates the resulting
solution methods. Across 88 settings from 11 datasets, four depths, and two
split penalties, JT-CG and JT-MP each certify 85 settings, compared with 77
for STreeD. On the 77 settings certified by both JT-CG and STreeD, JT-CG
achieves a geometric-mean speedup of 21.79. The matched comparisons show
that JT-CG solves fewer conditional-subtree problems, while JT-MP combines
their results faster, leading to similar overall runtimes.

The remaining depth-five settings highlight the difficulty of proving
optimality after a good tree has been found. Stronger bounds on conditional
subtree costs and more selective allocation of search effort are therefore
promising directions for solving deeper OCT problems. The formulation also
provides a basis for studying other additive prediction losses and multiway
classification trees.
\par}
% ===== END inlined file: sections/conclusions =====

% ===== BEGIN inlined file: sections/data_availability =====
\section*{Data and Code Availability}
The Electronic Companion contains proofs, certificate details, and supporting computational results. The anonymized reproducibility artifact supplied with the submission contains preprocessing scripts, exact commands, software versions, raw logs, and per-instance outputs.
% ===== END inlined file: sections/data_availability =====

\begingroup
\bibliographystyle{plainnat}
\bibliography{references}
\endgroup
% ===== END inlined file: body =====

\clearpage
\ECSwitch
\renewcommand{\theHsection}{EC.\arabic{section}}
\renewcommand{\theHsubsection}{EC.\arabic{section}.\arabic{subsection}}
\renewcommand{\theHsubsubsection}{EC.\arabic{section}.\arabic{subsection}.\arabic{subsubsection}}
\renewcommand{\theHequation}{EC.\arabic{equation}}
\renewcommand{\theHfigure}{EC.\arabic{figure}}
\renewcommand{\theHtable}{EC.\arabic{table}}
\renewcommand{\theHtheorem}{EC.\arabic{theorem}}
\renewcommand{\theHlemma}{EC.\arabic{lemma}}
\renewcommand{\theHproposition}{EC.\arabic{proposition}}
\renewcommand{\theHcorollary}{EC.\arabic{corollary}}
\renewcommand{\theHdefinition}{EC.\arabic{definition}}
\renewcommand{\theHremark}{EC.\arabic{remark}}
\ECHead{Electronic Companion}
% ===== BEGIN inlined file: appendices =====
% ===== BEGIN inlined file: sections/appendix_theory =====
\section{Proofs for the Junction-Tree Formulation}\label{ec:hull}

\subsection{Validity of the cluster chain}\label{ec:chainproofsection}
\begin{proof}{Proof of Proposition~\ref{prop:jtconstruction}.}
Fix an internal position $v$. A cluster $C_q$ contains $v$ exactly when $v$ lies on the path from the root to $q$, including both endpoints. These indices occur consecutively in lexicographic order, so the clusters containing $v$ form a connected part of $\mathcal J$. A terminal position belongs to only one cluster. The running-intersection condition in Definition~\ref{def:junctiontree} therefore holds.
\end{proof}

\subsection{Convex-hull property and tree recovery}
\begin{proof}{Proof of Proposition~\ref{prop:costdecomp}.}
Fix $T\in\Tset$. For every split node $v\in B(T)$, the term $\lambda_{v,f_v}/m_v$ occurs in each of the $m_v$ clusters containing $v$. If $v\in L(T)$ is a prediction leaf, then $\ell_v(T|_{C_q})$ is the same for every cluster containing $v$, because each such cluster contains the complete ancestor path of $v$. Consequently,
\begin{align*}
 \sum_{q\in\Q}c_q(T|_{C_q})
 &=\sum_{v\in B(T)}\sum_{q:v\in C_q}\frac{\lambda_{v,f_v}}{m_v}
   +\sum_{v\in L(T)}\sum_{q:v\in C_q}\frac{\ell_v(T|_{C_q})}{m_v}\\
 &=\sum_{v\in B(T)}\lambda_{v,f_v}
   +\frac{1}{n}\sum_{i\in I}\mathbf 1\{T(z_i)\neq y_i\}
 =J(T).
\end{align*}
\end{proof}

\begin{proof}{Proof of Theorem~\ref{thm:integrality}.}

\emph{OCT configuration--tree correspondence.}
Restrict a feasible tree $T\in\Tset$ to each cluster. Every structural, path, and individual-leaf condition has its full scope in a cluster, so these restrictions are admissible and agree on separators.

Conversely, choose separator-consistent admissible configurations. Running intersection makes the action at each position unique. Every parent--child pair is covered, so the assembled actions enforce an active root, active children exactly when the parent splits, inactive descendants after a prediction, and no terminal split. Each prediction position occurs with its complete ancestor path; its routed observations and local feasibility checks therefore coincide with those of the assembled tree. Restriction and assembly are inverse maps, giving the claimed bijection and
\begin{equation}\label{eq:ecfirstinclusion}
 \conv\{x^T:T\in\Tset\}\subseteq P.
\end{equation}

\emph{Junction-tree convex decomposition.}
{
The standard junction-tree decomposition \citep[Proposition~2.1]{wainwright2008graphical}
and configuration-LP integrality argument \citep[Lemma~2.1 and Theorem~1.1]{kolman2015}
give a constructive recovery procedure.
\par}

Root the junction tree and select a positive-weight configuration of $x\in P$ there. Each separator equation supplies a positive-weight child configuration agreeing with the selected parent. Continuing gives a consistent selection and hence a feasible tree $T$, all of whose selected coordinates have positive weight.

Let $\delta$ be the smallest weight in $x$ among the configurations selected for $T$. Then $0<\delta\leq1$. If $\delta=1$, normalization gives $x=x^T$. Otherwise, $(x-\delta x^T)/(1-\delta)\in P$.
The numerator is nonnegative, has cluster sums $1-\delta$, and satisfies the separator equations. Each step removes a positive coordinate without introducing any, so iteration terminates and yields
\begin{equation}\label{eq:ecdecomposition}
 x=\sum_{T\in\Tset}p_T x^T,
 \qquad p_T\geq0,\qquad\sum_{T\in\Tset}p_T=1.
\end{equation}
Together with \eqref{eq:ecfirstinclusion}, this yields \eqref{eq:hull}.

\emph{Objective value and tree recovery.}
Proposition~\ref{prop:costdecomp} makes the objective at $x$ equal to $\sum_T p_T J(T)$. Hence the LP and the IP have value $\OPT_D$. If $x$ is optimal, every tree with $p_T>0$ in this decomposition is optimal; otherwise their average cost would exceed $\OPT_D$. The support selection above therefore recovers an optimal tree.
\end{proof}

\begin{proof}{Proof of Corollary~\ref{cor:rmp}.}
Setting $x_{q\omega}=0$ for excluded configurations defines a face of $P$. After zero extension, the restricted master is exactly this face: only separator equations that reduce to $0=0$ are omitted. In \eqref{eq:ecdecomposition}, a zero coordinate excludes every tree using that configuration, whereas every tree using only retained configurations belongs to the face. Thus the face equals $\conv\{x^T:T\in\Tset,\ T|_{C_q}\in\widehat\Om_q\text{ for all }q\in\Q\}$.
It is empty if no compatible tree remains; otherwise all of its extreme points are integral, and every linear objective attains an optimum at one of them.
\end{proof}

\subsection{Depths two and three}
In the following shallow formulations, infeasible rule tuples are omitted, equivalently by fixing their variables to zero. For complete split-only trees with locally optimized terminal labels, a depth-two configuration consists of root rule $f$ and child rule $g$ on branch $b\in\{0,1\}$. With $c^b_{fg}$ denoting its minimized local cost, the formulation is
\begin{equation}\label{eq:jt2}
\begin{aligned}
 \min_{x\geq0}\quad &\sum_{b\in\{0,1\}}\sum_{f,g\in\F}c^b_{fg}x^b_{fg}\\
 \text{s.t.}\quad
 &\sum_{f,g\in\F}x^b_{fg}=1 &&b\in\{0,1\},\\
 &\sum_{g\in\F}x^0_{fg}=\sum_{g\in\F}x^1_{fg} &&f\in\F.
\end{aligned}
\end{equation}
This is the depth-two structure studied by \citet{organ2026rolling}. At depth three, let $x^{ab}_{fgh}$ select rules $f$, $g$, and $h$ at positions $\epsilon$, $a$, and $ab$. In addition to one normalization equation for each $ab\in\{00,01,10,11\}$, the separator equations are
\begin{equation}\label{eq:jt3}
\begin{aligned}
 \sum_hx^{00}_{fgh}&=\sum_hx^{01}_{fgh} &&f,g\in\F,\\
 \sum_{g,h}x^{01}_{fgh}&=\sum_{g,h}x^{10}_{fgh} &&f\in\F,\\
 \sum_hx^{10}_{fgh}&=\sum_hx^{11}_{fgh} &&f,g\in\F.
\end{aligned}
\end{equation}
{
The outer equations match joint root--child actions; the middle equation
matches the root. Under matched rules, feasibility filters, and losses, these
are the four configuration groups in Appendix~A of the 2023 preprint of
\citet{organ2026rolling}, in the order $(00,01,10,11)$. Theorem~\ref{thm:integrality}
allows arbitrary depth, early stopping, and ancestor-local feasibility.
\par}

For reference, the split-only coefficients at depths two and three are
\begin{equation}\label{eq:d2cost}
 c^b_{fg}=\frac12\lambda_{\epsilon,f}+\lambda_{b,g}
 +\sum_{d=0}^1\min_{k\in\K}
 \sum_{\substack{i\in I:\;z_{if}=b,\;z_{ig}=d}}
 \frac{1}{n}\mathbf 1\{y_i\neq k\}.
\end{equation}
and
\begin{equation}\label{eq:d3cost}
 c^{ab}_{fgh}=\frac14\lambda_{\epsilon,f}
 +\frac12\lambda_{a,g}+\lambda_{ab,h}
 +\sum_{d=0}^1\min_{k\in\K}
 \sum_{\substack{i\in I:\;z_{if}=a,\;z_{ig}=b,\;z_{ih}=d}}
 \frac{1}{n}\mathbf 1\{y_i\neq k\}.
\end{equation}
They follow directly from the multiplicities in \eqref{eq:localcost}; terminal labels can be minimized locally because the matched split-only models do not couple terminal predictions.
% ===== END inlined file: sections/appendix_theory =====

% ===== BEGIN inlined file: sections/exact_reductions_proofs =====
\section{Proofs for Exact Reductions}
\label{ec:exactreductions}

For private depth $h$, put $a=D-h$ and let $A_t$ be the strict ancestors of $t\in\Q_h$. Write $\mathcal U_h(t,\sigma)$ for feasible private completions under separator assignment $\sigma$, and $J_t(U;\sigma)$ for their prediction loss and split penalties. The allocated ancestor cost is $g_t(\sigma)$ in \eqref{eq:ecancestorcost}.

\begin{proof}{Proof of Proposition~\ref{prop:signature_compression}.}
Let $T$ denote the aggregation map in
\eqref{eq:signature_aggregation}. If $x\in P$, summing the JT-LP
normalization row over the partition
$\{\omega\in\Om_q:\gamma_q(\omega)=\eta\}_{\eta\in\Gamma_q}$ gives
\eqref{eq:signature_compressed_norm}. For an edge $e=\{q,r\}$ and assignment
$\sigma\in\Sigma_e$,
\[
 \sum_{\substack{\eta\in\Gamma_q:\eta_e=\sigma}}(Tx)_{q\eta}
 =\sum_{\substack{\omega\in\Om_q:\omega|_{S_e}=\sigma}}x_{q\omega}.
\]
The analogous identity holds at $r$, so \eqref{eq:jtsep} gives
\eqref{eq:signature_compressed_sep}. Hence $T(P)$ is contained in the
compressed region.

Conversely, let $y$ satisfy \eqref{eq:signature_compressed_lp} and define
\[
 x_{q\omega}=\begin{cases}
 y_{q\eta},&\omega=\omega_q^\star(\eta)\text{ for some }\eta\in\Gamma_q,\\
 0,&\text{otherwise}.
 \end{cases}
\]
The stored representative has signature $\eta$, so the normalization and
separator marginals of $x$ equal those of $y$. Thus $x\in P$ and $Tx=y$,
which proves equality of the feasible-region image. By
Theorem~\ref{thm:integrality}, $P$ is the convex hull of tree-selection
vectors. Their images under $T$ select one signature per cluster and are
integral. The compressed region is therefore also integral.

For arbitrary $x\in P$, the definition of $\bar c_q$ gives
\[
 \sum_{q,\eta}\bar c_q(\eta)(Tx)_{q\eta}
 \leq \sum_{q,\omega}c_q(\omega)x_{q\omega}.
\]
For the lift above, equality holds because
$c_q(\omega_q^\star(\eta))=\bar c_q(\eta)$. Applying the two maps to optimal
solutions proves equality of the optimal values. If $y$ is integral, exactly
one signature is selected at each cluster and the displayed lift selects
exactly one representative there, so the lift is integral and has the same
objective value.
\end{proof}

\begin{proof}{Proof of Proposition~\ref{prop:reduction_relationship}.}
Fix $t\in\Q_h$ and a feasible external separator assignment $\sigma$ for the
block $\mathcal B_t$. Running intersection and the ancestor-local predicates
give a bijection between the completions in $\mathcal U_h(t,\sigma)$ and the
mutually consistent original configurations in $\mathcal B_t$ that agree with
$\sigma$ on the external separators. This remains true when an ancestor
predicts, in which case all private positions are inactive.

Each strict ancestor $v$ occurs in all $2^{h-1}$ clusters of the block, giving total coefficient $2^{h-1}/2^{D-1-|v|}=2^{-(a-|v|)}$. Each private position has all its occurrences in this block, so its coefficients sum to one. The block cost is therefore $g_t(\sigma)+J_t(U;\sigma)$; minimizing gives \eqref{eq:block_contraction_identity}, including the empty-domain convention.

The external signature records all strict ancestors: for $a\geq1$, a sibling group shares them; for $a=0$, the signature is empty. Compression thus minimizes exactly the private cost. For $h=1$, the block is one original cluster, giving the SC identity. Recompression is idempotent while its interface is unchanged.
\end{proof}

\begin{proof}{Proof of Proposition~\ref{prop:leaf_cluster_elimination}.}
Because $q$ is a leaf, its signature is its assignment $\sigma$ on $S_e$.
The separator equations therefore imply
\eqref{eq:leaf_variable_recovery}. Summing those identities over
$\sigma\in\Sigma_e$ and using the normalization at $r$ shows that the
normalization at $q$ is redundant.

Projection deletes only the coordinates and constraints of $q$. Conversely, \eqref{eq:leaf_variable_recovery} uniquely lifts a reduced feasible vector: feasibility filtering supplies an admissible state at $q$ for each separator assignment used at $r$. The lift satisfies the deleted separator and normalization rows, so the two maps are inverse.

Finally, substitution gives
\begin{align*}
 &\sum_{\eta\in\Gamma_r}\bar c_r(\eta)y_{r\eta}
   +\sum_{\sigma\in\Gamma_q}\bar c_q(\sigma)y_{q\sigma}\\
 &\quad=\sum_{\eta\in\Gamma_r}
   [\bar c_r(\eta)+\bar c_q(\eta_e)]y_{r\eta}
 =\sum_{\eta\in\Gamma_r}\widetilde c_r(\eta)y_{r\eta}.
\end{align*}
Thus the bijection preserves objective values. It also preserves integral
vectors. Combining the recovered signatures with the stored representatives
from Proposition~\ref{prop:signature_compression} recovers an optimal JT-LP
solution and hence an optimal tree by Theorem~\ref{thm:integrality}.
\end{proof}

A later leaf may retain components from eliminated neighbors. Recompressing its current potential on its remaining separator gives
\begin{equation}\label{eq:reduction_leaf_message}
 M_{q\to r}(\sigma)=
 \min_{\substack{\eta\in\Gamma_q:\eta_e=\sigma}}
 \left\{\bar c_q(\eta)+
 \sum_{k\in N_{\mathcal J}(q)\setminus\{r\}}
 M_{k\to q}(\eta_{\{k,q\}})\right\}.
\end{equation}
Proposition~\ref{prop:leaf_cluster_elimination} applies at each step, proving exactness by induction. Equation~\eqref{eq:reduction_leaf_message} is the compressed form of the recursion in Section~\ref{sec:mp}.

{
\paragraph{Derivation of the depth-three counts.}
Before compression, each of the four clusters has $F^3$ split configurations.
Minimizing its private deepest split conditional on the two ancestor splits
leaves $F^2$ signatures, so the first two counts are $4F^3$ and $4F^2$. Eliminating
$C_{00}$ and $C_{11}$ leaves the two $F^2$ variable arrays, giving $2F^2$ variables.

There are $F$ root-consistency rows and one retained normalization row. To
verify independence, take a linear combination with coefficient $\lambda_a$
on the consistency row for $a$ and coefficient $\mu$ on the normalization
row. The coefficient of each $y^R_{ac}$ is $-\lambda_a$, so all
$\lambda_a=0$. The coefficient of each $y^L_{ab}$ is then $\mu$, so
$\mu=0$. Hence the $F+1$ rows are linearly independent.
\par}

% ===== BEGIN inlined file: sections/appendix_contraction =====
\subsection{Configuration Counts and Subtree Contraction}
\label{ec:pricingstructure}
With all $F$ rules allowed repeatedly and no extra feasibility filters, a base configuration either splits at its $D$ internal positions and labels both terminals, or predicts after $j<D$ splits and leaves the remaining positions inactive. Hence
\begin{equation}\label{eq:ecrawcount}
 |\Om_q|=F^D K^2+K\sum_{j=0}^{D-1}F^j.
\end{equation}
For a split-only model with terminal labels optimized locally, the
first term becomes $F^D$ and the second is absent, giving
\eqref{eq:sizesplit}. If repeated rules on a path are forbidden,
replace $F^j$ by the falling factorial
$(F)_j=F(F-1)\cdots(F-j+1)$, with $(F)_0=1$ and $(F)_j=0$ for $j>F$.
Additional local restrictions can only reduce these counts.
The same partition by the first prediction position proves
\eqref{eq:prefixcount}; without repeated rules its unfiltered count is
$(F)_a+K\sum_{j=0}^{a-1}(F)_j$. Candidate arrays may use the larger
repeated-rule index set and exclude infeasible entries during pricing; array
capacity therefore need not equal the number of feasible columns.

Conditional costs follow the recurrence in Section~\ref{sec:conditional_evaluation}. The next result shows that one minimizing private completion per ancestor assignment preserves the optimal value.

\begin{proof}{Proof of Proposition~\ref{prop:contraction}.}
The enlarged cluster is $C_t^{(h)}=A_t\cup B_t^{(h)}$, where $B_t^{(h)}$ is the private subtree. Clusters containing an upper position are consecutive in lexicographic order; each private position belongs to one cluster. Running intersection therefore holds, and every private decision occurs with the ancestors needed to evaluate its loss and feasibility.

For $|v|<a$, the multiplicity is $m_v^{(h)}=2^{a-|v|}$; private
positions have multiplicity one. The allocated ancestor term is
\begin{equation}\label{eq:ecancestorcost}
 g_t(\sigma)=\sum_{v\in A_t}
                     \frac{\ell_v(\sigma)}{2^{a-|v|}}.
\end{equation}
The same counting argument as in Proposition~\ref{prop:costdecomp}
shows that the enlarged cluster costs sum to $J(T)$ for any feasible
tree. The configuration--tree correspondence in Electronic Companion~\ref{ec:hull} applies to these enlarged clusters as well. Classical junction-tree exactness \citep[Proposition~2.1]{wainwright2008graphical} therefore makes their complete normalized separator model integral.

At fixed $t$ and $\sigma$, all private completions have identical master coefficients. Replacing them by a minimizer of \eqref{eq:privatecost} preserves feasibility and cannot increase cost. Applied to an optimal full tree, this gives a representative selection costing at most $\OPT_D$.

Conversely, consistent representatives assemble into a feasible tree with the same cost, proving the reverse inequality. Retaining one minimizer per ancestor assignment is a column restriction of the enlarged integral model; Corollary~\ref{cor:rmp} therefore proves integrality. The support traversal in Theorem~\ref{thm:integrality}, followed by insertion of the stored subtrees, recovers an optimal tree even from a fractional optimum.

{
Cross-group constraints can be accommodated by enlarging the interface to
retain the private decisions on which they depend.
\par}

\end{proof}
% ===== END inlined file: sections/appendix_contraction =====
% ===== END inlined file: sections/exact_reductions_proofs =====

% ===== BEGIN inlined file: sections/on_demand_evaluation =====
\section{Certificates and Conditional-Cost Bounds}\label{sec:certificate}\label{ec:cg}
{
Subtree searches can provide useful bounds before reaching optimality.
This section explains how lower bounds and feasible subtree solutions guide
further evaluation and provide certificates for the complete tree.
\par}

\subsection{Cost Intervals and Adaptive Message Passing}
{
Extend $\Xi_t$ to include candidate assignments with unresolved feasibility;
those without a feasible completion have exact cost $+\infty$. For every
state, including those absent from the RMP, maintain
\par}

\begin{equation}\label{eq:costinterval}
 \underline\kappa_t(\sigma)=g_t(\sigma)+\underline V_h(t,\sigma)
 \ \leq\ \kappa_t(\sigma)\ \leq\
 \overline\kappa_t(\sigma)=g_t(\sigma)+\overline V_h(t,\sigma).
\end{equation}
{
Retain a feasible subtree for each finite upper cost. Combine bounds by
taking the maximum lower and minimum upper value; exact evaluation closes
the interval. Section~\ref{ec:costbounds} gives the initial bounds.
\par}

\paragraph{Adaptive message passing.}
Apply the recursion in Section~\ref{sec:mp} to all lower costs and to the feasible upper costs. Refine unresolved states in a minimizing lower-cost selection until the certified gap closes. The following result justifies this procedure.

\begin{proposition}[Interval certificate and finite refinement]\label{prop:adaptivecertificate}
Let $\mathcal A$ contain all separator-consistent candidate-state selections, with infeasible completions costing $+\infty$. Assume $\Tset\neq\varnothing$, $\underline\kappa\leq\kappa\leq\overline\kappa$ over the full domain, and a feasible realization for every finite upper cost. Put
\begin{equation}\label{eq:intervalcertificate}
 L=\min_{s\in\mathcal A}\sum_t\underline\kappa_t(s_t),\qquad
 U=\min_{s\in\mathcal A}\sum_t\overline\kappa_t(s_t),\qquad
 s^-\in\arg\min_{s\in\mathcal A}\sum_t\underline\kappa_t(s_t).
\end{equation}
Assume the lower costs are bounded below and an initial feasible tree is available. Then $L\leq\OPT_D\leq U$, and, whenever the upper costs along $s^-$ are finite,
\begin{equation}\label{eq:selectedwidth}
 0\leq U-L\leq\sum_t\bigl[\overline\kappa_t(s^-_t)-\underline\kappa_t(s^-_t)\bigr].
\end{equation}
An independently verified incumbent may further reduce $U$. In exact arithmetic, resolving at least one unresolved selected state whenever the gap is positive terminates with an optimal tree after at most $N_{\rm unresolved}$ resolutions. Here $N_{\rm unresolved}$ counts initially unresolved states, exact values are retained, and resolution includes proving infeasibility. This bound does not count partial oracle calls, message sweeps, or runtime.
\end{proposition}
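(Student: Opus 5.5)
The plan rests on the exactness of the contracted model. By Propositions~\ref{prop:reduction_relationship} and~\ref{prop:contraction}, together with the configuration--tree correspondence of Theorem~\ref{thm:integrality}, every feasible tree $T\in\Tset$ induces a separator-consistent selection $s^T\in\mathcal A$ with $\sum_t\kappa_t(s^T_t)\geq J(T)$, and this holds with equality when the private subtrees are the stored minimizers. Conversely, every $s\in\mathcal A$ with finite exact cost assembles into a feasible tree of cost $\sum_t\kappa_t(s_t)$. Hence
\[
\OPT_D=\min_{s\in\mathcal A}\sum_t\kappa_t(s_t).
\]
The bounds $L\leq\OPT_D\leq U$ then follow by comparing objectives pointwise.
\begin{itemize}
\item \emph{Lower bound.} For every $s$ we have $\sum_t\underline\kappa_t(s_t)\leq\sum_t\kappa_t(s_t)$. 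Taking the minimum gives $L\leq\OPT_D$. The assumption that lower costs are bounded below guarantees that $L$ is a finite minimum over the finite set $\mathcal A$.
\item \emph{Upper bound.} The minimizer of $U$ has finite upper costs, since a feasible tree exists and its selection has finite cost. Each finite $\overline\kappa_t(s_t)$ comes with a realization: a feasible private subtree of value $\overline V_h$. Assembling these realizations as in Proposition~\ref{prop:contraction} gives a feasible tree $T$ with $J(T)=U$, hence $\OPT_D\leq U$. Any verified incumbent also gives a valid upper bound, so $U$ may be replaced by the minimum of the two.
\end{itemize}

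\emph{Width bound \eqref{eq:selectedwidth}.} Evaluate the upper objective at $s^-$:
\[
U\leq\sum_t\overline\kappa_t(s^-_t)=L+\sum_t\bigl[\overline\kappa_t(s^-_t)-\underline\kappa_t(s^-_t)\bigr].
\]
Nonnegativity of $U-L$ follows from the chain $L\leq\OPT_D\leq U$.

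\emph{Termination.}
\begin{itemize}
\item Suppose $U-L>0$. If the upper costs along $s^-$ are all finite, then by \eqref{eq:selectedwidth} some selected state has a nondegenerate interval, so it is unresolved. If some upper cost along $s^-$ is $+\infty$, that state is also unresolved: a resolved state has exact finite value, or else value $+\infty$ in both bounds, which cannot occur along a minimizer with finite $L$. In either case an unresolved selected state exists.
\item Resolving that state, which includes proving infeasibility, closes its interval permanently. Exact values are retained and bounds are combined monotonically.
\item Each resolution therefore reduces the number of unresolved states by one. After at most $N_{\rm unresolved}$ resolutions, every state selected in the current minimizer is resolved.
\item At that point $\overline\kappa=\underline\kappa$ along $s^-$, so $U=L=\OPT_D$. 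The realization assembled from $s^-$ is then an optimal tree.
\end{itemize}

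The main obstacle is the bookkeeping with infinite values. Two facts must be checked carefully. First, a state with unresolved feasibility carries a finite lower bound and possibly an infinite upper bound, and it remains unresolved until it is either solved or proved infeasible. Second, proving a state infeasible raises its lower cost to $+\infty$, which makes it drop out of the lower minimizer. This is why the finiteness hypothesis appears only in \eqref{eq:selectedwidth}, and why termination is argued through the decrease in the count of unresolved states rather than through the width bound.
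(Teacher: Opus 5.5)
Your proposal follows the paper's own argument. Cost monotonicity gives $L\leq\OPT_D$. Assembling finite upper-cost realizations through Proposition~\ref{prop:contraction} gives $\OPT_D\leq U$. Evaluating the upper objective at $s^-$ gives \eqref{eq:selectedwidth}. For termination, a positive gap forces an unresolved state on $s^-$, and resolutions strictly decrease a finite count. Your handling of infinite values is more explicit than the paper's and is correct: an infinite upper cost along $s^-$ signals an unresolved state, and infeasibility proofs push lower costs to $+\infty$.

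Three steps need repair.

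\emph{The opening inequality is reversed.} $V_h(t,\sigma)$, and hence $\kappa_t(\sigma)$, is a minimum over feasible private completions. A feasible tree therefore satisfies $\sum_t\kappa_t(s^T_t)\leq J(T)$, not $\geq$. Applied to an optimal tree, this $\leq$ direction is exactly what gives $\min_{s\in\mathcal A}\sum_t\kappa_t(s_t)\leq\OPT_D$, and hence $L\leq\OPT_D$. As you wrote them, your facts give only $\min_s\sum_t\kappa_t(s_t)\geq\OPT_D$. Your equality remark about stored minimizers does not rescue this, because replacing an optimal tree's private subtrees by minimizers needs the cost non-increase you stated backwards.

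\emph{Finiteness of $U$ is not justified.} A tree with finite exact cost does not force the minimizer of $U$ to have finite upper costs, since an unresolved state of that tree may still carry $\overline\kappa=+\infty$. Either observe that $\OPT_D\leq U$ is trivial when $U=+\infty$, or invoke the initial feasible tree, whose retained subtrees supply finite upper costs on its own selection. Similarly, a realization only needs cost at most $\overline\kappa$, which yields $J(T)\leq U$; the equality $J(T)=U$ is not needed.

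\emph{Early termination is not covered.} If the gap closes while $s^-$ still contains unresolved states, the optimal tree returned is the realization of a minimizer of $U$ (or the incumbent). It is not in general one assembled from $s^-$.
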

{
Recompute both message arrays after refinement; the proof is in
Section~\ref{ec:finiteproofs}.
\par}

\subsection{Column Generation with Cost Bounds}
{
The RMP retains feasible representatives with their upper costs. Lower costs
on the full candidate domain yield a pricing certificate.
\par}
{
For the current RMP dual, extend absent separator multipliers by zero and set
\begin{equation}\label{eq:eccontractlower}
 \underline r_t=\min_{\sigma\in\Xi_t}
 \left\{\underline\kappa_t(\sigma)-\alpha_t
 -\sum_{e\in E(t)}s_{te}\pi_{e,\sigma|_{S_e}}\right\}.
\end{equation}
The minimum covers all states, evaluated or not; block bounds combine as in \eqref{eq:ecparallelbound}. With $\gamma_t=\min\{0,\underline r_t\}$, the inequality $\underline\kappa_t\leq\kappa_t$ makes $(\alpha+\gamma,\pi)$ exact-cost dual feasible. Thus
\begin{equation}\label{eq:eccontractcertificate}
 \sum_{t\in\Q_h}\alpha_t+
 \sum_{t\in\Q_h}\min\{0,\underline r_t\}\leq\OPT_D.
\end{equation}
The RMP uses feasible upper costs; its optimal dual objective is $\sum_t\alpha_t$. Equation~\eqref{eq:eccontractcertificate} remains a lower bound when retained costs are unresolved.

\par}
{
For configuration-cost bounds $\underline c_q(\omega)\leq c_q(\omega)$,
message passing gives a second certificate:
\par}
\begin{equation}\label{eq:jtbound}
 \LB_{\rm JT}=\min_{x\in P}\sum_{q\in\Q}\sum_{\omega\in\Om_q}
 \underline c_q(\omega)x_{q\omega}\leq\OPT_D.
\end{equation}
{
For $\varepsilon$-optimality, exact clusterwise pricing with tolerance
$\tau$ requires $2^{D-h}\tau\leq\varepsilon$. Both certificates cover the
full domain, including states absent from the RMP.
\par}

{A state is promising when the expression in
\eqref{eq:eccontractlower} gives a negative reduced-cost lower bound.
Algorithm~\ref{alg:intervalcg} permits refinement both inside and outside the RMP.\par}

\setcounter{algorithm}{0}
\renewcommand{\thealgorithm}{EC.\arabic{algorithm}}
\renewcommand{\theHalgorithm}{EC.\arabic{algorithm}}
\begin{algorithm}[!htbp]

\caption{{JT-CG with subtree cost bounds}}\label{alg:intervalcg}
\footnotesize
\begin{tabularx}{\linewidth}{@{}r>{\raggedright\arraybackslash}X@{}}
&\textbf{Input:} Candidate domains $\Xi_t$, feasible tree $T^0$, tolerance $\varepsilon$, resource limit, and finite RMP interval $b$.\\
1&Initialize valid intervals on $\Xi_t$. Retain the feasible representatives of $T^0$ and all induced separator rows; set $\UB=J(T^0)$ and a valid $\LB$.\\
2&Solve the upper-cost RMP. Obtain its dual, extend absent row multipliers by zero, and recover a feasible tree to update $\UB$.\\
3&Bound pricing over the full domain under this dual. Update $\LB$ using \eqref{eq:eccontractcertificate}, lower-cost messages, and the depth bound. Stop if $\UB-\LB\leq\varepsilon$.\\
4&Evaluate promising absent states. Retain intervals from incomplete solves; add exact-cost improving columns and their induced rows. Remove proven infeasible states from further evaluation.\\
5&Refine promising unresolved retained states. Update intervals, feasible representatives, column objective coefficients, and $\UB$.\\
6&After at most $b$ batches, or when a new dual is needed, return to step 2; otherwise repeat steps 3--5 using the same dual. Retain each certificate with the dual and bounds used to compute it.\\
7&On a resource stop, return the incumbent tree and the last valid bounds.\\
\end{tabularx}
\end{algorithm}

{
Conditional-cost bounds are reusable for the same subproblem. Reduced-cost
bounds use a fixed dual and must be recomputed when it changes. Updating a
retained column changes the RMP objective, so its earlier optimum is no
longer the current RMP value; the associated global lower certificate remains
valid. The implementation retains columns and exact costs and re-solves each
batch, except for the D5 \texttt{fico} and \texttt{spambase} runs, which use
four batches.
\par}

\subsection{Proofs and Finite Termination}\label{ec:finiteproofs}

\begin{proof}{Proof of Proposition~\ref{prop:mpdual}.}
Choose a feasible-tree cost $\bar z$ and $H>\bar z$. In \eqref{eq:exactmessages}, replace an empty conditional minimum by $H$, leaving other recursion steps unchanged; denote these finite messages by $\widehat M^c$. Nonnegative local costs make every root assignment with an infeasible component cost at least $H$, whereas a feasible assignment costs $\bar z$. The modified root minimum remains $\OPT_D$.

Orient every edge from a child $q$ to its parent $p$. In the full-domain version of \eqref{eq:rmpdual}, the multiplier on this edge has sign $+1$ in the constraint for $q$ and sign $-1$ in the constraint for $p$. For every nonroot cluster $q$ and configuration $\omega\in\Om_q$, the message definition gives
\[
 \widehat M^c_{q\to p}(\omega|_{S_{qp}})
 -\sum_{r\text{ child of }q}\widehat M^c_{r\to q}(\omega|_{S_{rq}})
 \leq c_q(\omega).
\]
This is the dual constraint at $q$ after setting $\alpha_q=0$ and $\pi_{e\sigma}=\widehat M^c_{q\to p}(\sigma)$. At the root, \eqref{eq:mproot} gives
\[
 \OPT_D-\sum_{r\text{ child of }q_0}
 \widehat M^c_{r\to q_0}(\omega|_{S_{rq_0}})
 \leq c_{q_0}(\omega),
 \qquad \omega\in\Om_{q_0}.
\]
Hence the proposed multipliers are dual feasible. Their objective is $\sum_q\alpha_q=\OPT_D$. Message backtracking supplies a feasible tree of the same value by \eqref{eq:mproot} and Theorem~\ref{thm:integrality}. Weak duality then makes the recovered tree and the displayed multipliers primal--dual optimal.
\end{proof}

\begin{proof}{Proof of Proposition~\ref{prop:pricingbound}.}
Strong duality gives $z_R=\sum_q\alpha_q$. Extend absent separator-row multipliers by zero, and set $\gamma_q=\min\{0,\underline r_q\}$ and $\widetilde\alpha_q=\alpha_q+\gamma_q$. For every full-domain column $(q,\omega)$,
\begin{align*}
 c_q(\omega)-\widetilde\alpha_q
 -\sum_{e\in E(q)}s_{qe}\pi_{e,\omega|_{S_e}}
 &=\widetilde c_q(\omega)-\gamma_q\\
 &\geq r_q-\gamma_q\\
 &\geq \underline r_q-\min\{0,\underline r_q\}\geq0.
\end{align*}
Thus $(\widetilde\alpha,\pi)$ is feasible for the dual of the complete master problem. Its objective is $z_R+\sum_q\gamma_q$, and weak duality proves \eqref{eq:cgbound}. If $\underline r_q=r_q\geq-\tau$ for every $q$, then \eqref{eq:cgbound} is at least $z_R-|\Q|\tau$. Corollary~\ref{cor:rmp} implies that the RMP optimum is attained by a feasible tree, so $\OPT_D\leq z_R$. This proves \eqref{eq:epsbound}. For a nonoptimal dual feasible solution, replace $z_R$ by $z_D=\sum_q\alpha_q$, giving \eqref{eq:cgdualgeneral}.
\end{proof}

Replacing exact costs by their lower bounds gives the recursion

\begin{equation}\label{eq:messages}
 M_{q\to p}(\sigma)=
 \min_{\substack{\omega\in\Om_q:\;\omega|_{S_{qp}}=\sigma}}
 \left\{\underline c_q(\omega)+
 \sum_{\substack{r:\;\{r,q\}\in E\\r\neq p}}
 M_{r\to q}(\omega|_{S_{rq}})\right\}.
\end{equation}
{
Here $S_{qp}=C_q\cap C_p$, with an empty minimum equal to $+\infty$.
Induction from the leaves identifies each message as the conditional minimum
on its component: a fixed configuration fixes the child separators, and
running intersection permits addition of the child minima. At the root this
evaluates \eqref{eq:jtbound}; cost monotonicity gives its lower-bound property.
Exact costs give \eqref{eq:exactmessages} and \eqref{eq:mproot}.
\par}

\begin{proof}{Proof of Proposition~\ref{prop:adaptivecertificate}.}
Cost monotonicity gives the lower bound, including infeasible candidate selections. Finite upper costs assemble into a feasible tree by Proposition~\ref{prop:contraction}, giving the upper bound. Evaluating that upper objective at $s^-$ proves \eqref{eq:selectedwidth}. If every selected state is exact, a finite feasible selection rules out any selected state of cost $+\infty$; the recovered cost then equals $L$. Thus a positive gap requires an unresolved selected state. Exact resolution reduces their finite count, proving the claimed termination bound. 
\end{proof}

A further bound applies when $\lambda_{v,f}=\lambda>0$. Let $\OPT_{d_0}$ denote the optimum among trees of depth at most $d_0<D$, and let $L_{d_0}\leq\OPT_{d_0}$. Every tree of depth greater than $d_0$ contains at least $d_0+1$ split nodes on one root-to-leaf path. Since its classification error is nonnegative,
\begin{equation}\label{eq:ecdepthbound}
 \min\{L_{d_0},(d_0+1)\lambda\}\leq\OPT_D.
\end{equation}
The two terms cover trees of depth at most $d_0$ and greater than $d_0$, respectively. The D4/D5 computations use $d_0=2$ and the certified lower bound from the matched depth-two problem, giving $\min\{L_2,3\lambda\}$.

\begin{proof}{Certification and finite termination.}
The bounds in Proposition~\ref{prop:pricingbound}, \eqref{eq:jtbound}, and \eqref{eq:ecdepthbound} remain valid when combined by their maximum. A feasible incumbent attaining $\UB$ satisfies $J(T)-\OPT_D\leq\UB-\LB$, establishing the stopping certificate.

For finite termination, assume exact arithmetic and pricing, no resource limit, and no column deletion. With $N=\sum_q|\Om_q|$ and $N_0$ initial columns, each negative reduced-cost column is absent from the RMP because its current columns have nonnegative reduced costs. Every nonterminal iteration therefore adds a new column, allowing at most $N-N_0$ such iterations. Once exact pricing finds none, the zero-extended dual is feasible for the complete master. The primal is also feasible because all induced rows are retained. Strong duality proves optimality. 
\end{proof}

{
\paragraph{Finite progress with interval costs.}
Assume finite domains, exact arithmetic, no resource limit or column deletion,
and retention of exact costs. Require complete exact pricing and the following
progress within finitely many batches whenever the gap is positive: resolve
an unresolved state, including proof of infeasibility, or add an absent
exact-cost improving column. Each state can be resolved once and each column
added once. Once the remaining costs and pricing are exact, a positive gap
implies an omitted improving column by Proposition~\ref{prop:pricingbound}.
Thus these evaluation and scheduling assumptions give finite termination.
\par}

\subsection{Conditional-Subtree Bounds}\label{ec:costbounds}
The D4/D5 bounds use weights $1/n$, common split penalty $\lambda\geq0$, early stopping, unrestricted prediction labels, minimum leaf support, and private depth $h=3$. For routed observations $R$, define the prediction and conflict losses
\begin{equation}\label{eq:ecinitiallosses}
\begin{aligned}
 E(R)&=\frac{|R|-\max_{k\in\K}|\{i\in R:y_i=k\}|}{n},\\
 C(R)&=\frac1n\sum_G\left(|G|-\max_{k\in\K}|\{i\in G:y_i=k\}|\right).
\end{aligned}
\end{equation}
Groups $G$ share identical available-feature vectors and hence follow the same route in every tree, making $C(R)$ an error lower bound.

For an active subtree with sufficient leaf support and $h\geq1$, the initial lower and upper costs are
\begin{equation}\label{eq:ecinitialbound}
 \underline V_h^{\,0}(R)=\min\{E(R),\lambda+C(R)\},
 \qquad \overline V_h^{\,0}(R)=E(R).
\end{equation}
Prediction costs $E(R)$ and a split costs at least $\lambda+C(R)$; majority prediction attains the upper bound. At $h=0$, both bounds equal $E(R)$. Insufficient leaf support makes an active state infeasible, with cost $+\infty$; inactive subtrees cost zero.

Let $N_{(1)}(R)\geq\cdots\geq N_{(K)}(R)$ be the class counts in $R$, including zero counts, where $K=|\K|$. A subtree with $b$ splits has $b+1$ leaves and can correctly classify observations from at most $b+1$ distinct classes. The optional bound is
\begin{equation}\label{eq:ecclassbound}
 B_h(R)=\min_{0\leq b\leq\min\{2^h-1,K-1\}}
 \left\{b\lambda+\max\left[C(R),
 \frac{|R|-\sum_{j=1}^{b+1}N_{(j)}(R)}{n}\right]\right\}.
\end{equation}
The maximum combines two potentially overlapping error bounds. For $b\geq K-1$, its class-count term is zero and additional splits cannot improve the expression since $\lambda\geq0$; the truncated minimum therefore bounds all feasible subtrees. When enabled, use $\max\{\underline V_h^{\,0}(R),B_h(R)\}$, with $h$ the remaining private depth.

{For binary classification, $B_h(R)$ reduces to the initial bound
in \eqref{eq:ecinitialbound}; the class-count bound is therefore enabled by
default only for multiclass data.}

Conditioning the lower-cost messages on root feature $f$ gives a valid bound $L(f)$ for trees using that split. If $L(f)\geq\UB$, the candidate cannot improve the incumbent, although its states remain in the lower-cost domain.

% ===== END inlined file: sections/on_demand_evaluation =====

\label{ec:technical-end}
\clearpage
\section{Supporting Computational Results}\label{app:results}

\subsection{Individual benchmark results}\label{app:benchmarkdetail}
Table~\ref{tab:benchmarkdetail} reports per-instance results for the exact methods. Status codes are O (optimal certificate), T (time limit), M (memory limit), and NI (no solver incumbent, with the CART fallback objective reported); dashes denote unavailable entries, including unrun combinations. For JT methods, the relative gap is $100(\UB-\LB)/\max\{10^{-10},|\UB|\}$; other methods retain their reported gaps. Runtimes shown as 0.00 seconds are positive runtimes below the rounding threshold. CART and HCG are summarized in the main text.

% ===== BEGIN inlined file: tables/benchmark_detail =====
\begingroup\fontsize{7}{8.5}\selectfont\setlength{\tabcolsep}{2pt}
\captionsetup{font=benchmarkcaption}
\begin{longtable}{rllrrrc@{\hspace{2pt}}rrrc}
\caption{Individual exact-method results for depths two through five at $\lambda\in\{0,0.01\}$. $J$ includes split penalties; runtime is in seconds.}\label{tab:benchmarkdetail}\\
\toprule
\multicolumn{3}{c}{} & \multicolumn{4}{c}{$\lambda=0$} & \multicolumn{4}{c}{$\lambda=0.01$}\\\cmidrule(lr){4-7}\cmidrule(lr){8-11}$D$ & Dataset & Method & $J$ & \shortstack{Gap\\(\%)} & \shortstack{Runtime\\(s)} & Status & $J$ & \shortstack{Gap\\(\%)} & \shortstack{Runtime\\(s)} & Status\\
\midrule
\endfirsthead
\multicolumn{11}{l}{\tablename~\thetable\ (continued)}\\
\toprule
\multicolumn{3}{c}{} & \multicolumn{4}{c}{$\lambda=0$} & \multicolumn{4}{c}{$\lambda=0.01$}\\\cmidrule(lr){4-7}\cmidrule(lr){8-11}$D$ & Dataset & Method & $J$ & \shortstack{Gap\\(\%)} & \shortstack{Runtime\\(s)} & Status & $J$ & \shortstack{Gap\\(\%)} & \shortstack{Runtime\\(s)} & Status\\
\midrule
\endhead
\midrule
\multicolumn{11}{r}{Continued on next page}\\
\endfoot
\bottomrule
\endlastfoot
2 & avila & BOCT & 0.4746 & 98.43 & 600.11 & T & 0.5046 & 93.23 & 600.11 & T\\
 &  & DL8.5 & 0.4664 & --- & 0.32 & O & --- & --- & --- & ---\\
 &  & GOSDT & 0.4664 & 0.00 & 0.40 & O & 0.4964 & 0.00 & 0.41 & O\\
 &  & MurTree & 0.4664 & --- & 0.32 & O & 0.4964 & --- & 0.32 & O\\
 &  & STreeD & 0.4664 & 0.00 & 1.00 & O & 0.4964 & 0.00 & 1.01 & O\\
 &  & Branches & 0.4664 & 0.00 & 0.95 & O & 0.4964 & 0.00 & 1.03 & O\\
 &  & JT-LP & 0.4664 & 0.00 & 0.02 & O & 0.4964 & 0.00 & 0.02 & O\\
 &  & JT-CG & 0.4664 & 0.00 & 0.01 & O & 0.4964 & 0.00 & 0.01 & O\\
 &  & JT-MP & 0.4664 & 0.00 & 0.04 & O & 0.4964 & 0.00 & 0.04 & O\\
\addlinespace[1.5pt]
2 & banknote & BOCT & 0.0794 & 0.00 & 59.23 & O & 0.1094 & 0.00 & 25.10 & O\\
 &  & DL8.5 & 0.0794 & --- & 0.01 & O & --- & --- & --- & ---\\
 &  & GOSDT & 0.0794 & 0.00 & 0.01 & O & 0.1094 & 0.00 & 0.01 & O\\
 &  & MurTree & 0.0794 & --- & 0.02 & O & 0.1094 & --- & 0.02 & O\\
 &  & STreeD & 0.0794 & 0.00 & 0.88 & O & 0.1094 & 0.00 & 0.88 & O\\
 &  & Branches & 0.0794 & 0.00 & 0.02 & O & 0.1094 & 0.00 & 0.02 & O\\
 &  & JT-LP & 0.0794 & 0.00 & 0.01 & O & 0.1094 & 0.00 & 0.01 & O\\
 &  & JT-CG & 0.0794 & 0.00 & 0.00 & O & 0.1094 & 0.00 & 0.00 & O\\
 &  & JT-MP & 0.0794 & 0.00 & 0.02 & O & 0.1094 & 0.00 & 0.02 & O\\
\addlinespace[1.5pt]
2 & compas & BOCT & 0.2757 & 100.00 & 600.06 & T & 0.2979 & 91.21 & 600.05 & T\\
 &  & DL8.5 & 0.2716 & --- & 0.09 & O & --- & --- & --- & ---\\
 &  & GOSDT & 0.2716 & 0.00 & 0.06 & O & 0.2953 & 0.00 & 0.06 & O\\
 &  & MurTree & 0.2716 & --- & 0.20 & O & 0.2953 & --- & 0.18 & O\\
 &  & STreeD & 0.2716 & 0.00 & 0.90 & O & 0.2953 & 0.00 & 0.91 & O\\
 &  & Branches & 0.2716 & 0.00 & 0.14 & O & 0.2953 & 0.00 & 0.16 & O\\
 &  & JT-LP & 0.2716 & 0.00 & 0.02 & O & 0.2953 & 0.00 & 0.02 & O\\
 &  & JT-CG & 0.2716 & 0.00 & 0.01 & O & 0.2953 & 0.00 & 0.01 & O\\
 &  & JT-MP & 0.2716 & 0.00 & 0.03 & O & 0.2953 & 0.00 & 0.02 & O\\
\addlinespace[1.5pt]
2 & diabetic & BOCT & 0.4349 & 100.00 & 667.36 & T & 0.4649 & 97.85 & 600.90 & T\\
 &  & DL8.5 & 0.4294 & --- & 7.59 & O & --- & --- & --- & ---\\
 &  & GOSDT & 0.4294 & 0.00 & 7.36 & O & 0.4452 & 0.00 & 9.72 & O\\
 &  & MurTree & 0.4294 & --- & 4.73 & O & 0.4452 & --- & 4.28 & O\\
 &  & STreeD & 0.4294 & 0.00 & 2.85 & O & 0.4452 & 0.00 & 2.96 & O\\
 &  & Branches & 0.4294 & 0.00 & 72.01 & O & 0.4452 & 0.00 & 110.01 & O\\
 &  & JT-LP & 0.4294 & 0.00 & 0.22 & O & 0.4452 & 0.00 & 0.21 & O\\
2 & diabetic & JT-CG & 0.4294 & 0.00 & 0.13 & O & 0.4452 & 0.00 & 0.13 & O\\
 &  & JT-MP & 0.4294 & 0.00 & 0.46 & O & 0.4452 & 0.00 & 0.46 & O\\
\addlinespace[1.5pt]
2 & fico & BOCT & 0.3020 & 100.00 & 606.28 & T & 0.3320 & 94.94 & 600.23 & T\\
 &  & DL8.5 & 0.2954 & --- & 0.35 & O & --- & --- & --- & ---\\
 &  & GOSDT & 0.2954 & 0.00 & 0.19 & O & 0.3173 & 0.00 & 0.19 & O\\
 &  & MurTree & 0.2954 & --- & 0.32 & O & 0.3173 & --- & 0.28 & O\\
 &  & STreeD & 0.2954 & 0.00 & 1.00 & O & 0.3173 & 0.00 & 0.98 & O\\
 &  & Branches & 0.2954 & 0.00 & 0.83 & O & 0.3173 & 0.00 & 0.85 & O\\
 &  & JT-LP & 0.2954 & 0.00 & 0.02 & O & 0.3173 & 0.00 & 0.02 & O\\
 &  & JT-CG & 0.2954 & 0.00 & 0.01 & O & 0.3173 & 0.00 & 0.01 & O\\
 &  & JT-MP & 0.2954 & 0.00 & 0.04 & O & 0.3173 & 0.00 & 0.03 & O\\
\addlinespace[1.5pt]
2 & give & BOCT & 0.0649 & 100.00 & 600.53 & T & 0.0768 & 79.94 & 600.58 & T\\
 &  & DL8.5 & 0.0649 & --- & 0.53 & O & --- & --- & --- & ---\\
 &  & GOSDT & 0.0649 & 0.00 & 4.32 & O & 0.0668 & 0.00 & 6.41 & O\\
 &  & MurTree & 0.0649 & --- & 1.34 & O & 0.0668 & --- & 1.33 & O\\
 &  & STreeD & 0.0649 & 0.00 & 1.40 & O & 0.0668 & 0.00 & 1.50 & O\\
 &  & Branches & 0.0649 & 0.00 & 0.84 & O & 0.0668 & 0.00 & 0.95 & O\\
 &  & JT-LP & 0.0649 & 0.00 & 0.04 & O & 0.0668 & 0.00 & 0.04 & O\\
 &  & JT-CG & 0.0649 & 0.00 & 0.03 & O & 0.0668 & 0.00 & 0.03 & O\\
 &  & JT-MP & 0.0649 & 0.00 & 0.10 & O & 0.0668 & 0.00 & 0.10 & O\\
\addlinespace[1.5pt]
2 & htru2 & BOCT & 0.0228 & 100.00 & 600.08 & T & 0.0522 & 61.37 & 600.10 & T\\
 &  & DL8.5 & 0.0228 & --- & 0.09 & O & --- & --- & --- & ---\\
 &  & GOSDT & 0.0228 & 0.00 & 0.09 & O & 0.0422 & 0.00 & 0.07 & O\\
 &  & MurTree & 0.0228 & --- & 0.22 & O & 0.0422 & --- & 0.20 & O\\
 &  & STreeD & 0.0228 & 0.00 & 0.94 & O & 0.0422 & 0.00 & 0.95 & O\\
 &  & Branches & 0.0228 & 0.00 & 0.19 & O & 0.0422 & 0.00 & 0.22 & O\\
 &  & JT-LP & 0.0228 & 0.00 & 0.01 & O & 0.0422 & 0.00 & 0.02 & O\\
 &  & JT-CG & 0.0228 & 0.00 & 0.01 & O & 0.0422 & 0.00 & 0.01 & O\\
 &  & JT-MP & 0.0228 & 0.00 & 0.03 & O & 0.0422 & 0.00 & 0.03 & O\\
\addlinespace[1.5pt]
2 & letter & BOCT & 0.8794 & 99.26 & 600.11 & T & 0.9094 & 96.69 & 600.11 & T\\
 &  & DL8.5 & 0.8558 & --- & 0.81 & O & --- & --- & --- & ---\\
 &  & GOSDT & 0.8558 & 0.00 & 1.01 & O & 0.8858 & 0.00 & 1.01 & O\\
 &  & MurTree & 0.8558 & --- & 0.39 & O & 0.8858 & --- & 0.38 & O\\
 &  & STreeD & 0.8558 & 0.00 & 1.05 & O & 0.8858 & 0.00 & 1.08 & O\\
 &  & Branches & 0.8558 & 0.00 & 1.24 & O & 0.8858 & 0.00 & 1.29 & O\\
 &  & JT-LP & 0.8558 & 0.00 & 0.04 & O & 0.8858 & 0.00 & 0.05 & O\\
 &  & JT-CG & 0.8558 & 0.00 & 0.02 & O & 0.8858 & 0.00 & 0.02 & O\\
 &  & JT-MP & 0.8558 & 0.00 & 0.05 & O & 0.8858 & 0.00 & 0.05 & O\\
\addlinespace[1.5pt]
2 & skin & BOCT & 0.1075 & 94.32 & 600.74 & T & 0.1375 & 81.30 & 600.67 & T\\
 &  & DL8.5 & 0.0794 & --- & 0.23 & O & --- & --- & --- & ---\\
2 & skin & GOSDT & 0.0794 & 0.00 & 0.42 & O & 0.1060 & 0.00 & 0.41 & O\\
 &  & MurTree & 0.0794 & --- & 1.21 & O & 0.1060 & --- & 1.12 & O\\
 &  & STreeD & 0.0794 & 0.00 & 5.20 & O & 0.1060 & 0.00 & 1.31 & O\\
 &  & Branches & 0.0794 & 0.00 & 0.22 & O & 0.1060 & 0.00 & 0.28 & O\\
 &  & JT-LP & 0.0794 & 0.00 & 0.03 & O & 0.1060 & 0.00 & 0.03 & O\\
 &  & JT-CG & 0.0794 & 0.00 & 0.02 & O & 0.1060 & 0.00 & 0.02 & O\\
 &  & JT-MP & 0.0794 & 0.00 & 0.08 & O & 0.1060 & 0.00 & 0.08 & O\\
\addlinespace[1.5pt]
2 & spambase & BOCT & 0.1500 & 97.29 & 600.10 & T & 0.1856 & 85.17 & 600.13 & T\\
 &  & DL8.5 & 0.1313 & --- & 0.22 & O & --- & --- & --- & ---\\
 &  & GOSDT & 0.1313 & 0.00 & 0.09 & O & 0.1613 & 0.00 & 0.08 & O\\
 &  & MurTree & 0.1313 & --- & 0.17 & O & 0.1613 & --- & 0.17 & O\\
 &  & STreeD & 0.1313 & 0.00 & 0.90 & O & 0.1613 & 0.00 & 0.92 & O\\
 &  & Branches & 0.1313 & 0.00 & 0.53 & O & 0.1613 & 0.00 & 0.52 & O\\
 &  & JT-LP & 0.1313 & 0.00 & 0.01 & O & 0.1613 & 0.00 & 0.02 & O\\
 &  & JT-CG & 0.1313 & 0.00 & 0.01 & O & 0.1613 & 0.00 & 0.01 & O\\
 &  & JT-MP & 0.1313 & 0.00 & 0.03 & O & 0.1613 & 0.00 & 0.02 & O\\
\addlinespace[1.5pt]
2 & transactions & BOCT & 0.0158 & 100.00 & 606.75 & T & 0.0458 & 100.00 & 607.36 & T\\
 &  & DL8.5 & 0.0158 & --- & 12.13 & O & --- & --- & --- & ---\\
 &  & GOSDT & 0.0158 & 0.00 & 22.71 & O & 0.0158 & 0.00 & 5.72 & O\\
 &  & MurTree & 0.0158 & --- & 15.26 & O & 0.0158 & --- & 30.38 & O\\
 &  & STreeD & 0.0158 & 0.00 & 8.15 & O & 0.0158 & 0.00 & 8.93 & O\\
 &  & Branches & 0.0158 & 0.00 & 17.06 & O & 0.0158 & 0.00 & 1.00 & O\\
 &  & JT-LP & 0.0158 & 0.00 & 0.57 & O & 0.0158 & 0.00 & 0.54 & O\\
 &  & JT-CG & 0.0158 & 0.00 & 0.52 & O & 0.0158 & 0.00 & 0.50 & O\\
 &  & JT-MP & 0.0158 & 0.00 & 1.69 & O & 0.0158 & 0.00 & 1.64 & O\\
\addlinespace[1.5pt]
3 & avila & BOCT & 0.4553 & 100.00 & 600.15 & T & 0.5253 & 93.91 & 600.11 & T\\
 &  & DL8.5 & 0.4270 & --- & 24.49 & O & --- & --- & --- & ---\\
 &  & GOSDT & 0.4270 & 0.00 & 52.45 & O & 0.4832 & 0.00 & 47.60 & O\\
 &  & MurTree & 0.4270 & --- & 1.19 & O & 0.4832 & --- & 1.19 & O\\
 &  & STreeD & 0.4270 & 0.00 & 2.70 & O & 0.4832 & 0.00 & 2.67 & O\\
 &  & Branches & 0.4270 & 0.00 & 35.16 & O & 0.4832 & 0.00 & 25.98 & O\\
 &  & JT-LP & 0.4270 & 0.00 & 0.44 & O & 0.4832 & 0.00 & 0.44 & O\\
 &  & JT-CG & 0.4270 & 0.00 & 0.51 & O & 0.4832 & 0.00 & 0.47 & O\\
 &  & JT-MP & 0.4270 & 0.00 & 0.52 & O & 0.4832 & 0.00 & 0.52 & O\\
\addlinespace[1.5pt]
3 & banknote & BOCT & 0.0219 & 100.00 & 600.03 & T & 0.0859 & 53.44 & 600.02 & T\\
 &  & DL8.5 & 0.0219 & --- & 0.10 & O & --- & --- & --- & ---\\
 &  & GOSDT & 0.0219 & 0.00 & 0.09 & O & 0.0784 & 0.00 & 0.07 & O\\
 &  & MurTree & 0.0219 & --- & 0.02 & O & 0.0784 & --- & 0.02 & O\\
 &  & STreeD & 0.0219 & 0.00 & 0.87 & O & 0.0784 & 0.00 & 0.88 & O\\
 &  & Branches & 0.0219 & 0.00 & 0.34 & O & 0.0784 & 0.00 & 0.25 & O\\
 &  & JT-LP & 0.0219 & 0.00 & 0.01 & O & 0.0784 & 0.00 & 0.01 & O\\
 &  & JT-CG & 0.0219 & 0.00 & 0.00 & O & 0.0784 & 0.00 & 0.01 & O\\
 &  & JT-MP & 0.0219 & 0.00 & 0.02 & O & 0.0784 & 0.00 & 0.02 & O\\
\addlinespace[1.5pt]
3 & compas & BOCT & 0.2743 & 100.00 & 600.15 & T & 0.3069 & 93.48 & 600.09 & T\\
 &  & DL8.5 & 0.2643 & --- & 4.06 & O & --- & --- & --- & ---\\
 &  & GOSDT & 0.2643 & 0.00 & 4.57 & O & 0.2953 & 0.00 & 3.15 & O\\
 &  & MurTree & 0.2643 & --- & 0.52 & O & 0.2953 & --- & 0.50 & O\\
3 & compas & STreeD & 0.2643 & 0.00 & 1.00 & O & 0.2953 & 0.00 & 1.00 & O\\
 &  & Branches & 0.2643 & 0.00 & 6.87 & O & 0.2953 & 0.00 & 6.54 & O\\
 &  & JT-LP & 0.2643 & 0.00 & 0.05 & O & 0.2953 & 0.00 & 0.04 & O\\
 &  & JT-CG & 0.2643 & 0.00 & 0.02 & O & 0.2953 & 0.00 & 0.02 & O\\
 &  & JT-MP & 0.2643 & 0.00 & 0.04 & O & 0.2953 & 0.00 & 0.04 & O\\
\addlinespace[1.5pt]
3 & diabetic & BOCT & 0.4319 & 100.00 & 601.07 & NI & 0.4919 & 97.97 & 601.16 & T\\
 &  & DL8.5 & 0.4253 & --- & 600.19 & T & --- & --- & --- & ---\\
3 & diabetic & GOSDT & --- & --- & 11.32 & M & 0.4452 & 95.50 & 602.79 & T\\
 &  & MurTree & 0.4253 & --- & 20.06 & O & 0.4452 & --- & 19.34 & O\\
 &  & STreeD & 0.4253 & 0.00 & 7.14 & O & 0.4452 & 0.00 & 7.70 & O\\
 &  & Branches & 0.4263 & 100.00 & 141.63 & M & 0.4452 & 93.26 & 514.53 & M\\
 &  & JT-LP & 0.4253 & 0.00 & 8.98 & O & 0.4452 & 0.00 & 8.19 & O\\
 &  & JT-CG & 0.4253 & 0.00 & 1.95 & O & 0.4452 & 0.00 & 1.31 & O\\
 &  & JT-MP & 0.4253 & 0.00 & 2.19 & O & 0.4452 & 0.00 & 1.75 & O\\
\addlinespace[1.5pt]
3 & fico & BOCT & 0.2992 & 100.00 & 600.14 & T & 0.3585 & 96.81 & 600.10 & T\\
 &  & DL8.5 & 0.2844 & --- & 47.18 & O & --- & --- & --- & ---\\
 &  & GOSDT & 0.2844 & 0.00 & 40.21 & O & 0.3173 & 0.00 & 34.17 & O\\
 &  & MurTree & 0.2844 & --- & 1.96 & O & 0.3173 & --- & 1.92 & O\\
 &  & STreeD & 0.2844 & 0.00 & 1.74 & O & 0.3173 & 0.00 & 1.81 & O\\
 &  & Branches & 0.2880 & 100.00 & 108.78 & M & 0.3173 & 81.02 & 68.97 & M\\
 &  & JT-LP & 0.2844 & 0.00 & 0.59 & O & 0.3173 & 0.00 & 0.59 & O\\
 &  & JT-CG & 0.2844 & 0.00 & 0.18 & O & 0.3173 & 0.00 & 0.28 & O\\
 &  & JT-MP & 0.2844 & 0.00 & 0.26 & O & 0.3173 & 0.00 & 0.24 & O\\
\addlinespace[1.5pt]
3 & give & BOCT & 0.0644 & 100.00 & 600.78 & T & 0.0768 & 73.91 & 601.36 & T\\
 &  & DL8.5 & 0.0644 & --- & 20.68 & O & --- & --- & --- & ---\\
 &  & GOSDT & 0.0644 & 0.00 & 33.56 & O & 0.0668 & 0.00 & 14.07 & O\\
 &  & MurTree & 0.0644 & --- & 3.66 & O & 0.0668 & --- & 3.20 & O\\
 &  & STreeD & 0.0644 & 0.00 & 2.53 & O & 0.0668 & 0.00 & 2.59 & O\\
 &  & Branches & 0.0644 & 0.00 & 19.45 & O & 0.0668 & 0.00 & 21.21 & O\\
 &  & JT-LP & 0.0644 & 0.00 & 0.18 & O & 0.0668 & 0.00 & 0.17 & O\\
 &  & JT-CG & 0.0644 & 0.00 & 0.09 & O & 0.0668 & 0.00 & 0.06 & O\\
 &  & JT-MP & 0.0644 & 0.00 & 0.19 & O & 0.0668 & 0.00 & 0.16 & O\\
\addlinespace[1.5pt]
3 & htru2 & BOCT & 0.0228 & 100.00 & 600.13 & T & 0.0528 & 62.12 & 600.08 & T\\
 &  & DL8.5 & 0.0218 & --- & 4.63 & O & --- & --- & --- & ---\\
 &  & GOSDT & 0.0218 & 0.00 & 5.34 & O & 0.0422 & 0.00 & 0.40 & O\\
 &  & MurTree & 0.0218 & --- & 0.55 & O & 0.0422 & --- & 0.40 & O\\
 &  & STreeD & 0.0218 & 0.00 & 1.04 & O & 0.0422 & 0.00 & 1.00 & O\\
 &  & Branches & 0.0218 & 0.00 & 8.09 & O & 0.0422 & 0.00 & 4.52 & O\\
 &  & JT-LP & 0.0218 & 0.00 & 0.05 & O & 0.0422 & 0.00 & 0.05 & O\\
 &  & JT-CG & 0.0218 & 0.00 & 0.03 & O & 0.0422 & 0.00 & 0.02 & O\\
 &  & JT-MP & 0.0218 & 0.00 & 0.07 & O & 0.0422 & 0.00 & 0.03 & O\\
\addlinespace[1.5pt]
3 & letter & BOCT & 0.8048 & 100.00 & 600.11 & T & 0.8748 & 96.28 & 600.29 & T\\
 &  & DL8.5 & 0.7465 & --- & 67.17 & O & --- & --- & --- & ---\\
 &  & GOSDT & 0.7465 & 0.00 & 295.98 & O & 0.8165 & 0.00 & 152.77 & O\\
 &  & MurTree & 0.7465 & --- & 2.33 & O & 0.8165 & --- & 2.55 & O\\
 &  & STreeD & 0.7465 & 0.00 & 4.85 & O & 0.8165 & 0.00 & 5.07 & O\\
 &  & Branches & 0.7465 & 0.00 & 38.06 & O & 0.8165 & 0.00 & 44.71 & O\\
3 & letter & JT-LP & 0.7465 & 0.00 & 0.74 & O & 0.8165 & 0.00 & 0.76 & O\\
 &  & JT-CG & 0.7465 & 0.00 & 0.56 & O & 0.8165 & 0.00 & 0.53 & O\\
 &  & JT-MP & 0.7465 & 0.00 & 0.59 & O & 0.8165 & 0.00 & 0.56 & O\\
\addlinespace[1.5pt]
3 & skin & BOCT & 0.0520 & 89.00 & 602.24 & T & 0.1120 & 77.03 & 600.73 & T\\
 &  & DL8.5 & 0.0394 & --- & 1.53 & O & --- & --- & --- & ---\\
 &  & GOSDT & 0.0394 & 0.00 & 5.96 & O & 0.0894 & 0.00 & 2.56 & O\\
 &  & MurTree & 0.0394 & --- & 1.54 & O & 0.0894 & --- & 1.65 & O\\
3 & skin & STreeD & 0.0394 & 0.00 & 1.56 & O & 0.0894 & 0.00 & 1.70 & O\\
 &  & Branches & 0.0394 & 0.00 & 2.07 & O & 0.0894 & 0.00 & 1.97 & O\\
 &  & JT-LP & 0.0394 & 0.00 & 0.06 & O & 0.0894 & 0.00 & 0.06 & O\\
 &  & JT-CG & 0.0394 & 0.00 & 0.04 & O & 0.0894 & 0.00 & 0.03 & O\\
 &  & JT-MP & 0.0394 & 0.00 & 0.09 & O & 0.0894 & 0.00 & 0.09 & O\\
\addlinespace[1.5pt]
3 & spambase & BOCT & 0.1115 & 99.22 & 600.05 & T & 0.1789 & 87.35 & 600.06 & T\\
 &  & DL8.5 & 0.0963 & --- & 23.91 & O & --- & --- & --- & ---\\
 &  & GOSDT & 0.0963 & 0.00 & 38.65 & O & 0.1521 & 0.00 & 11.94 & O\\
 &  & MurTree & 0.0963 & --- & 1.08 & O & 0.1521 & --- & 1.30 & O\\
 &  & STreeD & 0.0963 & 0.00 & 1.04 & O & 0.1521 & 0.00 & 1.07 & O\\
 &  & Branches & 0.0963 & 0.00 & 64.76 & O & 0.1521 & 0.00 & 50.34 & O\\
 &  & JT-LP & 0.0963 & 0.00 & 0.48 & O & 0.1521 & 0.00 & 0.49 & O\\
 &  & JT-CG & 0.0963 & 0.00 & 0.09 & O & 0.1521 & 0.00 & 0.07 & O\\
 &  & JT-MP & 0.0963 & 0.00 & 0.12 & O & 0.1521 & 0.00 & 0.09 & O\\
\addlinespace[1.5pt]
3 & transactions & BOCT & 0.0158 & 100.00 & 605.40 & T & 0.0858 & 100.00 & 609.06 & T\\
 &  & DL8.5 & 0.0158 & --- & 601.13 & T & --- & --- & --- & ---\\
 &  & GOSDT & --- & --- & 13.63 & M & 0.0158 & 0.00 & 3.48 & O\\
 &  & MurTree & 0.0158 & --- & 104.53 & O & 0.0158 & --- & 14.25 & O\\
 &  & STreeD & 0.0158 & 0.00 & 28.30 & O & 0.0158 & 0.00 & 8.93 & O\\
 &  & Branches & 0.0158 & 100.00 & 45.34 & M & 0.0158 & 0.00 & 0.98 & O\\
 &  & JT-LP & 0.0158 & 0.00 & 5.17 & O & 0.0158 & 0.00 & 5.20 & O\\
 &  & JT-CG & 0.0158 & 0.00 & 0.91 & O & 0.0158 & 0.00 & 0.55 & O\\
 &  & JT-MP & 0.0158 & 0.00 & 1.98 & O & 0.0158 & 0.00 & 1.75 & O\\
\addlinespace[1.5pt]
4 & avila & BOCT & 0.4179 & 100.00 & 600.19 & T & 0.5479 & 93.71 & 600.13 & T\\
 &  & DL8.5 & 0.3779 & --- & 600.03 & T & --- & --- & --- & ---\\
 &  & GOSDT & --- & --- & 15.18 & M & 0.4780 & 93.69 & 610.52 & T\\
 &  & MurTree & 0.3779 & --- & 50.19 & O & 0.4749 & --- & 64.46 & O\\
 &  & STreeD & 0.3779 & 0.00 & 65.93 & O & 0.4749 & 0.00 & 64.82 & O\\
 &  & Branches & 0.4176 & 100.00 & 111.79 & M & 0.4832 & 83.68 & 186.27 & M\\
 &  & JT-LP & 0.3779 & 0.00 & 1.51 & O & 0.4749 & 0.00 & 1.47 & O\\
 &  & JT-CG & 0.3779 & 0.00 & 1.57 & O & 0.4749 & 0.00 & 1.46 & O\\
 &  & JT-MP & 0.3779 & 0.00 & 1.75 & O & 0.4749 & 0.00 & 1.54 & O\\
\addlinespace[1.5pt]
4 & banknote & BOCT & 0.0219 & 100.00 & 600.03 & T & 0.1076 & 66.06 & 600.03 & T\\
 &  & DL8.5 & 0.0066 & --- & 1.12 & O & --- & --- & --- & ---\\
 &  & GOSDT & 0.0066 & 0.00 & 1.86 & O & 0.0784 & 0.00 & 1.24 & O\\
 &  & MurTree & 0.0066 & --- & 0.08 & O & 0.0784 & --- & 0.14 & O\\
 &  & STreeD & 0.0066 & 0.00 & 0.90 & O & 0.0784 & 0.00 & 0.90 & O\\
 &  & Branches & 0.0066 & 0.00 & 7.19 & O & 0.0784 & 0.00 & 2.51 & O\\
 &  & JT-LP & 0.0066 & 0.00 & 0.02 & O & 0.0784 & 0.00 & 0.03 & O\\
 &  & JT-CG & 0.0066 & 0.00 & 0.10 & O & 0.0784 & 0.00 & 0.09 & O\\
4 & banknote & JT-MP & 0.0066 & 0.00 & 0.05 & O & 0.0784 & 0.00 & 0.05 & O\\
\addlinespace[1.5pt]
4 & compas & BOCT & 0.2743 & 100.00 & 600.25 & T & 0.2980 & 94.75 & 600.09 & T\\
 &  & DL8.5 & 0.2575 & --- & 151.06 & O & --- & --- & --- & ---\\
 &  & GOSDT & 0.2575 & 0.00 & 217.06 & O & 0.2953 & 0.00 & 124.58 & O\\
 &  & MurTree & 0.2575 & --- & 14.32 & O & 0.2953 & --- & 16.37 & O\\
 &  & STreeD & 0.2575 & 0.00 & 3.92 & O & 0.2953 & 0.00 & 3.34 & O\\
 &  & Branches & 0.2650 & 100.00 & 92.48 & M & 0.2953 & 75.42 & 114.10 & M\\
4 & compas & JT-LP & 0.2575 & 0.00 & 0.40 & O & 0.2953 & 0.00 & 0.41 & O\\
 &  & JT-CG & 0.2575 & 0.00 & 0.68 & O & 0.2953 & 0.00 & 0.60 & O\\
 &  & JT-MP & 0.2575 & 0.00 & 0.60 & O & 0.2953 & 0.00 & 0.56 & O\\
\addlinespace[1.5pt]
4 & diabetic & BOCT & 0.4261 & 100.00 & 601.21 & NI & 0.5361 & 100.00 & 601.32 & NI\\
 &  & DL8.5 & 0.4331 & --- & 600.27 & T & --- & --- & --- & ---\\
 &  & GOSDT & --- & --- & 10.81 & M & --- & --- & 44.60 & M\\
 &  & MurTree & --- & --- & 604.19 & T & --- & --- & 603.73 & T\\
 &  & STreeD & 0.4212 & 0.00 & 504.89 & O & 0.4452 & 0.00 & 576.20 & O\\
 &  & Branches & 0.4262 & 100.00 & 193.60 & M & 0.4452 & 93.26 & 569.81 & M\\
 &  & JT-LP & 0.4212 & 0.00 & 212.63 & O & 0.4452 & 0.00 & 209.12 & O\\
 &  & JT-CG & 0.4212 & 0.00 & 162.33 & O & 0.4452 & 0.00 & 99.12 & O\\
 &  & JT-MP & 0.4212 & 0.00 & 185.62 & O & 0.4452 & 0.00 & 109.37 & O\\
\addlinespace[1.5pt]
4 & fico & BOCT & 0.2924 & 100.00 & 600.35 & T & 0.4424 & 95.48 & 600.13 & T\\
 &  & DL8.5 & 0.2784 & --- & 600.02 & T & --- & --- & --- & ---\\
 &  & GOSDT & --- & --- & 298.03 & M & --- & --- & 344.32 & M\\
 &  & MurTree & 0.2760 & --- & 187.16 & O & 0.3173 & --- & 183.42 & O\\
 &  & STreeD & 0.2760 & 0.00 & 64.70 & O & 0.3173 & 0.00 & 61.06 & O\\
 &  & Branches & 0.2874 & 100.00 & 106.45 & M & 0.3173 & 81.06 & 122.53 & M\\
 &  & JT-LP & 0.2760 & 0.00 & 3.10 & O & 0.3173 & 0.00 & 3.05 & O\\
 &  & JT-CG & 0.2760 & 0.00 & 2.75 & O & 0.3173 & 0.00 & 2.22 & O\\
 &  & JT-MP & 0.2760 & 0.00 & 2.79 & O & 0.3173 & 0.00 & 2.23 & O\\
\addlinespace[1.5pt]
4 & give & BOCT & 0.0642 & 100.00 & 603.10 & T & 0.0768 & 85.18 & 600.53 & T\\
 &  & DL8.5 & 0.0637 & --- & 600.18 & T & --- & --- & --- & ---\\
 &  & GOSDT & --- & --- & 16.98 & M & 0.0668 & 0.00 & 24.71 & O\\
 &  & MurTree & 0.0636 & --- & 89.51 & O & 0.0668 & --- & 41.72 & O\\
 &  & STreeD & 0.0636 & 0.00 & 38.53 & O & 0.0668 & 0.00 & 22.61 & O\\
 &  & Branches & 0.0668 & 100.00 & 54.16 & M & 0.0668 & 12.71 & 105.30 & M\\
 &  & JT-LP & 0.0636 & 0.00 & 0.76 & O & 0.0668 & 0.00 & 0.77 & O\\
 &  & JT-CG & 0.0636 & 0.00 & 1.40 & O & 0.0668 & 0.00 & 1.08 & O\\
 &  & JT-MP & 0.0636 & 0.00 & 1.44 & O & 0.0668 & 0.00 & 1.01 & O\\
\addlinespace[1.5pt]
4 & htru2 & BOCT & 0.0219 & 100.00 & 600.78 & T & 0.0422 & 52.65 & 600.09 & T\\
 &  & DL8.5 & 0.0210 & --- & 173.90 & O & --- & --- & --- & ---\\
 &  & GOSDT & 0.0210 & 0.00 & 219.51 & O & 0.0422 & 0.00 & 0.41 & O\\
 &  & MurTree & 0.0210 & --- & 13.73 & O & 0.0422 & --- & 1.29 & O\\
 &  & STreeD & 0.0210 & 0.00 & 5.34 & O & 0.0422 & 0.00 & 1.00 & O\\
 &  & Branches & 0.0215 & 100.00 & 43.69 & M & 0.0422 & 0.00 & 15.95 & O\\
 &  & JT-LP & 0.0210 & 0.00 & 0.42 & O & 0.0422 & 0.00 & 0.39 & O\\
 &  & JT-CG & 0.0210 & 0.00 & 0.69 & O & 0.0422 & 0.00 & 0.55 & O\\
 &  & JT-MP & 0.0210 & 0.00 & 0.60 & O & 0.0422 & 0.00 & 0.48 & O\\
\addlinespace[1.5pt]
4 & letter & BOCT & 0.6895 & 100.00 & 600.21 & T & 0.8396 & 96.25 & 600.17 & T\\
4 & letter & DL8.5 & 0.7035 & --- & 600.03 & T & --- & --- & --- & ---\\
 &  & GOSDT & --- & --- & 17.99 & M & 0.8877 & 96.61 & 643.88 & T\\
 &  & MurTree & 0.6044 & --- & 120.63 & O & 0.7529 & --- & 151.59 & O\\
 &  & STreeD & 0.6044 & 0.00 & 172.82 & O & 0.7529 & 0.00 & 204.25 & O\\
 &  & Branches & 0.7136 & 100.00 & 122.11 & M & 0.8192 & 91.02 & 199.00 & M\\
 &  & JT-LP & 0.6044 & 0.00 & 5.11 & O & 0.7529 & 0.00 & 5.38 & O\\
 &  & JT-CG & 0.6044 & 0.00 & 2.62 & O & 0.7529 & 0.00 & 2.68 & O\\
4 & letter & JT-MP & 0.6044 & 0.00 & 3.42 & O & 0.7529 & 0.00 & 3.21 & O\\
\addlinespace[1.5pt]
4 & skin & BOCT & 0.0264 & 100.00 & 600.96 & T & 0.1464 & 100.00 & 601.01 & NI\\
 &  & DL8.5 & 0.0173 & --- & 11.29 & O & --- & --- & --- & ---\\
 &  & GOSDT & 0.0173 & 0.00 & 45.63 & O & 0.0861 & 0.00 & 23.00 & O\\
 &  & MurTree & 0.0173 & --- & 6.45 & O & 0.0861 & --- & 8.02 & O\\
 &  & STreeD & 0.0173 & 0.00 & 4.67 & O & 0.0861 & 0.00 & 5.79 & O\\
 &  & Branches & 0.0173 & 0.00 & 20.51 & O & 0.0861 & 0.00 & 24.68 & O\\
 &  & JT-LP & 0.0173 & 0.00 & 0.32 & O & 0.0861 & 0.00 & 0.34 & O\\
 &  & JT-CG & 0.0173 & 0.00 & 0.80 & O & 0.0861 & 0.00 & 0.71 & O\\
 &  & JT-MP & 0.0173 & 0.00 & 0.70 & O & 0.0861 & 0.00 & 0.68 & O\\
\addlinespace[1.5pt]
4 & spambase & BOCT & 0.0930 & 99.30 & 600.06 & T & 0.2217 & 89.78 & 600.48 & T\\
 &  & DL8.5 & 0.0859 & --- & 600.01 & T & --- & --- & --- & ---\\
 &  & GOSDT & 0.1300 & 98.16 & 606.09 & T & 0.1791 & 87.38 & 606.93 & T\\
 &  & MurTree & 0.0804 & --- & 70.92 & O & 0.1489 & --- & 100.24 & O\\
 &  & STreeD & 0.0804 & 0.00 & 10.48 & O & 0.1489 & 0.00 & 10.26 & O\\
 &  & Branches & 0.1758 & 100.00 & 93.12 & M & 0.1528 & 64.58 & 175.76 & M\\
 &  & JT-LP & 0.0804 & 0.00 & 1.87 & O & 0.1489 & 0.00 & 1.83 & O\\
 &  & JT-CG & 0.0804 & 0.00 & 1.72 & O & 0.1489 & 0.00 & 1.26 & O\\
 &  & JT-MP & 0.0804 & 0.00 & 1.96 & O & 0.1489 & 0.00 & 1.29 & O\\
\addlinespace[1.5pt]
4 & transactions & BOCT & 0.0158 & 100.00 & 609.53 & T & 0.1658 & 100.00 & 610.26 & T\\
 &  & DL8.5 & 0.0158 & --- & 601.87 & T & --- & --- & --- & ---\\
 &  & GOSDT & --- & --- & 12.62 & M & 0.0158 & 0.00 & 3.59 & O\\
 &  & MurTree & --- & --- & 613.27 & T & 0.0158 & --- & 15.03 & O\\
 &  & STreeD & 0.0158 & 100.00 & 600.00 & T & 0.0158 & 0.00 & 8.94 & O\\
 &  & Branches & 0.0158 & 100.00 & 36.42 & M & 0.0158 & 0.00 & 0.94 & O\\
 &  & JT-LP & 0.0158 & 0.00 & 77.80 & O & 0.0158 & 0.00 & 76.71 & O\\
 &  & JT-CG & 0.0158 & 0.00 & 51.81 & O & 0.0158 & 0.00 & 0.20 & O\\
 &  & JT-MP & 0.0158 & 0.00 & 59.79 & O & 0.0158 & 0.00 & 0.41 & O\\
\addlinespace[1.5pt]
5 & avila & BOCT & 0.3850 & 100.00 & 600.95 & T & 0.6150 & 94.42 & 600.13 & T\\
 &  & DL8.5 & 0.3960 & --- & 600.03 & T & --- & --- & --- & ---\\
 &  & GOSDT & --- & --- & 14.63 & M & --- & --- & 116.24 & M\\
 &  & MurTree & --- & --- & 599.71 & T & --- & --- & 599.95 & T\\
 &  & STreeD & 0.3279 & 100.00 & 599.06 & T & 0.4749 & 100.00 & 599.86 & T\\
 &  & Branches & 0.4723 & 100.00 & 97.23 & M & 0.4832 & 83.81 & 200.96 & M\\
 &  & JT-LP & 0.3279 & 0.00 & 77.76 & O & 0.4749 & 0.00 & 77.72 & O\\
 &  & JT-CG & 0.3279 & 0.00 & 54.12 & O & 0.4749 & 0.00 & 39.61 & O\\
 &  & JT-MP & 0.3279 & 0.00 & 79.44 & O & 0.4749 & 0.00 & 51.09 & O\\
\addlinespace[1.5pt]
5 & banknote & BOCT & 0.0102 & 100.00 & 600.04 & T & 0.1294 & 75.05 & 600.03 & T\\
 &  & DL8.5 & 0.0000 & --- & 2.21 & O & --- & --- & --- & ---\\
 &  & GOSDT & 0.0000 & 0.00 & 20.42 & O & 0.0784 & 0.00 & 10.10 & O\\
5 & banknote & MurTree & 0.0000 & --- & 0.14 & O & 0.0784 & --- & 0.80 & O\\
 &  & STreeD & 0.0000 & 0.00 & 0.90 & O & 0.0784 & 0.00 & 1.18 & O\\
 &  & Branches & 0.0000 & 0.00 & 25.22 & O & 0.0784 & 0.00 & 14.40 & O\\
 &  & JT-LP & 0.0000 & 0.00 & 0.20 & O & 0.0784 & 0.00 & 0.23 & O\\
 &  & JT-CG & 0.0000 & 0.00 & 0.29 & O & 0.0784 & 0.00 & 0.65 & O\\
 &  & JT-MP & 0.0000 & 0.00 & 0.04 & O & 0.0784 & 0.00 & 0.41 & O\\
\addlinespace[1.5pt]
5 & compas & BOCT & 0.2642 & 100.00 & 600.28 & T & 0.2980 & 95.19 & 600.14 & T\\
5 & compas & DL8.5 & 0.2506 & --- & 600.02 & T & --- & --- & --- & ---\\
 &  & GOSDT & --- & --- & 152.61 & M & --- & --- & 207.23 & M\\
 &  & MurTree & 0.2498 & --- & 325.91 & O & 0.2953 & --- & 414.00 & O\\
 &  & STreeD & 0.2498 & 0.00 & 67.81 & O & 0.2953 & 0.00 & 54.25 & O\\
 &  & Branches & 0.2632 & 100.00 & 92.68 & M & 0.2953 & 76.13 & 127.06 & M\\
 &  & JT-LP & 0.2498 & 0.00 & 6.41 & O & 0.2953 & 0.00 & 6.55 & O\\
 &  & JT-CG & 0.2498 & 0.00 & 9.23 & O & 0.2953 & 0.00 & 6.50 & O\\
 &  & JT-MP & 0.2498 & 0.00 & 8.49 & O & 0.2953 & 0.00 & 4.99 & O\\
\addlinespace[1.5pt]
5 & diabetic & BOCT & 0.4254 & 100.00 & 601.75 & T & 0.6354 & 100.00 & 602.09 & T\\
 &  & DL8.5 & 0.4323 & --- & 600.20 & T & --- & --- & --- & ---\\
 &  & GOSDT & --- & --- & 11.05 & M & --- & --- & 43.33 & M\\
 &  & MurTree & --- & --- & 604.02 & T & --- & --- & 603.66 & T\\
 &  & STreeD & 0.4609 & 100.00 & 600.00 & T & 0.4609 & 100.00 & 600.00 & T\\
 &  & Branches & 0.4262 & 100.00 & 95.64 & M & 0.4452 & 93.26 & 572.46 & M\\
 &  & JT-LP & 0.4252 & 100.00 & 600.28 & T & 0.4452 & 100.00 & 600.44 & T\\
 &  & JT-CG & 0.4184 & 99.99 & 610.03 & T & 0.4452 & 91.00 & 606.44 & T\\
 &  & JT-MP & 0.4168 & 99.99 & 599.99 & T & 0.4452 & 91.01 & 599.99 & T\\
\addlinespace[1.5pt]
5 & fico & BOCT & 0.2810 & 100.00 & 600.80 & T & 0.3568 & 95.80 & 600.49 & T\\
 &  & DL8.5 & 0.2784 & --- & 600.02 & T & --- & --- & --- & ---\\
 &  & GOSDT & --- & --- & 68.69 & M & --- & --- & 70.48 & M\\
 &  & MurTree & --- & --- & 599.48 & T & --- & --- & 599.51 & T\\
 &  & STreeD & 0.2658 & 100.00 & 599.74 & T & 0.3173 & 100.00 & 599.96 & T\\
 &  & Branches & 0.3016 & 100.00 & 100.93 & M & 0.3173 & 81.06 & 128.88 & M\\
 &  & JT-LP & 0.2649 & 0.00 & 348.02 & O & 0.3173 & 0.00 & 349.30 & O\\
 &  & JT-CG & 0.2649 & 0.00 & 256.61 & O & 0.3173 & 0.00 & 156.38 & O\\
 &  & JT-MP & 0.2649 & 0.00 & 300.96 & O & 0.3173 & 0.00 & 173.08 & O\\
\addlinespace[1.5pt]
5 & give & BOCT & 0.0637 & 100.00 & 603.26 & NI & 0.3737 & 100.00 & 600.64 & NI\\
 &  & DL8.5 & 0.0635 & --- & 600.25 & T & --- & --- & --- & ---\\
 &  & GOSDT & --- & --- & 16.70 & M & 0.0668 & 0.00 & 23.56 & O\\
 &  & MurTree & --- & --- & 600.66 & T & 0.0668 & --- & 443.75 & O\\
 &  & STreeD & 0.0668 & 100.00 & 600.00 & T & 0.0668 & 0.00 & 174.18 & O\\
 &  & Branches & 0.0668 & 100.00 & 46.94 & M & 0.0668 & 13.83 & 117.52 & M\\
 &  & JT-LP & 0.0629 & 0.00 & 19.96 & O & 0.0668 & 0.00 & 19.43 & O\\
 &  & JT-CG & 0.0629 & 0.00 & 29.30 & O & 0.0668 & 0.00 & 11.06 & O\\
 &  & JT-MP & 0.0629 & 0.00 & 36.25 & O & 0.0668 & 0.00 & 11.97 & O\\
\addlinespace[1.5pt]
5 & htru2 & BOCT & 0.0213 & 100.00 & 601.32 & T & 0.0422 & 52.38 & 600.13 & T\\
 &  & DL8.5 & 0.0200 & --- & 600.03 & T & --- & --- & --- & ---\\
 &  & GOSDT & --- & --- & 128.24 & M & 0.0422 & 0.00 & 0.40 & O\\
 &  & MurTree & 0.0197 & --- & 240.39 & O & 0.0422 & --- & 1.27 & O\\
 &  & STreeD & 0.0197 & 0.00 & 78.37 & O & 0.0422 & 0.00 & 1.01 & O\\
5 & htru2 & Branches & 0.0217 & 100.00 & 91.39 & M & 0.0422 & 0.00 & 19.10 & O\\
 &  & JT-LP & 0.0197 & 0.00 & 5.58 & O & 0.0422 & 0.00 & 5.53 & O\\
 &  & JT-CG & 0.0197 & 0.00 & 8.17 & O & 0.0422 & 0.00 & 1.11 & O\\
 &  & JT-MP & 0.0197 & 0.00 & 6.74 & O & 0.0422 & 0.00 & 0.94 & O\\
\addlinespace[1.5pt]
5 & letter & BOCT & 0.5477 & 100.00 & 601.67 & T & 0.8578 & 96.36 & 600.16 & T\\
 &  & DL8.5 & 0.7575 & --- & 600.04 & T & --- & --- & --- & ---\\
 &  & GOSDT & --- & --- & 16.42 & M & --- & --- & 134.85 & M\\
5 & letter & MurTree & --- & --- & 599.52 & T & --- & --- & 599.66 & T\\
 &  & STreeD & 0.4854 & 100.00 & 599.19 & T & 0.7417 & 100.00 & 599.45 & T\\
 &  & Branches & 0.8219 & 100.00 & 108.09 & M & 0.8192 & 91.19 & 201.24 & M\\
 &  & JT-LP & 0.4425 & 0.00 & 358.86 & O & 0.7111 & 0.00 & 357.08 & O\\
 &  & JT-CG & 0.4425 & 0.00 & 76.28 & O & 0.7111 & 0.00 & 69.44 & O\\
 &  & JT-MP & 0.4425 & 0.00 & 118.59 & O & 0.7111 & 0.00 & 103.36 & O\\
\addlinespace[1.5pt]
5 & skin & BOCT & 0.0239 & 100.00 & 600.96 & NI & 0.2239 & 100.00 & 600.54 & NI\\
 &  & DL8.5 & 0.0115 & --- & 132.51 & O & --- & --- & --- & ---\\
 &  & GOSDT & 0.0115 & 0.00 & 289.17 & O & 0.0861 & 0.00 & 99.12 & O\\
 &  & MurTree & 0.0115 & --- & 32.89 & O & 0.0861 & --- & 34.70 & O\\
 &  & STreeD & 0.0115 & 0.00 & 27.90 & O & 0.0861 & 0.00 & 29.65 & O\\
 &  & Branches & 0.0225 & 100.00 & 39.60 & M & 0.0861 & 18.71 & 55.16 & M\\
 &  & JT-LP & 0.0115 & 0.00 & 0.78 & O & 0.0861 & 0.00 & 0.75 & O\\
 &  & JT-CG & 0.0115 & 0.00 & 2.03 & O & 0.0861 & 0.00 & 1.71 & O\\
 &  & JT-MP & 0.0115 & 0.00 & 2.02 & O & 0.0861 & 0.00 & 1.56 & O\\
\addlinespace[1.5pt]
5 & spambase & BOCT & 0.0826 & 99.47 & 600.31 & T & 0.2217 & 90.15 & 600.28 & T\\
 &  & DL8.5 & 0.2630 & --- & 600.01 & T & --- & --- & --- & ---\\
 &  & GOSDT & --- & --- & 76.62 & M & --- & --- & 89.44 & M\\
 &  & MurTree & --- & --- & 599.42 & T & --- & --- & 599.44 & T\\
 &  & STreeD & 0.0635 & 0.00 & 418.99 & O & 0.1469 & 0.00 & 448.60 & O\\
 &  & Branches & 0.1756 & 100.00 & 148.54 & M & 0.1528 & 65.00 & 175.87 & M\\
 &  & JT-LP & 0.0635 & 0.00 & 181.97 & O & 0.1469 & 0.00 & 182.60 & O\\
 &  & JT-CG & 0.0635 & 0.00 & 123.05 & O & 0.1469 & 0.00 & 75.93 & O\\
 &  & JT-MP & 0.0635 & 0.00 & 171.02 & O & 0.1469 & 0.00 & 71.81 & O\\
\addlinespace[1.5pt]
5 & transactions & BOCT & 0.0158 & 100.00 & 611.48 & T & 0.3158 & 100.00 & 611.66 & T\\
 &  & DL8.5 & 0.0158 & --- & 601.96 & T & --- & --- & --- & ---\\
 &  & GOSDT & --- & --- & 12.59 & M & 0.0158 & 0.00 & 3.40 & O\\
 &  & MurTree & --- & --- & 613.20 & T & 0.0158 & --- & 14.34 & O\\
 &  & STreeD & 0.0158 & 100.00 & 600.00 & T & 0.0158 & 0.00 & 9.00 & O\\
 &  & Branches & 0.0158 & 100.00 & 23.81 & M & 0.0158 & 0.00 & 0.94 & O\\
 &  & JT-LP & 0.0158 & 100.00 & 621.06 & T & 0.0158 & 100.00 & 611.95 & T\\
 &  & JT-CG & 0.0158 & 99.92 & 601.47 & T & 0.0158 & 0.00 & 0.20 & O\\
 &  & JT-MP & 0.0157 & 99.92 & 600.53 & T & 0.0158 & 0.00 & 0.39 & O\\

\end{longtable}
\endgroup
% ===== END inlined file: tables/benchmark_detail =====

\clearpage
\subsection{Unresolved deep settings}\label{sec:incomplete}
\begin{table}[!htbp]
\centering\small
\caption{Settings not certified by both JT-CG and STreeD within 600 seconds. Bounds are JT-CG bounds unless marked otherwise.}\label{tab:ecincomplete}
\begin{tabular}{lrrrll}
\toprule
Dataset & $D$ & $\lambda$ & JT-CG status & JT-CG bound/optimum & STreeD status\\
\midrule
avila & 5 & 0 & O & 0.327886 & T\\
avila & 5 & .01 & O & 0.474929 & T\\
diabetic & 5 & 0 & T & $[0.000029,0.418372]$ & T\\
diabetic & 5 & .01 & T & $[0.040069,0.445204]$ & T\\
fico & 5 & 0 & O & 0.264939 & T\\
fico & 5 & .01 & O & 0.317256 & T\\
give & 5 & 0 & O & 0.062947 & T\\
letter & 5 & 0 & O & 0.442500 & T\\
letter & 5 & .01 & O & 0.711100 & T\\
transactions & 4 & 0 & O & 0.015771 & T\\
transactions & 5 & 0 & T & $[0.000013,0.015756]$ & T\\
\bottomrule
\end{tabular}
\end{table}

{
Figure~\ref{fig:difficult-trajectories} traces the three uncertified JT-CG
settings. The final feasible objectives are first obtained within 3.5 seconds.
Subsequent evaluation neither improves these incumbents nor closes the gap;
the final lower bounds remain well below the corresponding upper bounds in
Table~\ref{tab:ecincomplete}.
\par}

\begin{figure}[!htbp]
\centering
\includegraphics[width=\linewidth]{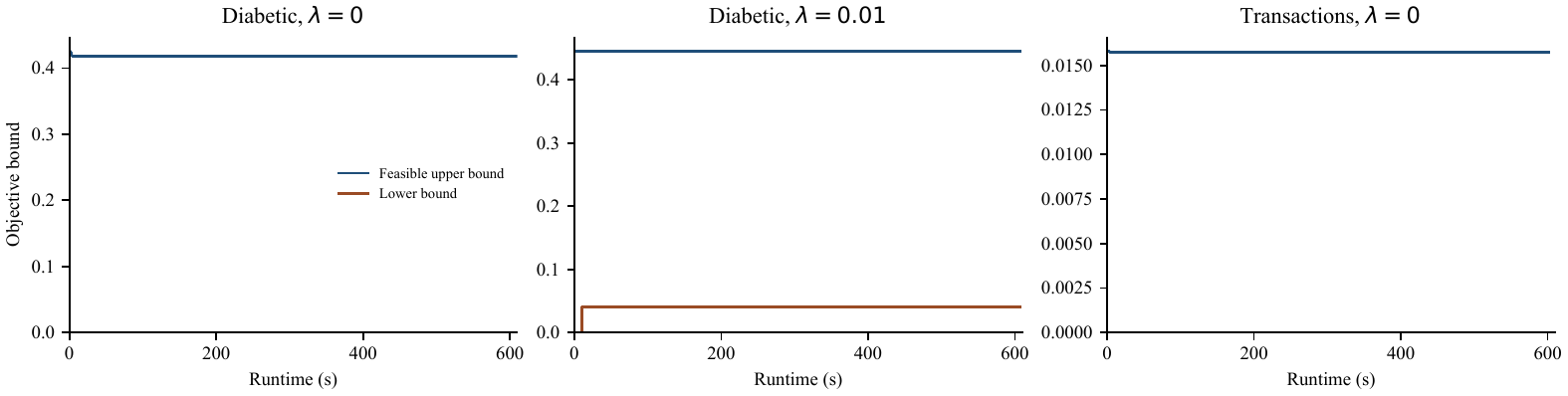}
\caption{{Observed lower bounds and feasible upper bounds for
the three uncertified JT-CG settings. Curves show recorded updates only.}}
\label{fig:difficult-trajectories}
\end{figure}

\end{document}